\crefname{subsection}{subsection}{subsections}
\theoremstyle{plain}
\newtheorem{theorem}{Theorem}
\theoremstyle{definition}
\newtheorem{definition}[theorem]{Definition}
\newcommand{\Ber}{\mathrm{Ber}}
\newcommand{\Bin}{\mathrm{Bin}}
\newcommand{\eps}{\varepsilon}
\newcommand{\bI}{\mathbbm{1}}
\newcommand{\bE}{\mathbb{E}}
\newcommand{\ha}{{\hat{a}}}
\newcommand{\hF}{{\hat{F}}}
\NewDocumentEnvironment{myproof}{o}
{\IfNoValueTF{#1}{\paragraph{{Proof.} }} {\paragraph{{#1.} }} }
{\hfill$\qedsymbol$}
\title{Survey of Data-driven Newsvendor: Unified Analysis and Spectrum of Achievable Regrets}
\author{
Zhuoxin Chen\thanks{Weiyang College, Tsinghua University.  \href{mailto:zhuoxin-22@mails.tsinghua.edu.cn}{zhuoxin-22@mails.tsinghua.edu.cn}.}
\\ Tsinghua University
\and
Will Ma\thanks{Graduate School of Business and Data Science Institute, Columbia University.  \href{mailto:wm2428@gsb.columbia.edu}{wm2428@gsb.columbia.edu}.}
\\ Columbia University
}
\date{}
\begin{document}
\maketitle

\begin{abstract}
In the Newsvendor problem, the goal is to guess the number that will be drawn from some distribution, with asymmetric consequences for guessing too high vs.\ too low.
In the data-driven version, the distribution is unknown, and one must work with samples from the distribution.
Data-driven Newsvendor has been studied under many variants: additive vs.\ multiplicative regret, high-probability vs.\ expectation bounds, and different distribution classes.
This paper studies all combinations of these variants, filling many gaps in the literature and simplifying many proofs.
In particular, we provide a unified analysis based on a notion of clustered distributions, which in conjunction with our new lower bounds,  shows that the entire spectrum of regrets between $1/\sqrt{n}$ and $1/n$ is possible.
Simulations on commonly-used distributions demonstrate that our notion is the "correct" predictor of empirical regret across varying data sizes.

\textbf{Keywords}: data-driven decision-making, Newsvendor, distribution classes, learning theory
\end{abstract}

\section{Introduction}

\linespread{1.5}
In decision-making under uncertainty, one chooses an action $a$ in the face of an uncertain outcome $Z$, and the loss incurred $\ell(a,Z)$ follows a given function $\ell$.
In stochastic optimization, the outcome $Z$ is drawn from a known distribution $F$, and the goal is to minimize the expected loss $\bE_{Z\sim F}[\ell(a,Z)]$.
We let $L(a)$ denote the expected loss of an action $a$, and $a^*$ denote an optimal action for which $L(a^*)=\inf_a L(a)$.
In data-driven optimization, the distribution $F$ is unknown, and one must instead work with independent and identically distributed (IID) samples drawn from $F$.
A data-driven algorithm prescribes an action $\ha$ based on these samples, and one is interested in how its expected loss $L(\ha)$ compares to the optimal expected loss $L(a^*)$ from stochastic optimization.

This comparison can be made in a multitude of ways, differing along various dimensions.
First, one can measure either the difference $L(\ha)-L(a^*)$ which is called the \textit{additive regret}, or the scaled difference $(L(\ha)-L(a^*))/L(a^*)$ which is called the \textit{multiplicative regret}.
Second, note that both of these regrets are random variables, because $L(\ha)$ depends on the IID samples drawn;
therefore, one can analyze either the probability that the regret is below some threshold, or analyze the expected regret.
Finally, different restrictions can be placed on the unknown distribution $F$.

In this paper we consider the multitude of ways in which $L(\ha)$ has been compared to $L(a^*)$ in the data-driven Newsvendor problem, starting with the work of \citet{levi2007provably}.
In the Newsvendor problem, action $a$ represents an amount of inventory to stock, and outcome $Z$ represents an uncertain demand to occur.
The loss function is given by
$$\ell(a,Z)=c_u\max\{Z-a,0\}+c_o\max\{a-Z,0\},$$
where $c_u,c_o>0$ represent the unit costs of understocking, overstocking respectively.
The goal in Newsvendor is to stock inventory close to demand, but err on the side of understocking or overstocking depending on how the costs $c_u,c_o$ compare.
The optimal action when $F$ is known involves defining $q=\frac{c_u}{c_u+c_o}$, and then setting $a^*$ to be a $q$'th percentile realization from $F$, with $q$ being called the \textit{critical quantile}.

\subsection{Existing and New Results}

We first define a restriction to be placed on the unknown distribution $F$, that is
similar to the notion of clustered distributions from \citet{besbes2022multi}, but used for a completely different problem.

\begin{definition}
Fix a Newsvendor loss function with critical quantile $q\in(0,1)$.
For constants $\beta\in[0,\infty]$ and $\gamma,\zeta>0$, a distribution with CDF $F$ is said to be \textit{$(\beta,\gamma,\zeta)$-clustered} if
\begin{align} \label{eqn:clustered}
|a-a^*| &\le \frac1\gamma |F(a)-q|^{\frac1{\beta+1}} &\forall a\in[a^*-\zeta,a^*+\zeta].
\end{align}
\end{definition}


For Newsvendor, the notion of $(\beta,\gamma,\zeta)$-clustered distributions captures how far an action $a$ can deviate from the optimal action $a^*$, based on how far away $F(a)$ is from the critical quantile $q$.  Data-driven Newsvendor algorithms typically provide a guarantee on $|F(\ha)-q|$ for their action $\ha$, and hence~\eqref{eqn:clustered} would imply a guarantee on $|\ha-a^*|$, affecting the regret.

Constraint~\eqref{eqn:clustered} is most restrictive and leads to the smallest regrets when $\beta=0$.  Technically, $\beta=0$ can be satisfied for any distribution with a density at $a^*$, that is lower-bounded by $\gamma$ for a sufficiently small $\zeta$, a notion studied in past works \citep{besbes2013implications,lin2022data}.  However, our notion is based on the CDF instead of the minimum density, and we will show that it is the "correct" notion for making data-size-dependent comparisons that explain how empirical regrets compare across commonly-used distributions (see our simulations in \Cref{sec:simulation}). 

Meanwhile, a distribution whose CDF has a discrete jump at $a^*$ can only be captured by $\beta=\infty$, leading to the slowest convergence rates on regret.  For every intermediate value in $(0,\infty)$, we also construct a distribution that can only be captured by a $\beta$ at least that value, in \Cref{apx:full_spectrum_egs}.

In general, a distribution may admit multiple valid combinations of $(\beta,\gamma,\zeta)$, because the value of $\beta$ depends on $\gamma$ and $\zeta$. In \Cref{apx:full_spectrum_egs}, we also illustrate how to compute the minimum possible value of $\beta$ given fixed $\gamma$ and $\zeta$, for several commonly-used distributions.

\begin{table}[]
\centering
\begin{tabular}{|c|c|c|}
\hline
& Additive Regret\tablefootnote{When $\beta=\infty$, upper bounds for additive regret necessarily require the additional assumption that the distribution has bounded mean. Our expectation upper bounds for additive regret assume bounded mean for all $\beta\in[0,\infty]$, noting that \citet{lin2022data} also assume bounded mean for both $\beta=0$ and $\beta=\infty$.} & Multiplicative Regret\tablefootnote{Our results for multiplicative regret require the additional assumption that $F(a^*-\zeta),F(a^*+\zeta)$ are bounded away from $0,1$ respectively, which is necessary to exploit the restriction of $(\beta,\gamma,\zeta)$-clustered distributions.  Previous papers did not require this assumption because they only considered $\beta=\infty$ (equivalent to having no restriction).} \\
\hline
High-probability & $O\left((\frac{\log(1/\delta)}{n})^{\frac{\beta+2}{2\beta+2}}\right)$ (\textbf{\Cref{thm:hpAdd}}) & $O\left((\frac{\log(1/\delta)}{n})^{\frac{\beta+2}{2\beta+2}}\right)$ (\textbf{\Cref{thm:hpMult}}) \\
Upper Bound & & $\beta=\infty$ known \citep{levi2007provably} \\
\hline
Expectation & $O\left(n^{-\frac{\beta+2}{2\beta+2}}\right)$ (\textbf{\Cref{thm:expAdd}}) & $O\left(n^{-\frac{\beta+2}{2\beta+2}}\right)$ (\textbf{\Cref{thm:expMult}}) \\
Upper Bound& $\beta=0,\infty$ known \citep{lin2022data} & $\beta=\infty$ known \citep{besbes2023big}
\tablefootnote{\citet[Lem.~E-5]{besbes2023big} attribute this result to \citet{levi2015data}, but to the best of our understanding, \citet[Thm.~2]{levi2015data} is insufficient because its proof only holds for $\epsilon\le 1$.  Therefore, we attribute this result to \citet[Thm.~5]{besbes2023big} instead.  We note that \citet[Lem.~E-5]{besbes2023big} works if an upper bound on the mean is known and one uses \textit{projected SAA} instead---see \Cref{sec:projSAA}.
} \\
\hline
\multirow{2}{*}{Lower Bound} & \multicolumn{2}{c|}{$\Omega\left(n^{-\frac{\beta+2}{2\beta+2}}\right)$ (\textbf{\Cref{thm:lowAdd}})} \\
& \multicolumn{2}{c|}{$\beta=0,\infty$ known \citep{zhang2020closing,lyu2024closing}} \\
\hline
\end{tabular}
\caption{High-probability (with probability at least $1-\delta$) and Expectation Upper Bounds on the Additive and Multiplicative Regrets of SAA, when there are $n$ samples and $F$ is restricted to be $(\beta,\gamma,\zeta)$-clustered.
Some results for $\beta=\infty$ (no restriction) and $\beta=0$ (under the stronger assumption that $F$ has density at least $\gamma$ over the interval $[a^*-\zeta,a^*+\zeta]$) were previously known.  The Lower Bound holds with a constant probability.}
\label{tab:outline}
\end{table}

Having defined $(\beta,\gamma,\zeta)$-clustered distributions, our main results are summarized in \Cref{tab:outline}.
To elaborate, we consider the standard Sample Average Approximation (SAA) algorithm for Newsvendor, which sets $\ha$ equal to the $q$'th percentile of the empirical distribution formed by $n$ IID samples.
We provide upper bounds on its additive and multiplicative regrets, that hold with high probability (i.e., with probability at least $1-\delta$ for some small $\delta$) and in expectation.
The $O(\cdot)$ notation highlights the dependence on $n$ and $\delta$, noting that the parameter $\beta$ affects the rate of convergence as $n\to\infty$, whereas the other parameters $q,\gamma,\zeta$ may only affect the constants in front which are second order and hidden.
We recover convergence rates of $n^{-1/2}$ when $\beta=\infty$ and $n^{-1}$ when $\beta=0$, which were previously known\footnote{These results are sometimes stated in terms of \textit{cumulative} regret in their respective papers, in which case the $n^{-1/2}$ rate translates to $\sum_{n=1}^N n^{-1/2}=\Theta(\sqrt{N})$ cumulative regret while the $n^{-1}$ rate translates to $\sum_{n=1}^N n^{-1}=\Theta(\log N)$ cumulative regret.} in some cases as outlined in \Cref{tab:outline}.
Our results establish these convergence rates in all cases, unifying the literature, and moreover showing that the entire spectrum of rates from $1/\sqrt{n}$ (slowest) to $1/n$ (fastest) is possible as $\beta$ ranges from $\infty$ to 0.

Our general upper bound of $n^{-\frac{\beta+2}{2\beta+2}}$ was achieved by the SAA algorithm, which did not need to know any of the parameters $\beta,\gamma,\zeta$ for the clustered distributions.
Meanwhile, our lower bound states that even knowing these parameters, any data-driven algorithm that draws $n$ samples will incur $\Omega\left(n^{-\frac{\beta+2}{2\beta+2}}\right)$ additive regret with a constant probability.  This is then translated into similar lower bounds for multiplicative regret and in expectation.

\paragraph{Technical highlights.}
Our high-probability upper bounds are proven using the fact that $F(\ha)$ is usually close to $q$, which follows the proof framework of \citet{levi2007provably}.  We extend their analysis to additive regret, and also show how to exploit assumptions about lower-bounded density (i.e.,~$\beta=0$) under this proof framework.
Moreover, we introduce the notion of clustered distributions for data-driven Newsvendor, which connects the two extremes cases of no assumption ($\beta=\infty$) and lower-bounded density ($\beta=0$).

Our expectation upper bounds are proven by analyzing an integral (see~\eqref{eqn:expDiff}) which follows \citet{lin2022data}, who bounded the expected additive regret for $\beta=0,\infty$.
We unify their results by considering all $\beta\in[0,\infty]$, and our $\beta=0$ result additionally allows for discrete distributions that are $(0,\gamma,\zeta)$-clustered, instead of imposing that the distribution has a density.
Our proof also uses Chebyshev's inequality to provide tail bounds for extreme quantiles, which simplifies the proof from \citet{lin2022data}. Importantly, this allows for a linear dependence on the mean of the distribution, instead of the quadratic dependence from \citet[]{lin2022data}.
Finally, we recycle their integral to analyze expected multiplicative regret, which when $\beta=\infty$ leads to a simplified proof of \citet[Thm.~2]{besbes2023big} on the exact worst-case expected multiplicative regret of SAA.

Our lower bound is based on a single construction that establishes the tight rate of $\Theta(n^{-\frac{\beta+2}{2\beta+2}})$ for the entire spectrum of $\beta\in[0,\infty]$.
We construct distributions with low Hellinger distance between them \citep[see e.g.][]{guo2021generalizing,jin2024sample}, which leads to simpler distributions and arguably simpler analysis compared to other lower bounds in the data-driven Newsvendor literature (e.g. \citet[Prop.~1]{zhang2020closing}, \citet[Thm.~1]{lin2022data}, \citet[Thm.~2]{lyu2024closing}).  In the special case where $\beta=0$, we establish a lower bound of $\Omega(1/n)$ using a completely different approach than the Bayesian inference and van Trees inequality approach used in \citet{besbes2013implications,lyu2024closing}.  We come up with two candidate distributions, instead of a Bayesian prior over a continuum of candidate distributions; our lower bound holds with constant probability, instead of only in expectation; however, our two distributions change with $n$, whereas they design one prior distribution that works for all $n$.  We provide a self-contained construction for the $\beta=0$ case in \Cref{apx:continuous}.

\paragraph{High-level takeaway.} Our paper answers the question, "For which distributions are Newsvendor decisions hard to learn?"  Importantly, the answer depends on the data size $n$, where we empirically demonstrate in \Cref{sec:simulation} that a distribution $F_1$ may be harder than another distribution $F_2$ at small data sizes, but easier at large data sizes.  Our notion of $(\beta,\gamma,\zeta)$-clustered distributions based on the CDF captures this phenomenon, unlike previous notions based on the PDF.

We should note that both our theory and empirics assume the usage of SAA, which is the prevailing algorithm for data-driven Newsvendor.  In practice, if one suspects a distribution that is hard to learn for SAA at the given data size $n$, then two options are to use a robustified algorithm \citep[e.g.][]{perakis2008regret,gupta2022data,besbes2023big,besbes2025beyond} or to collect more data \citep[e.g.][]{zhang2024more}.

\subsection{Further Related Work}

\paragraph{Learning theory.}
Sample complexity has roots in statistical learning theory, which typically studies classification and regression problems under restricted hypothesis classes \citep{shalev-shwartz2014understanding,mohri2018foundations}.
Its concepts can also be extended to general decision problems \citep{balcan2020data,balcan2021much}, or even specific inventory policy classes \citep{xie2024vc}.
However, data-driven Newsvendor results differ for various reasons: considering multiplicative regret instead of only additive regret, having a specialized but unbounded loss function (there are no assumptions on demand being bounded), and typically requiring analyses that are tighter than uniform convergence.
In data-driven Newsvendor, it is also difficult to directly convert high-probability bounds into expectation bounds unless one knows an upper bound on the mean, because the regret can be unbounded, while high-probability bounds only hold for small values of $\eps$ or equivalently large values of $n$ (see \Cref{sec:projSAA}).
Our results further differ by considering specific restrictions on the distribution $F$.

\paragraph{Generalizations of data-driven Newsvendor.}
Big-data Newsvendor is a generalization of data-driven Newsvendor where past demand samples are accompanied by contextual information, and the decision can be made knowing the future context.
This model was popularized by \citet{ban2019big}, and motivated by the notion of contexts from machine learning.
Meanwhile, data-driven inventory is a generalization of data-driven Newsvendor where one is re-stocking a durable good over multiple periods, that was also considered in the original paper by \citet{levi2007provably}.
Further variants include censored demands when sales are lost \citep[e.g.][]{huh2009nonparametric,besbes2013implications,zhang2020closing,hssaine2024data}, capacitated order sizes \citep[e.g.][]{cheung2019sampling}, and pricing \citep[e.g.][]{chen2021nonparametric,chen2022dynamic,chen2024optimal}.
Our paper focuses on a single period without contexts, and does not aim to cover these generalizations.

\paragraph{Notions related to clustered distributions.}
Some conditions in the literature share a similar form with the notion of clustered distributions, for example the Tsybakov noise condition in supervised classification \citep{mammen1999smooth,tsybakov2004optimal}, and the margin condition in contextual bandits \citep{rigollet2010nonparametric}.
While algebraically similar in form, these conditions benefit data-driven algorithms in a different way: they typically improve the separability between two competing options, such as labels, sampling distributions, or reward functions. In contrast, our notion of clustered distributions focuses on the local property of a single distribution and helps the SAA algorithm by limiting the deviation of $\ha$ from $a^*$, given the deviation of $F(\ha)$ from $F(a^*)$, thus preventing large regret. 

Meanwhile, other works impose alternative assumptions on the underlying distribution to achieve similar faster rates for SAA on data-driven Newsvendor.
An example is the Increasing Failure Rate (IFR) property, which requires $1-F$ to be log-concave. 
Under this assumption, and the assumption that $F$ is a continuous distribution, \citet[Cor.~3]{zhang_sampling-based_2025} establishes a sample complexity of $O\left((1+\eps^{-1/2}+\eps^{-1})\log(1/\delta)\right)$, where $\eps$ is the multiplicative regret.
When $\eps$ is close to 0, this result implies that the high-probability multiplicative regret is $O\left(\frac{\log(1/\delta)}{n}\right)$, which is the same as our result for clustered distributions with $\beta=0$.
In fact, we show in \Cref{apx:IFR_proof} that any continuous distribution with the IFR property is $(0,\gamma,\zeta)$-clustered for some $\gamma$ and $\zeta$, and therefore our result for $\beta=0$ can be viewed as a generalization of their result.

Finally, our condition~\eqref{eqn:clustered} can be viewed as a "local" version of the condition from \citet{besbes2022multi}, where we only check for clustering in a small neighborhood around $a^*$.  This also resembles the local conditions used in \citet{balseiro2023survey,balseiro2023dynamic} for online resource allocation.

\section{Preliminaries}

In the Newsvendor problem, we make an ordering decision $a$, and then a random demand $Z$ is drawn from a distribution with CDF $F$.
The domain for $a$, $Z$, and $F$ is $[0,\infty)$.
The loss when we order $a$ and demand realizes to be $Z$ is defined as $$\ell(a,Z)=q\max\{Z-a,0\}+(1-q)\max\{a-Z,0\},$$
for some known $q\in(0,1)$, where we have normalized the unit costs of understocking, overstocking to be $q,1-q$ respectively so that the critical quantile (as defined in the Introduction) is exactly $q$.
The expected loss of a decision $a$ can be expressed as
\begin{align} \label{eqn:Loss}
L(a)=\bE_{Z\sim F}[\ell(a,Z)]=\int_0^a (1-q)F(z) dz + \int_a^\infty q(1-F(z))dz
\end{align}
following standard derivations based on Riemann-Stieltjes integration by parts.

The objective is to find an ordering decision $a$ that minimizes the loss function $L(a)$.
It is well-known that an ordering decision $a$ is optimal if $F(a)=q$.
In general there can be multiple optimal solutions, or no decision $a$ for which $F(a)$ equals $q$ exactly.
Regardless, an optimal solution $a^*=F^{-1}(q)=\inf\{a:F(a)\ge q\}$ can always be defined based on the inverse CDF, which takes the smallest optimal solution if there are multiple.
We note that by right-continuity of the CDF function, we have $F(a^*)\ge q$, and $F(a)<q$ for all $a<a^*$.

In the data-driven Newsvendor problem, the distribution $F$ is unknown, and instead must be inferred from $n$ demand samples $Z_1,\ldots,Z_n$ that are drawn IID from $F$.
A general algorithm for data-driven Newsvendor is a (randomized) mapping from the demand samples drawn to a decision.
We primarily consider the Sample Average Approximation (SAA) algorithm, which constructs the empirical CDF $\hF(z)=\frac1n \sum_{i=1}^n \bI(Z_i\le z)$ over $z\ge0$ based on the samples, and then makes the decision $\ha=\hF^{-1}(q)=\inf\{a:\hF(a)\ge q\}$.
Similarly, we have $\hF(\ha)\ge q$, and $\hF(a)<q$ for all $a<\ha$.

We are interested in the regret $L(\ha)-L(a^*)$, which measures the loss of the SAA decision $\ha$ in excess of that of the optimal decision $a^*$.  From~\eqref{eqn:Loss}, we can see that
\begin{align}
L(\ha)-L(a^*)
&=\begin{cases}
\int_{\ha}^{a^*} (q(1-F(z))-(1-q)F(z)) dz, & \text{if }\ha\le a^* \\
\int_{a^*}^{\ha} ((1-q)F(z)-q(1-F(z))) dz, & \text{if }\ha>a^*\\
\end{cases}
\nonumber
\\ &=\int_{\ha}^{a^*} (q-F(z)) dz.
\label{eqn:whpDiff}
\end{align}
We note $L(\ha)-L(a^*)$ is a random variable, depending on the random demand samples drawn.
If we want to calculate its expectation, then from the linearity of expectation we can see that
\begin{align}
\bE[L(\ha)]-L(a^*)
= &\bE\left[\int_0^\infty \left((1-q)F(z)\bI(\ha > z) + q(1-F(z))\bI(\ha \le z)\right)dz\right] \nonumber
\\ &-\int_0^{a^*} (1-q)F(z) dz -\int_{a^*}^\infty q(1-F(z))dz \nonumber
\\ = &\int_0^\infty ((F(z) - qF(z))\Pr[\ha > z] + (q - qF(z))\Pr[\ha \le z])dz \nonumber
\\ &-\int_0^{a^*} (F(z)-q F(z)) dz -\int_{a^*}^\infty (q-q F(z))dz \nonumber
\\ = &\int_0^{a^*} (F(z)\Pr[\ha > z] + q\Pr[\ha \le z]-F(z)) dz + \int_{a^*}^\infty (F(z)\Pr[\ha > z] + q\Pr[\ha \le z]-q)dz \nonumber
\\ = &\int_0^{a^*} (q-F(z))\Pr[\ha \le z] dz + \int_{a^*}^\infty (F(z)-q)\Pr[\ha > z] dz \nonumber
\\ = &\int_0^{a^*}(q-F(z))\Pr[\hF(z)\ge q]dz+\int_{a^*}^\infty(F(z)-q)\Pr[\hF(z)<q]dz. \label{eqn:expDiff}
\end{align}
To explain the final equality that leads to expression~\eqref{eqn:expDiff}:
if $\hF(z)\ge q$, then $\ha=\inf\{a:\hF(a)\ge q\}\le z$ from definition; otherwise, if $\hF(z)<q$, then it is not possible for $\inf\{a:\hF(a)\ge q\}$ to be as small as $z$ because the function $\hF$ is monotonic and right-continuous.

Hereafter we work only with expressions~\eqref{eqn:Loss},~\eqref{eqn:whpDiff}, and~\eqref{eqn:expDiff}, omitting the random variable $Z$ and implicitly capturing the dependence on random variables $Z_1,\ldots,Z_n$ through the empirical CDF $\hF$.

\paragraph{Assumptions on distributions.}
We assume that $F$ is $(\beta,\gamma,\zeta)$-clustered, as defined in~\eqref{eqn:clustered} in the Introduction.
Because $F(a)\in[0,1]$, in order for there to exist any distributions satisfying~\eqref{eqn:clustered}, one must have $\zeta\le\frac1\gamma (\min\{q,1-q\})^{\frac1{\beta+1}}$.  Therefore we will assume this about the parameters of $(\beta,\gamma,\zeta)$-clustered distributions.
We note that any distribution can be captured under this definition, for sufficient choices of the parameters $\beta,\gamma,\zeta$.

We also assume the distribution $F$ has finite mean, which is necessary in order for the expected loss $L(a)$ in~\eqref{eqn:Loss} to be well-defined.
Some of the additive regret bounds will also necessarily scale with the finite mean of the distribution $F$, which we denote using $\mu(F)$.  We emphasize that the SAA algorithm itself does not require knowing the mean $\mu(F)$.  If one did know $\mu(F)$ or more generally an upper bound on the mean of the distribution, then one could analyze a \textit{projected SAA} algorithm instead, which is simpler---see \Cref{sec:projSAA}.

\section{High-probability Upper Bounds}

We first upper-bound the additive regret $L(\ha)-L(a^*)$ incurred by the SAA algorithm.
When $\beta<\infty$, the regret upper bound depends on the parameters $\beta,\gamma$ from $(\beta,\gamma,\zeta)$-clustered distributions, and the value of $n$ at which our bound starts holding also depends on $\zeta$.
When $\beta=\infty$, parameters $\gamma,\zeta$ are irrelevant but the regret upper bound depends on $q$, being worse when $q$ is close to 1. We note that when $\beta=\infty$, the upper bound depends on $\mu(F)$ explicitly, while when $\beta<\infty$, the dependence on the mean and how the distribution is scaled is captured through the constant $\gamma$ (see definition~\eqref{eqn:clustered}).

\begin{theorem} \label{thm:hpAdd}
Fix $q\in(0,1)$ and $\beta\in[0,\infty],\gamma\in(0,\infty),\zeta\in(0,(\min\{q,1-q\})^{\frac{1}{\beta+1}}/\gamma]$.

If $\beta<\infty$, then whenever the number of samples satisfies $n>\frac{\log(2/\delta)}{2(\gamma\zeta)^{2\beta+2}}$, we have
\begin{align*}
L(\ha)-L(a^*)
\le\frac1\gamma\left(\frac{\log(2/\delta)}{2n}\right)^{\frac{\beta+2}{2\beta+2}}
=O\left(\left(\frac{\log(1/\delta)}{n}\right)^{\frac{\beta+2}{2\beta+2}}\right)
\end{align*}
with probability at least $1-\delta$, for any $\delta\in(0,1)$ and any $(\beta,\gamma,\zeta)$-clustered distribution.

If $\beta=\infty$, then whenever the number of samples satisfies $n\ge\frac{2\log(2/\delta)}{(1-q)^2}$, we have
\begin{align*}
L(\ha)-L(a^*)
\le \frac{2\mu(F)}{1-q}\sqrt{\frac{\log(2/\delta)}{2n}}
=O\left(\left(\frac{\log(1/\delta)}{n}\right)^{\frac12}\right)
\end{align*}
with probability at least $1-\delta$, for any $\delta\in(0,1)$.
\end{theorem}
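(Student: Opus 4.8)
The plan is to follow the proof framework of \citet{levi2007provably}: first show that with probability at least $1-\delta$ the SAA quantile $\ha$ falls in a small deterministic window around $a^*$, then feed this into the exact identity~\eqref{eqn:whpDiff}. Throughout, write $\eps_n:=\sqrt{\log(2/\delta)/(2n)}$; note that the sample-size hypothesis says exactly $\eps_n<(\gamma\zeta)^{\beta+1}$ when $\beta<\infty$, and exactly $\eps_n\le(1-q)/2$ when $\beta=\infty$.

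\emph{Locating $\ha$.} Introduce the deterministic barriers $a_r:=\inf\{a:F(a)\ge q+\eps_n\}$ and $\tilde a:=\inf\{a:F(a)\ge q-\eps_n\}$, so $\tilde a\le a^*\le a_r$; right-continuity of $F$ gives $F(a_r)\ge q+\eps_n$ and $F(\tilde a^-)\le q-\eps_n$. When $\beta<\infty$, substituting $a=a^*\pm\zeta$ into~\eqref{eqn:clustered} (the sign of $F(a)-q$ being determined by whether $a\lessgtr a^*$) shows $F(a^*-\zeta)\le q-(\gamma\zeta)^{\beta+1}<q-\eps_n$ and $F(a^*+\zeta)\ge q+(\gamma\zeta)^{\beta+1}>q+\eps_n$, hence $a^*-\zeta<\tilde a\le a^*\le a_r\le a^*+\zeta$. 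Since $n\hF(a_r)\sim\Bin(n,F(a_r))$ and $n\hF(\tilde a^-)\sim\Bin(n,F(\tilde a^-))$, two applications of Hoeffding's inequality give $\Pr[\hF(a_r)<q]\le e^{-2n\eps_n^2}=\delta/2$ and $\Pr[\hF(\tilde a^-)\ge q]\le e^{-2n\eps_n^2}=\delta/2$. By a union bound, the event $\{\hF(a_r)\ge q\}\cap\{\hF(\tilde a^-)<q\}$ holds with probability at least $1-\delta$, and on it the definition $\ha=\inf\{a:\hF(a)\ge q\}$ forces $\tilde a\le\ha\le a_r$; in particular $F(\ha)\ge q-\eps_n$, and (when $\beta<\infty$) both $\ha$ and every point strictly between $\ha$ and $a^*$ lie in $[a^*-\zeta,a^*+\zeta]$.

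\emph{Bounding the regret on this event.} Use~\eqref{eqn:whpDiff}, splitting on the sign of $\ha-a^*$ ($\ha=a^*$ being trivial). If $\ha<a^*$, then $F(\ha)<q$, so $q-F(\ha)\le\eps_n$ and, by monotonicity, $q-F(z)\le\eps_n$ for all $z\in(\ha,a^*)$, giving $L(\ha)-L(a^*)=\int_{\ha}^{a^*}(q-F(z))\,dz\le\eps_n(a^*-\ha)$. If $\ha>a^*$, then since $(a^*,\ha)\subseteq(a^*,a_r)$ the definition of $a_r$ yields $0\le F(z)-q<\eps_n$ there, so likewise $L(\ha)-L(a^*)\le\eps_n(\ha-a^*)$. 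It remains to bound $|\ha-a^*|$. When $\beta<\infty$, apply~\eqref{eqn:clustered} at $\ha$ (first case) or pass to the limit $z\uparrow\ha$ in $|z-a^*|\le\frac1\gamma|F(z)-q|^{\frac1{\beta+1}}<\frac1\gamma\eps_n^{\frac1{\beta+1}}$ (second case) to get $|\ha-a^*|\le\frac1\gamma\eps_n^{\frac1{\beta+1}}$, whence $L(\ha)-L(a^*)\le\frac1\gamma\eps_n^{\frac{\beta+2}{\beta+1}}=\frac1\gamma(\log(2/\delta)/(2n))^{\frac{\beta+2}{2\beta+2}}$. When $\beta=\infty$, clustering is vacuous, so instead invoke the bounded-mean hypothesis: Markov's inequality gives $F(t)\ge1-1/t$, hence $a^*\le\frac1{1-q}$ and $a_r\le\frac1{1-q-\eps_n}\le\frac2{1-q}$ (using $\eps_n\le(1-q)/2$); since $0\le\ha\le a_r$ on the good event and $a^*\le\frac1{1-q}$, we get $|\ha-a^*|\le\frac2{1-q}$ and therefore $L(\ha)-L(a^*)\le\frac2{1-q}\eps_n=\frac2{1-q}\sqrt{\log(2/\delta)/(2n)}$.

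\emph{Main obstacle.} The delicate point is reconciling the continuum in the integral~\eqref{eqn:whpDiff} with the randomness of its endpoint $\ha$ while using only two applications of Hoeffding rather than a uniform (DKW-type) bound. If $F$ jumps at or just past $\ha$, then $|F(\ha)-q|$ can be large even though $|F(z)-q|$ is small for $z$ on the ``inside'' of $\ha$; choosing $a_r$ and $\tilde a$ to be the exact crossing levels of $F$ fixes this, because then ``$|F(z)-q|<\eps_n$ for $z$ strictly between $\ha$ and $a^*$'' is a deterministic fact, and concentration is needed only to trap $\ha$ between the barriers. A secondary difficulty is obtaining the sharp leading constants $\frac1\gamma$ and $\frac2{1-q}$, which is why~\eqref{eqn:clustered} (resp.\ Markov's inequality) is applied directly at $\ha$ rather than through looser intermediate estimates.
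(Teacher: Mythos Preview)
Your proof is correct and takes a genuinely different route from the paper. The paper invokes the DKW inequality once to obtain the uniform event $\sup_a|\hF(a)-F(a)|\le\eps_n$; on that event it reads off $|F(\ha)-q|\le\eps_n$ directly from $\hF(\ha)\ge q$ (and the left limit for $\ha>a^*$), then plugs into~\eqref{eqn:clustered} and~\eqref{eqn:whpDiff}. For $\beta=\infty$ it bounds $|\ha-a^*|$ via $\int_0^\infty(1-F(z))dz\le1$, combined with the DKW event to control $\lim_{a\to\ha^-}F(a)$. By contrast, you avoid uniform concentration entirely: you place two deterministic barriers $\tilde a,a_r$ at the $q\pm\eps_n$ quantiles of the \emph{true} $F$, apply Hoeffding only at those two points, and then exploit that ``$|F(z)-q|<\eps_n$ between the barriers'' is a deterministic statement about $F$. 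Your $\beta=\infty$ argument (Markov's inequality gives $a_r\le 1/(1-q-\eps_n)\le 2/(1-q)$ directly) is also shorter than the paper's integral manipulation. The trade-offs: your route is more elementary (two binomial tails instead of DKW) and makes transparent that only pointwise concentration is needed; the paper's route is one-shot---the single DKW event handles all cases uniformly and is reused verbatim in the subsequent multiplicative-regret proof (\Cref{thm:hpMult}), so there is no duplication of the concentration step.
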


To justify our lower bound on $n$, we note that if $n$ is small, then $L(\ha)$ has large variance in terms of the randomness in $\ha$, and the separation between high-probability vs.\ expected regret is higher---we provide some empirical evidence of this at the end of \Cref{apx:simulation_supp}.  Because we are proving high-probability upper bounds that will match our upper bounds on expected regret (to come in \Cref{sec:expUB}), these empirics suggest that we must impose a lower bound on $n$.

\begin{proof}[Proof of \Cref{thm:hpAdd}]
By the DKW inequality \citep[see e.g.][]{massart1990tight}, we know that
\begin{align*}
\Pr\left[\sup_{a\ge0}|\hF(a)-F(a)|\le\sqrt{\frac{\log(2/\delta)}{2n}}\right]
\ge 1-2\exp\left(-2n\left(\sqrt{\frac{\log(2/\delta)}{2n}}\right)^2\right)=1-\delta.
\end{align*}
Therefore, with probability at least $1-\delta$, we have 
\begin{align} \label{eqn:DKWoutcome}
\sup_{a\ge0}|\hF(a)-F(a)|\le\sqrt{\frac{\log(2/\delta)}{2n}}. 
\end{align}
We will show that~\eqref{eqn:DKWoutcome} implies $L(\ha)-L(a^*)\le\frac1\gamma\left(\frac{\log(2/\delta)}{2n}\right)^{\frac{\beta+2}{2\beta+2}}$ when $\beta\in[0,\infty)$
and $n>\frac{\log(2/\delta)}{2(\gamma\zeta)^{2\beta+2}}$
(\textbf{Case 1}), and~\eqref{eqn:DKWoutcome} implies $L(\ha)-L(a^*)\le \frac{2}{1-q}\sqrt{\frac{\log(2/\delta)}{2n}}$ when $\beta=\infty$
and $n\ge\frac{2\log(2/\delta)}{(1-q)^2}$
(\textbf{Case 2}).

To begin with, we note that if $\ha\le a^*$, then
\begin{align}
q-F(\ha)
=\hF(\ha)-\hF(\ha)+q-F(\ha)
\le\sup_{a\ge0}|\hF(a)-F(a)|
\label{whpDiffBound1}
\end{align}
where the inequality holds because $\hF(\ha)\ge q$ (by right-continuity of $\hF$). Otherwise if $\ha> a^*$, then
\begin{align}
\lim_{a\to\ha^-} (F(a)-q)
=\lim_{a\to\ha^-} (F(a)-q+\hF(a)-\hF(a) )
\le\sup_{a\ge0}|\hF(a)-F(a)|
\label{whpDiffBound2}
\end{align}
where the inequality holds because $\hF(a)<q$ for all $a<\ha$.

\paragraph{Case 1: $\beta\in[0,\infty)$.}
From the definition of $(\beta,\gamma,\zeta)$-clustered distributions, we have
\begin{align*}
F(a^*-\zeta)\le q-(\gamma\zeta)^{\beta+1}<q-\sqrt{\frac{\log(2/\delta)}{2n}}
\\ F(a^*+\zeta)\ge q+(\gamma\zeta)^{\beta+1}>q+\sqrt{\frac{\log(2/\delta)}{2n}}
\end{align*}
where the strict inequalities hold because $n>\frac{\log(2/\delta)}{2(\gamma\zeta)^{2\beta+2}}$.
Applying~\eqref{eqn:DKWoutcome}, we deduce that $\hF(a^*-\zeta)<q$ and $\hF(a^*+\zeta)>q$.
From the definition of $\ha=\inf\{a:\hF(a)\ge q\}$, we conclude that $\ha\ge a^*-\zeta$ and $\ha\le a^*+\zeta$ respectively,
allowing us to apply the definition of $(\beta,\gamma,\zeta)$-clustered distributions on $\ha$.

When $\ha\le a^*$, we derive from~\eqref{eqn:whpDiff} that
\begin{align*}
L(\ha)-L(a^*)
&=\int_{\ha}^{a^*}(q-F(z))dz\\
&\le(a^*-\ha)(q-F(\ha))\\
&\le\frac1\gamma(q-F(\ha))^{\frac1{\beta+1}}(q-F(\ha))\\
&=\frac1\gamma(q-F(\ha))^{\frac{\beta+2}{\beta+1}}\\
&\le\frac1\gamma\left(\frac{\log(2/\delta)}{2n}\right)^{\frac{\beta+2}{2\beta+2}},
\end{align*}
where the second inequality applies the definition of clustered distributions, and the last inequality is by \eqref{whpDiffBound1} and \eqref{eqn:DKWoutcome}.

On the other hand, when $\ha>a^*$, we derive from~\eqref{eqn:whpDiff} that
\begin{align*}
L(\ha)-L(a^*)
&=\int_{\ha}^{a^*}(q-F(z))dz\\
&\le\lim_{a\to\ha^-}(a-a^*)(F(a)-q)\\
&\le\lim_{a\to\ha^-}\frac1\gamma(F(a)-q)^{\frac1{\beta+1}}(F(a)-q)\\
&=\frac1\gamma\lim_{a\to\ha^-}(F(a)-q)^{\frac{\beta+2}{\beta+1}}\\ &\le\frac1\gamma\left(\frac{\log(2/\delta)}{2n}\right)^{\frac{\beta+2}{2\beta+2}},
\end{align*}
where the first inequality follows from properties of the Riemann integral, the second inequality applies the definition of clustered distributions, and the last inequality is by \eqref{whpDiffBound2} and \eqref{eqn:DKWoutcome}.

Therefore, we conclude that
$L(\ha)-L(a^*)\le\frac1\gamma\left(\frac{\log(2/\delta)}{2n}\right)^{\frac{\beta+2}{2\beta+2}}$
holds universally for all possible values of $\ha$ and $a^*$ when $\beta\in[0,\infty)$
and $n>\frac{\log(2/\delta)}{2(\gamma\zeta)^{2\beta+2}}$.

\paragraph{Case 2: $\beta=\infty$.}
By definition, the mean of the distribution $F$ can be written as
\begin{align}
\mu(F)&=\int_0^\infty(1-F(z))dz. \label{ineq:mean}
\end{align}
When $\ha\le a^*$, we derive
\begin{align*}
    \int_0^\infty(1-F(z))dz
    &\ge\int_{\ha}^{a^*}(1-F(z))dz\\
    &\ge\lim_{a\to a^{*-}}(a-\ha)(1-F(a))\\
    &\ge(a^*-\ha)(1-q),
\end{align*}
where the second inequality follows from properties of the Riemann integral, and the last inequality holds because $F(a)<q$ for all $a<a^*$. This implies $a^*-\ha \le \frac{\mu(F)}{1-q}$.

Substituting into~\eqref{eqn:whpDiff}, we have
\begin{align*}
    L(\ha)-L(a^*)
    &=\int_{\ha}^{a^*}(q-F(z))dz\\
    &\le(a^*-\ha)(q-F(\ha))\\
    &\le\frac{\mu(F)}{1-q}\sqrt{\frac{\log(2/\delta)}{2n}}\\
    &\le\frac{2\mu(F)}{1-q}\sqrt{\frac{\log(2/\delta)}{2n}},
\end{align*}
where the second inequality applies \eqref{whpDiffBound1} and \eqref{eqn:DKWoutcome}.

On the other hand, when $\ha>a^*$, we similarly derive
\begin{align*}
    \int_0^\infty(1-F(z))dz
    &\ge\int_{a^*}^{\ha}(1-F(z))dz\\
    &\ge(\ha-a^*)\lim_{a\to\ha^-}(1-F(a)),
\end{align*}
where the second inequality is by properties of the Riemann integral.
Applying \eqref{ineq:mean}, we obtain
$(\ha-a^*)\lim_{a\to\ha^-}(1-F(a))\le\mu(F)$.
Meanwhile, we have
\begin{align*}
    \lim_{a\to\ha^-}F(a)
    &=\lim_{a\to\ha^-}(F(a)-\hF(a)+\hF(a))\\
    &\le\sup_{a\ge0}|\hF(a)-F(a)|+\lim_{a\to\ha^-}\hF(a)\\
    &\le\sqrt{\frac{\log(2/\delta)}{2n}}+q\\
    &\le\frac{1-q}{2}+q\\
    &=\frac{1+q}{2},
\end{align*}
where the second inequality follows from \eqref{eqn:DKWoutcome} and the fact that $\hF(a)<q$ for all $a<\ha$, and the third inequality is by the assumption that $n\ge\frac{2\log(2/\delta)}{(1-q)^2}$.
Substituting back into $(\ha-a^*)\lim_{a\to\ha^-}(1-F(a))\le\mu(F)$, we derive $\ha-a^*\le\frac{\mu(F)}{1-\frac{1+q}{2}}=\frac{2\mu(F)}{1-q}$.
Substituting the final derivation into~\eqref{eqn:whpDiff}, we get
\begin{align*}
    L(\ha)-L(a^*)
    &=\int_{\ha}^{a^*}(q-F(z))dz\\
    &\le(\ha-a^*)\lim_{a\to\ha^-}(F(a)-q)\\
    &\le\frac{2\mu(F)}{1-q}\sqrt{\frac{\log(2/\delta)}{2n}},
\end{align*}
where the first inequality follows from the properties of the Riemann integral, and the second inequality uses \eqref{whpDiffBound2} and \eqref{eqn:DKWoutcome}.

Therefore, we conclude that
$L(\ha)-L(a^*)\le\frac{2\mu(F)}{1-q}\sqrt{\frac{\log(2/\delta)}{2n}}$
holds when $\beta=\infty$
and $n\ge\frac{2\log(2/\delta)}{(1-q)^2}$.
\end{proof}

We now upper-bound the multiplicative regret $\frac{L(\ha)-L(a^*)}{L(a^*)}$ incurred by the SAA algorithm.
When $\beta=\infty$, a convergence rate of $O(1/\sqrt{n})$ can be established on the multiplicative regret without making any assumptions on the denominator $L(a^*)$ being lower-bounded.
However, to get a faster convergence rate when $\beta<\infty$, we also need to make the assumption that 
$F(a^*-\zeta),F(a^*+\zeta)$ are bounded away from $0,1$ respectively, to prevent the denominator $L(a^*)$ from being too small.  This is captured in the new parameter $\tau$.

In contrast to \Cref{thm:hpAdd}, the regret upper bound for $\beta<\infty$ now depends additionally on parameters $\zeta$ and $\tau$, and the regret upper bound for $\beta=\infty$ now worsens when $q$ is close to 0 or 1 (whereas before it only worsened when $q$ is close to 1).  This worsening when $q$ is close to 0 or 1 has been shown to be necessary for multiplicative regret \citep{cheung2019sampling}.

\begin{theorem} \label{thm:hpMult}
Fix $q\in(0,1)$ and $\beta\in[0,\infty],\gamma\in(0,\infty),\zeta\in(0,(\min\{q,1-q\})^{\frac{1}{\beta+1}}/\gamma),\tau\in(0,\min\{q,1-q\}-(\gamma\zeta)^{\beta+1}]$.

If $\beta<\infty$, then whenever the number of samples satisfies $n>\frac{\log(2/\delta)}{2(\gamma\zeta)^{2\beta+2}}$, we have
\begin{align*}
\frac{L(\ha)-L(a^*)}{L(a^*)}
\le\frac{1}{\gamma\zeta\tau}\left(\frac{\log(2/\delta)}{2n}\right)^{\frac{\beta+2}{2\beta+2}}
=O\left(\left(\frac{\log(1/\delta)}{n}\right)^{\frac{\beta+2}{2\beta+2}}\right)
\end{align*}
with probability at least $1-\delta$, for any $\delta\in(0,1)$ and any $(\beta,\gamma,\zeta)$-clustered distribution satisfying $F(a^*-\zeta)\ge\tau, F(a^*+\zeta)\le1-\tau$.

If $\beta=\infty$, then whenever the number of samples satisfies $n>\frac{\log(2/\delta)}{2(\min\{q,1-q\})^2}$, we have
\begin{align*}
    \frac{L(\ha)-L(a^*)}{L(a^*)}
    \le \frac{2}{\min\{q,1-q\}\sqrt{\frac{2n}{\log(2/\delta)}}-1}
    =O\left(\left(\frac{\log(1/\delta)}{n}\right)^{\frac12}\right)
\end{align*}
with probability at least $1-\delta$, for any $\delta\in(0,1)$ and any distribution.
\end{theorem}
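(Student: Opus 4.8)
The plan is to treat the two regimes $\beta<\infty$ and $\beta=\infty$ separately, in both cases conditioning on the high-probability event~\eqref{eqn:DKWoutcome} produced by the DKW inequality, which holds with probability at least $1-\delta$. For $\beta<\infty$, observe that event~\eqref{eqn:DKWoutcome} together with the hypothesis $n>\frac{\log(2/\delta)}{2(\gamma\zeta)^{2\beta+2}}$ is precisely the setting of Case~1 in the proof of \Cref{thm:hpAdd} (the admissible range of $\zeta$ here is contained in the one there), so we may directly quote its conclusion $L(\ha)-L(a^*)\le\frac1\gamma\bigl(\frac{\log(2/\delta)}{2n}\bigr)^{\frac{\beta+2}{2\beta+2}}$ for the numerator. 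It then remains only to show $L(a^*)\ge\zeta\tau$, after which dividing gives the claim. This is where the new hypotheses $F(a^*-\zeta)\ge\tau$ and $F(a^*+\zeta)\le 1-\tau$ enter.

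To lower-bound $L(a^*)$: from~\eqref{eqn:Loss}, discarding all the mass outside $[a^*-\zeta,a^*+\zeta]$ (which is legitimate since $F(a^*-\zeta)\ge\tau>0$ forces $a^*\ge\zeta$),
\begin{align*}
L(a^*)\ge\int_{a^*-\zeta}^{a^*}(1-q)F(z)\,dz+\int_{a^*}^{a^*+\zeta}q\bigl(1-F(z)\bigr)\,dz,
\end{align*}
and monotonicity of $F$ gives $F(z)\ge F(a^*-\zeta)\ge\tau$ on the first interval and $1-F(z)\ge 1-F(a^*+\zeta)\ge\tau$ on the second, so $L(a^*)\ge(1-q)\zeta\tau+q\zeta\tau=\zeta\tau$, as needed.

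For $\beta=\infty$ neither a bounded-mean assumption nor the parameter $\tau$ is available, so instead I would estimate the ratio directly, exploiting a cancellation of the factor $|\ha-a^*|$ between numerator and denominator. Write $\eps_n=\sqrt{\log(2/\delta)/(2n)}$; the sample-size hypothesis gives $\eps_n<\min\{q,1-q\}$, so that $q-\eps_n$, $1-q-\eps_n$, and $\min\{q,1-q\}-\eps_n$ are all strictly positive. If $\ha\le a^*$, then for $z\in[\ha,a^*]$ we have $q-F(z)\le q-F(\ha)\le\eps_n$ by~\eqref{whpDiffBound1} and~\eqref{eqn:DKWoutcome}, so $L(\ha)-L(a^*)=\int_{\ha}^{a^*}(q-F(z))\,dz\le(a^*-\ha)\eps_n$; meanwhile $L(a^*)\ge\int_{\ha}^{a^*}(1-q)F(z)\,dz\ge(1-q)(a^*-\ha)F(\ha)\ge(1-q)(a^*-\ha)(q-\eps_n)$ by monotonicity. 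Taking the ratio cancels $(a^*-\ha)$ and leaves $\frac{L(\ha)-L(a^*)}{L(a^*)}\le\frac{\eps_n}{(1-q)(q-\eps_n)}$; a short computation splitting on $q\le\tfrac12$ versus $q>\tfrac12$ (and using $\max\{q,1-q\}\ge\tfrac12$) shows this is at most $\frac{2\eps_n}{\min\{q,1-q\}-\eps_n}$, which is the stated bound. The case $\ha>a^*$ is symmetric: using~\eqref{whpDiffBound2} in place of~\eqref{whpDiffBound1} and the $\int_{a^*}^\infty q(1-F(z))\,dz$ part of~\eqref{eqn:Loss} in place of the $\int_0^{a^*}(1-q)F(z)\,dz$ part, one gets $\frac{L(\ha)-L(a^*)}{L(a^*)}\le\frac{\eps_n}{q(1-q-\eps_n)}$, again at most $\frac{2\eps_n}{\min\{q,1-q\}-\eps_n}$. (In the degenerate case $L(a^*)=0$, $F$ is a point mass at $a^*$, which forces $\ha=a^*$ and hence zero regret.)

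Once \Cref{thm:hpAdd} is available the conceptual content is light: the key point is that the same geometric factor --- $|\ha-a^*|$ when $\beta=\infty$, and the window length $\zeta$ over which $F$ is certifiably bounded away from $0$ and $1$ when $\beta<\infty$ --- multiplies both $L(\ha)-L(a^*)$ and $L(a^*)$, so the multiplicative regret inherits the additive regret's dependence on $n$ and $\delta$. I expect the only mildly delicate points to be the final algebraic comparison in the $\beta=\infty$ case and, throughout, tracking which quantities the sample-size condition guarantees to be positive.
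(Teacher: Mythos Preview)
Your proposal is correct and follows essentially the same approach as the paper. For $\beta<\infty$ the paper also lower-bounds $L(a^*)\ge\zeta\tau$ via~\eqref{eqn:Loss} and the hypotheses on $F(a^*\pm\zeta)$, then divides the additive bound from \Cref{thm:hpAdd}; for $\beta=\infty$ the paper likewise cancels $|\ha-a^*|$ between numerator and denominator to obtain the two case bounds $\frac{\eps_n}{(1-q)(q-\eps_n)}$ and $\frac{\eps_n}{q(1-q-\eps_n)}$, combining them into $\frac{2\eps_n}{\min\{q,1-q\}-\eps_n}$ by factoring out $\max\{q,1-q\}\ge\tfrac12$ rather than your case split on $q\lessgtr\tfrac12$, which is the same algebra.
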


The $\beta=\infty$ case was studied in \citet[Thm.~2.2]{levi2007provably}, who establish that $n\ge\frac{9}{\eps^2}\frac{\log(2/\delta)}{2(\min\{q,1-q\})^2}$ samples is sufficient to guarantee a multiplicative regret at most $\eps$, for $\eps\le1$.
In order to make our error bound of $\frac{2}{\min\{q,1-q\}\sqrt{\frac{2n}{\log(2/\delta)}}-1}$ at most $\eps$, we need $n\ge\frac{(2+\eps)^2}{\eps^2}\frac{\log(2/\delta)}{2(\min\{q,1-q\})^2}$, which always satisfies our condition of $n>\frac{\log(2/\delta)}{2(\min\{q,1-q\})^2}$.
Therefore, the $\beta=\infty$ case of our \Cref{thm:hpMult} can be viewed as an improvement over \citet[Thm.~2.2]{levi2007provably}, that holds for all $\eps>0$, and moreover shows that a smaller constant is sufficient for $\eps\le 1$ (because $\frac{(2+\eps)^2}{\eps^2}\le\frac{9}{\eps^2}$).
We note however that a better dependence on $\min\{q,1-q\}$ was established in \citet{levi2015data} for $\eps\le1$.

\begin{proof}[Proof of \Cref{thm:hpMult}]
For $\beta\in[0,\infty)$, we derive from~\eqref{eqn:Loss} that
\begin{align}
    L(a^*)&=\int_0^{a^*}(1-q)F(z)dz+\int_{a^*}^\infty q(1-F(z))dz \nonumber\\
    &\ge\int_{a^*-\zeta}^{a^*}(1-q)F(z)dz+\int_{a^*}^{a^*+\zeta} q(1-F(z))dz \nonumber\\
    &\ge\int_{a^*-\zeta}^{a^*}(1-q)F(a^*-\zeta)dz+\int_{a^*}^{a^*+\zeta} q(1-F(a^*+\zeta))dz \nonumber\\
    &\ge\int_{a^*-\zeta}^{a^*}(1-q)\tau dz+\int_{a^*}^{a^*+\zeta} q\tau dz \nonumber\\
    &=\zeta\tau \label{LBL(a*)},
\end{align}
where the last inequality follows from the assumptions that $F(a^*-\zeta)\ge\tau$ and $F(a^*+\zeta)\le1-\tau$.

By \Cref{thm:hpAdd}, we know that with probability at least $1-\delta$,
\begin{align*}
    L(\ha)-L(a^*)\le\frac1\gamma\left(\frac{\log(2/\delta)}{2n}\right)^{\frac{\beta+2}{2\beta+2}},
\end{align*}
under the assumption that $n>\frac{\log(2/\delta)}{2(\gamma\zeta)^{2\beta+2}}$.
Thus, with probability at least $1 - \delta$, we have
\begin{align*}
\frac{L(\ha)-L(a^*)}{L(a^*)}\le\frac{1}{\gamma\zeta\tau}\left(\frac{\log(2/\delta)}{2n}\right)^{\frac{\beta+2}{2\beta+2}}
\end{align*}
for any $n>\frac{\log(2/\delta)}{2(\gamma\zeta)^{2\beta+2}}$.

The proof for $\beta=\infty$ is deferred to \Cref{sec:betaInftyHP}, due to similarities with \citet{levi2007provably}.
\end{proof}

\section{Expectation Upper Bounds} \label{sec:expUB}

We first upper-bound the expected additive regret $\bE[L(\ha)]-L(a^*)$ incurred by the SAA algorithm. In contrast to \Cref{thm:hpAdd}, here our regret upper bound for $\beta<\infty$ depends on all three parameters $\beta,\gamma,\zeta$ and holds for all values of $n$.  The regret upper bound for $\beta=\infty$ still only has an inverse dependence on $1-q$ but not $q$. Like our additive regret result in \Cref{thm:hpAdd}, some parts of these bounds will depend on the mean $\mu(F)$ of the demand distribution.

\begin{theorem}\label{thm:expAdd}
Fix $q\in(0,1)$ and $\beta\in[0,\infty],\gamma\in(0,\infty),\zeta\in(0,(\min\{q,1-q\})^{\frac{1}{\beta+1}}/\gamma]$.

If $\beta<\infty$, then we have
\begin{align*}
\bE[L(\ha)]-L(a^*) \le\frac{2}{\gamma}\left(\frac{1}{\beta+1}+\frac{1}{\sqrt{e}}\right)\left(\frac{1}{2\sqrt{n}}\right)^{\frac{\beta+2}{\beta+1}}+\frac{\mu(F)(q+1)}{n(\gamma\zeta)^{\beta+1}}=O\left(n^{-\frac{\beta+2}{2\beta+2}}\right)
\end{align*}
for any $(\beta,\gamma,\zeta)$-clustered distribution and any number of samples $n$.

If $\beta=\infty$, then we have
\begin{align*}
\bE[L(\ha)]-L(a^*)
    &\le\left(\frac{1}{\sqrt{e}}+2\right)\frac{\mu(F)}{(1-q)\sqrt{n}}
    =O\left(n^{-\frac12}\right)
\end{align*}
for any distribution and any number of samples $n$.
\end{theorem}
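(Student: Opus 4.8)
The plan is to begin from the exact identity~\eqref{eqn:expDiff} for $\bE[L(\ha)]-L(a^*)$, so that the whole proof reduces to bounding the two integrals $\int_0^{a^*}(q-F(z))\Pr[\hF(z)\ge q]\,dz$ and $\int_{a^*}^\infty(F(z)-q)\Pr[\hF(z)<q]\,dz$. For a fixed $z$ we have $n\hF(z)\sim\Bin(n,F(z))$, and I would bound the relevant tail probability ($\Pr[\hF(z)\ge q]$ when $z<a^*$ so $F(z)<q$, and $\Pr[\hF(z)<q]$ when $z>a^*$ so $F(z)\ge q$) by two different inequalities according to how far $z$ is from $a^*$: Hoeffding's inequality, giving $\exp(-2n|F(z)-q|^2)$, when $z$ is near $a^*$; and Chebyshev's inequality applied to $\hF(z)$, giving at most $\frac{F(z)(1-F(z))}{n|F(z)-q|^2}$, when $z$ is far from $a^*$. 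Both are genuinely needed: near $a^*$ the Hoeffding bound is the right one, but in the far tail it does not decay in $z$ (it tends to $\exp(-2n(1-q)^2)>0$), so it would multiply the divergent integral $\int_{a^*}^\infty(F(z)-q)\,dz$; the Chebyshev bound instead produces a factor $1-F(z)$, whose integral is at most the mean, normalized here to $1$.

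\textbf{Case $\beta<\infty$.}
I would split each of the two integrals at distance $\zeta$ from $a^*$. On the outer parts $[0,a^*-\zeta]$ and $[a^*+\zeta,\infty)$, evaluating the $(\beta,\gamma,\zeta)$-clustered condition~\eqref{eqn:clustered} at the endpoints $a^*-\zeta$ and $a^*+\zeta$ gives $|F(z)-q|\ge(\gamma\zeta)^{\beta+1}$ throughout; combined with the Chebyshev bound and $F(z)<q$ on the left / $F(z)\le1$ on the right, the two integrands are at most $\frac{q(1-F(z))}{n(\gamma\zeta)^{\beta+1}}$ and $\frac{1-F(z)}{n(\gamma\zeta)^{\beta+1}}$ respectively, and integrating $1-F(z)$ against the mean bound produces the term $\frac{q+1}{n(\gamma\zeta)^{\beta+1}}$. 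On the inner part, say $[a^*-\zeta,a^*]$, I substitute $w=a^*-z\in[0,\zeta]$ and set $u(w)=q-F(a^*-w)$, which by~\eqref{eqn:clustered} satisfies $u(w)\ge(\gamma w)^{\beta+1}$; Hoeffding then bounds the integrand by $u(w)e^{-2nu(w)^2}$. Since $x\mapsto xe^{-2nx^2}$ is unimodal with maximum $\frac1{2\sqrt{en}}$ attained at $x=\frac1{2\sqrt n}$, I split at the point $w_0$ with $(\gamma w_0)^{\beta+1}=\frac1{2\sqrt n}$: on $[0,w_0]$ the integrand is at most $\frac1{2\sqrt{en}}$, contributing $\frac{w_0}{2\sqrt{en}}$; on $[w_0,\infty)$ we have $u(w)\ge(\gamma w)^{\beta+1}\ge\frac1{2\sqrt n}$, so the integrand is at most $(\gamma w)^{\beta+1}e^{-2n(\gamma w)^{2\beta+2}}$, which after the substitution $v=(\gamma w)^{\beta+1}$ becomes a multiple of $\int_{1/(2\sqrt n)}^\infty v^{1/(\beta+1)}e^{-2nv^2}\,dv$. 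A direct estimate of each piece shows both are constant multiples of $\frac1\gamma\left(\frac1{2\sqrt n}\right)^{\frac{\beta+2}{\beta+1}}$, and doubling to account for the symmetric inner integral over $[a^*,a^*+\zeta]$ yields the first term of the stated bound.

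\textbf{Case $\beta=\infty$.}
Here there is no clustering to exploit, so I would instead pick $\tilde{z}=F^{-1}\!\left(\frac{1+q}{2}\right)$ (note $a^*\le\tilde{z}$) and treat the first integral over its full range $[0,a^*]$ together with the part of the second integral over $[a^*,\tilde{z}]$: on all of $[0,\tilde{z}]$ the relevant integrand is $|F(z)-q|$ times a probability bounded by Hoeffding, hence at most the unimodal maximum $\frac1{2\sqrt{en}}$, so this combined contribution is at most $\frac{\tilde{z}}{2\sqrt{en}}$. Since $\Pr[Z\ge\tilde{z}]\ge\frac{1-q}{2}$, Markov's inequality gives $\tilde{z}\le\frac{2}{1-q}$, so this is at most $\frac1{\sqrt e}\cdot\frac1{(1-q)\sqrt n}$. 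On the remaining tail $[\tilde{z},\infty)$ we have $F(z)-q\ge\frac{1-q}{2}$, so Chebyshev bounds the integrand by $\frac{2(1-F(z))}{n(1-q)}$, which integrates against the mean bound to $\frac{2}{n(1-q)}\le\frac{2}{(1-q)\sqrt n}$; adding the two pieces gives the claimed bound for $\beta=\infty$.

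\textbf{Main obstacle.}
The crux is the inner-region estimate when $\beta<\infty$: one has to integrate the unimodal bump $xe^{-2nx^2}$ against the clustered lower bound $x\ge(\gamma w)^{\beta+1}$ in a way that lands exactly on the exponent $\frac{\beta+2}{2\beta+2}$, which is the whole point of the clustered condition and is where the rate comes from. A secondary subtlety one must not overlook is that the domains $[0,a^*]$ and $[a^*,\infty)$ are unbounded a priori, so the ``mean at most $1$'' hypothesis must be invoked twice — once through Markov's inequality to bound $a^*$ (and $\tilde{z}$), and once to control the tail integrals $\int(1-F(z))\,dz$ — and, relatedly, that Hoeffding alone really does fail in the tail, which is exactly where Chebyshev must take over.
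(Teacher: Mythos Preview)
Your proposal is correct and follows essentially the same approach as the paper: start from identity~\eqref{eqn:expDiff}, use Hoeffding near $a^*$ and Chebyshev in the tails (combined with $\int_0^\infty(1-F)\le 1$), and for $\beta<\infty$ split the inner Hoeffding region at $w_0=\frac1\gamma\bigl(\frac1{2\sqrt n}\bigr)^{1/(\beta+1)}$ to extract the rate $n^{-\frac{\beta+2}{2\beta+2}}$. The only cosmetic difference is that for $\beta=\infty$ you split at the fixed quantile $\tilde z=F^{-1}\bigl(\tfrac{1+q}{2}\bigr)$ whereas the paper uses the $n$-dependent $a'=F^{-1}\bigl(q+\tfrac{1-q}{2\sqrt n}\bigr)$; your choice gives a tail contribution $\frac{2}{n(1-q)}$ that you then relax to $\frac{2}{(1-q)\sqrt n}$, arriving at the same constant.
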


For the $\beta=0$ and $\beta=\infty$ cases, respective upper bounds of $(n+\frac{\mu(F) q}{1-q})\exp[-2n(\gamma\zeta)^2]+(\frac{\mu(F)(1+\mu(F))}{1-q}+\frac{1}{2\gamma})\frac{1}{n}$ \citep[Prop.~2]{lin2022data} and $\frac{(1+\mu(F))^2}{(1-q)\sqrt{n}}$ \citep[p.~2009]{lin2022data} were previously known\footnote{
\citet{lin2022data} did not normalize the unit costs of understocking and overstocking to sum to 1. The bounds we compare with here are obtained by substituting $\rho=q$ and $b+h=1$ into their bounds.
}.
Our upper bound for $\beta=0$ requires a less restrictive condition (based on clustered distributions) than the positive density condition in \citet{lin2022data}. Importantly, our regret bounds only have a linear dependence on the mean $\mu(F)$ of the distribution, whereas the known upper bounds suffered from a quadratic scaling with the mean.

\begin{proof}[Proof of \Cref{thm:expAdd}]
We first consider the case where $\beta=\infty$, and then the case where $\beta\in[0,\infty)$.

\paragraph{Case 1: $\beta=\infty$.}
Let $a'=\inf\{a:F(a)\ge q+\frac{1-q}{2\sqrt{n}}\}$. We know $a'\ge a^*$ from the definition of $a^*=\inf\{a:F(a)\ge q\}$. Therefore, we derive from \eqref{eqn:expDiff} that
\begin{align}
&\quad\bE[L(\ha)]-L(a^*) \nonumber\\
&=\int_0^{a^*}(q-F(z))\Pr[\hF(z)\ge q]dz+\int_{a^*}^\infty(F(z)-q)\Pr[\hF(z)<q]dz \nonumber\\
&=\int_0^{a^*}(q-F(z))\Pr[\hF(z)\ge q]dz+
\int_{a^*}^{a'}(F(z)-q)\Pr[\hF(z)<q]dz+
\int_{a'}^\infty(F(z)-q)\Pr[\hF(z)<q]dz. \label{expGeneral}
\end{align}

We note that if $z<a^*$, then $q-F(z)>0$ by definition of $a^*$, and we have
\begin{align*}
    \Pr[\hF(z)\ge q]
    =\Pr[\hF(z)-F(z)\ge q-F(z)]
    \le\exp\left(-2n(q-F(z))^2\right),
\end{align*}
where the inequality follows from Hoeffding's inequality \cite[Thm.~2]{hoeffding1963probability}. Otherwise if $z\ge a^*$, then $F(z)-q\ge0$ by definition of $a^*$, and we have
\begin{align*}
    \Pr[\hF(z)< q]
    =\Pr[F(z)-\hF(z)> F(z)-q]
    \le\exp\left(-2n(F(z)-q)^2\right),
\end{align*}
where the inequality again applies Hoeffding's inequality. So the first two terms in \eqref{expGeneral} sum up to
\begin{align}
    &\quad\int_0^{a^*}(q-F(z))\Pr[\hF(z)\ge q]dz+\int_{a^*}^{a'}(F(z)-q)\Pr[\hF(z)<q]dz \nonumber\\
    &\le\int_0^{a'}|q-F(z)|\exp\left(-2n|q-F(z)|^2\right)dz
    \nonumber\\
    &\le\frac{a'}{2\sqrt{en}},\label{expGenTerm12}
\end{align}
where the last inequality holds because the function $g(x)=xe^{-2nx^2}$ is at most $\frac{1}{2\sqrt{en}}$ for all $x\ge0$.
Meanwhile, we derive
\begin{align*}
    \int_0^\infty (1-F(z))dz
    &\ge\int_0^{a'} (1-F(z))dz\\
    &\ge \lim_{a\to a'^{-}} a(1-F(a))\\
    &\ge a'\left(1-q-\frac{1-q}{2\sqrt{n}}\right)\\
    &\ge \frac{a'(1-q)}{2},
\end{align*}
where the second inequality follows from properties of the Riemann integral, the third inequality holds because $F(a)<q+\frac{1-q}{2\sqrt{n}}$ for all $a<a'$, and the last inequality holds for all positive integer $n$. 
Applying \eqref{ineq:mean}, we deduce that
$a'\le\frac{2\mu(F)}{1-q}$. Substituting this into \eqref{expGenTerm12}, we have
\begin{align}
    \int_0^{a^*}(q-F(z))\Pr[\hF(z)\ge q]dz+
    \int_{a^*}^{a'}(F(z)-q)\Pr[\hF(z)<q]dz
    &\le\frac{\mu(F)}{(1-q)\sqrt{en}}. \label{expGenTerm12Outcome}
\end{align}

For the third term in \eqref{expGeneral}, we have
\begin{align}
    \int_{a'}^\infty(F(z)-q)\Pr[\hF(z)<q]dz
    &=\int_{a'}^\infty(F(z)-q)\Pr\left[\frac1n\sum_{i=1}^n\bI(Z_i\le z)<q\right]dz \nonumber\\
    &=\int_{a'}^\infty(F(z)-q)\Pr\left[\frac{1}{n}\Bin(n,F(z))< q\right]dz \nonumber\\
    &=\int_{a'}^{\inf\{a:F(a)=1\}}(1-F(z))dz\frac{(F(z)-q)\Pr[\frac{1}{n}\Bin(n,F(z))< q]}{1-F(z)} \nonumber\\
    &\le\mu(F)\cdot\sup_{F\in[q+\frac{1-q}{2\sqrt{n}},1)}\frac{(F-q)\Pr[\frac1n\Bin(n,1-F)\ge 1-q]}{1-F}, \label{expGenTerm3Mid}
\end{align}
where $\Bin(n,F(z))$ is a binomial random variable with parameters $n$ and $F(z)$, the second equality follows from the independence of samples, the third equality follows because $\Pr[\frac{1}{n}\Bin(n,F(z))< q]=0$ if $F(z)=1$, and the inequality uses $\int_{a'}^\infty(1-F(z))dz\le\int_0^\infty(1-F(z))dz=\mu(F)$.


Consider a random variable $X$ defined as $\frac1n\Bin(n,1-F)$. The expected value and variance of $X$ are given by $\bE[X]=1-F$ and $\mathrm{Var}(X)=\frac{F(1-F)}{n}$ respectively. By Chebyshev's inequality \citep[see e.g.][p.~423]{shalev-shwartz2014understanding}, we obtain that for all $F\in[q+\frac{1-q}{2\sqrt{n}},1)$,
\begin{align}
    \Pr\left[\frac1n\Bin(n,1-F)\ge 1-q\right]
    &=\Pr[X\ge 1-q]\nonumber \\
    &=\Pr[X-(1-F)\ge F-q]\nonumber \\
    &\le\Pr[|X-(1-F)|\ge F-q]\nonumber \\
    &\le\frac{F(1-F)}{n(F-q)^2}.
    \label{Chebyshev_1-F}
\end{align}
Plugging it into \eqref{expGenTerm3Mid}, we have 
\begin{align}
    \int_{a'}^\infty(F(z)-q)\Pr[\hF(z)<q]dz
    \le\mu(F)\cdot\sup_{F\in[q+\frac{1-q}{2\sqrt{n}},1)}\frac{F}{n(F-q)}
    \le\frac{2\mu(F)}{(1-q)\sqrt{n}}. \label{expGenTerm3Outcome}
\end{align}

Combining \eqref{expGenTerm12Outcome} and \eqref{expGenTerm3Outcome}, we have
\begin{align*}
\bE[L(\ha)]-L(a^*)\le\left(\frac{1}{\sqrt{e}}+2\right)\frac{\mu(F)}{(1-q)\sqrt{n}}.   
\end{align*}

\paragraph{Case 2: $\beta\in[0,\infty)$.} 
We decompose $\bE[L(\ha)]-L(a^*)$ into three separate parts as follows.
By \eqref{eqn:expDiff},
\begin{align}
    \bE[L(\ha)]-L(a^*)
    &=\int_0^{a^*}(q-F(z))\Pr[\hF(z)\ge q]dz+\int_{a^*}^\infty(F(z)-q)\Pr[\hF(z)<q]dz \nonumber\\
    &=\int_0^{a^*-\zeta}(q-F(z))\Pr[\hF(z)\ge q]dz+\int_{a^*-\zeta}^{a^*}(q-F(z))\Pr[\hF(z)\ge q]dz\nonumber\\
    &\quad+\int_{a^*}^{a^*+\zeta}(F(z)-q)\Pr[\hF(z)<q]dz+\int_{a^*+\zeta}^\infty(F(z)-q)\Pr[\hF(z)<q]dz\nonumber\\
    &\le\int_0^{a^*-\zeta}(q-F(z))\Pr[\hF(z)\ge q]dz \label{expTerm1}\\
    &\quad+\int_{a^*-\zeta}^{a^*+\zeta}|q-F(z)|\exp[-2n|q-F(z)|^2]dz \label{expTerm2}\\
    &\quad+\int_{a^*+\zeta}^\infty(F(z)-q)\Pr[\hF(z)<q]dz, \label{expTerm3}
\end{align}
where the inequality is by Hoeffding's inequality.
We then analyze \eqref{expTerm1}, \eqref{expTerm2}, and \eqref{expTerm3} separately.

For \eqref{expTerm1}, similar with the analysis of the third term in \eqref{expGeneral} for the case where $\beta=\infty$, we derive
\begin{align}
    \int_0^{a^*-\zeta}(q-F(z))\Pr[\hF(z)\ge q]dz
    &=\int_0^{a^*-\zeta}(q-F(z))\Pr\left[\frac{1}{n}\Bin(n,F(z))\ge q\right]dz \nonumber\\
    &=\int_0^{a^*-\zeta}(1-F(z))dz\frac{(q-F(z))\Pr[\frac{1}{n}\Bin(n,F(z))\ge q]}{1-F(z)} \nonumber\\
    &\le\mu(F)\cdot\sup_{F\in[0,q-(\gamma\zeta)^{\beta+1}]}\frac{(q-F)\Pr[\frac1n\Bin(n,F)\ge q]}{1-F} \nonumber\\
    &\le\mu(F)\cdot\sup_{F\in[0,q-(\gamma\zeta)^{\beta+1}]}\frac{F}{n(q-F)} \nonumber\\
    &\le\frac{\mu(F) q}{n(\gamma\zeta)^{\beta+1}},\label{expTerm1outcome}
\end{align}
where the first inequality uses $\int_0^{a^*-\zeta}(1-F(z))dz\le\int_0^\infty(1-F(z))dz=\mu(F)$ 
and $F(a^*-\zeta)\le q-(\gamma\zeta)^{\beta+1}$ (by definition of clustered distributions). 
The second inequality is by Chebyshev's inequality.
Specifically, let $X=\frac1n\Bin(n,F)$ be a random variable. Then,
\begin{align}
\Pr\left[\frac1n\Bin(n,F)\ge q\right]
&=\Pr[X\ge q]\nonumber \\
&=\Pr[X-F\ge q-F]\nonumber \\
&\le\Pr[|X-F|\ge q-F]\nonumber \\
&\le\frac{F(1-F)}{n(q-F)^2}, \label{Chebyshev_F}
\end{align}
where the last inequality follows from Chebyshev’s inequality, using the fact that $\bE[X]=F$ and $\mathrm{Var}(X)=\frac{F(1-F)}{n}$.

Similarly, for \eqref{expTerm3} we have
\begin{align}
    \int_{a^*+\zeta}^\infty(F(z)-q)\Pr[\hF(z)<q]dz
    &=\int_{a^*+\zeta}^\infty(F(z)-q)\Pr\left[\frac{1}{n}\Bin(n,F(z))< q\right]dz \nonumber\\
    &=\int_{a^*+\zeta}^{\inf\{a:F(a)=1\}}(1-F(z))dz\frac{(F(z)-q)\Pr[\frac{1}{n}\Bin(n,F(z))< q]}{1-F(z)} \nonumber\\
    &\le\mu(F)\cdot\sup_{F\in[q+(\gamma\zeta)^{\beta+1},1)}\frac{(F-q)\Pr[\frac1n\Bin(n,1-F)\ge 1-q]}{1-F}  \nonumber\\
    &\le\mu(F)\cdot\sup_{F\in[q+(\gamma\zeta)^{\beta+1},1)}\frac{F}{n(F-q)}  \nonumber\\
    &\le\frac{\mu(F)}{n(\gamma\zeta)^{\beta+1}} \label{expTerm3outcome},
\end{align}
where the second equality follows because $\Pr[\frac{1}{n}\Bin(n,F(z))< q]=0$ if $F(z)=1$, the first inequality holds because $\int_{a^*+\zeta}^\infty(1-F(z))dz\le\int_0^\infty(1-F(z))dz=\mu(F)$ 
and $F(a^*+\zeta)\ge q+(\gamma\zeta)^{\beta+1}$ by definition of clustered distributions, and the second inequality follows from Chebyshev's inequality,
by the same derivation as in \eqref{Chebyshev_1-F}.

To analyze \eqref{expTerm2}, we need to consider two cases. When $\frac{1}{2\sqrt{n}}\ge(\gamma\zeta)^{\beta+1}$, we know that $\zeta\le\frac1\gamma\left(\frac{1}{2\sqrt{n}}\right)^{\frac{1}{\beta+1}}$. Because the function $g(x)=xe^{-2nx^2}$ is at most $\frac{1}{2\sqrt{en}}$ for all $x\ge0$, we obtain
\begin{align}
    \int_{a^*-\zeta}^{a^*+\zeta}|q-F(z)|\exp[-2n|q-F(z)|^2]dz
    &\le\int_{a^*-\zeta}^{a^*+\zeta}\frac{1}{2\sqrt{en}}dz\\
    &=\frac{\zeta}{\sqrt{en}}\\
    &\le\frac{2}{\gamma\sqrt{e}}\left(\frac{1}{2\sqrt{n}}\right)^{\frac{\beta+2}{\beta+1}}. \label{expCase1}
\end{align}
On the other hand, for the case where $\frac{1}{2\sqrt{n}}<(\gamma\zeta)^{\beta+1}$, we know that $\frac{1}{\gamma}(\frac{1}{2\sqrt{n}})^{\frac{1}{\beta+1}}<\zeta$. Therefore, we can decompose \eqref{expTerm2} into the following three terms:
\begin{align}
    &\quad\int_{a^*-\zeta}^{a^*+\zeta}|q-F(z)|\exp[-2n|q-F(z)|^2]dz \nonumber\\
    &=\int_{a^*-\zeta}^{a^*-\frac{1}{\gamma}(\frac{1}{2\sqrt{n}})^{\frac{1}{\beta+1}}}|q-F(z)|\exp[-2n|q-F(z)|^2]dz \label{expTerm4}\\
    &\quad+\int_{a^*-\frac{1}{\gamma}(\frac{1}{2\sqrt{n}})^{\frac{1}{\beta+1}}}^{a^*+\frac{1}{\gamma}(\frac{1}{2\sqrt{n}})^{\frac{1}{\beta+1}}}|q-F(z)|\exp[-2n|q-F(z)|^2]dz \label{expTerm5}\\
    &\quad+\int_{a^*+\frac{1}{\gamma}(\frac{1}{2\sqrt{n}})^{\frac{1}{\beta+1}}}^{a^*+\zeta}|q-F(z)|\exp[-2n|q-F(z)|^2]dz \label{expTerm6}.
\end{align}

When $z\in\left[a^*-\zeta,a^*-\frac{1}{\gamma}\left(\frac{1}{2\sqrt{n}}\right)^{\frac{1}{\beta+1}}\right]$, we have
\begin{align*}
    |q-F(z)|
    \ge\left(\gamma|z-a^*|\right)^{\beta+1}
    \ge\left(\gamma\left|a^*-\left(a^*-\frac{1}{\gamma}\left(\frac{1}{2\sqrt{n}}\right)^{\frac{1}{\beta+1}}\right)\right|\right)^{\beta+1}
    =\frac{1}{2\sqrt{n}},
\end{align*}
where the first inequality follows from definition of clustered distributions. Meanwhile, because the function $g(x)=xe^{-2nx^2}$ is monotonically decreasing on the interval $\left[\frac{1}{2\sqrt{n}},\infty\right)$, we obtain
\begin{align*}
    &\quad\int_{a^*-\zeta}^{a^*-\frac{1}{\gamma}(\frac{1}{2\sqrt{n}})^{\frac{1}{\beta+1}}}|q-F(z)|\exp[-2n|q-F(z)|^2]dz\\
    &\le\int_{a^*-\zeta}^{a^*-\frac{1}{\gamma}(\frac{1}{2\sqrt{n}})^{\frac{1}{\beta+1}}}\left(\gamma|z-a^*|\right)^{\beta+1}\exp[-2n\left(\gamma|z-a^*|\right)^{2(\beta+1)}]dz.
\end{align*}
Similarly, we derive that
\begin{align*}
    &\quad\int_{a^*+\frac{1}{\gamma}(\frac{1}{2\sqrt{n}})^{\frac{1}{\beta+1}}}^{a^*+\zeta}|q-F(z)|\exp[-2n|q-F(z)|^2]dz\\
    &\le\int_{a^*+\frac{1}{\gamma}(\frac{1}{2\sqrt{n}})^{\frac{1}{\beta+1}}}^{a^*+\zeta}\left(\gamma|z-a^*|\right)^{\beta+1}\exp[-2n\left(\gamma|z-a^*|\right)^{2(\beta+1)}]dz.
\end{align*}
Therefore, we can sum \eqref{expTerm4} and \eqref{expTerm6} to get
\begin{align*}
    &\quad\left(\int_{a^*-\zeta}^{a^*-\frac{1}{\gamma}(\frac{1}{2\sqrt{n}})^{\frac{1}{\beta+1}}}+\int_{a^*+\frac{1}{\gamma}(\frac{1}{2\sqrt{n}})^{\frac{1}{\beta+1}}}^{a^*+\zeta}\right)|q-F(z)|\exp[-2n|q-F(z)|^2]dz\\
    &\le\left(\int_{a^*-\zeta}^{a^*-\frac{1}{\gamma}(\frac{1}{2\sqrt{n}})^{\frac{1}{\beta+1}}}+\int_{a^*+\frac{1}{\gamma}(\frac{1}{2\sqrt{n}})^{\frac{1}{\beta+1}}}^{a^*+\zeta}\right)
    \left(\gamma|z-a^*|\right)^{\beta+1}\exp[-2n\left(\gamma|z-a^*|\right)^{2(\beta+1)}]dz.
\end{align*}
To simplify the integral, we let $x$ denote $|z-a^*|$, which yields
\begin{align}
    &\quad\left(\int_{a^*-\zeta}^{a^*-\frac{1}{\gamma}(\frac{1}{2\sqrt{n}})^{\frac{1}{\beta+1}}}+\int_{a^*+\frac{1}{\gamma}(\frac{1}{2\sqrt{n}})^{\frac{1}{\beta+1}}}^{a^*+\zeta}\right)|q-F(z)|\exp[-2n|q-F(z)|^2]dz \nonumber\\
    &\le2\int_{\frac{1}{\gamma}(\frac{1}{2\sqrt{n}})^{\frac{1}{\beta+1}}}^{\zeta}(\gamma x)^{\beta+1}\exp[-2n(\gamma x)^{2\beta+2}]dx \nonumber\\
    &=2\int_{\frac{1}{\gamma}(\frac{1}{2\sqrt{n}})^{\frac{1}{\beta+1}}}^{\zeta}
    \frac{(\gamma x)^{2\beta+1}}{(\gamma x)^\beta}\exp[-2n(\gamma x)^{2\beta+2}]dx \nonumber\\
    &\le\frac{2}{\left(\frac{1}{2\sqrt{n}}\right)^{\frac{\beta}{\beta+1}}}\cdot\frac{\exp[-2n(\gamma x)^{2\beta+2}]}{2n\gamma(2\beta+2)}\bigg|_{\zeta}^{\frac{1}{\gamma}(\frac{1}{2\sqrt{n}})^{\frac{1}{\beta+1}}} \nonumber\\
    &\le\frac{2}{\gamma(\beta+1)}\left(\frac{1}{2\sqrt{n}}\right)^{\frac{\beta+2}{\beta+1}}, \label{expTerm46}
\end{align}
where the last inequality holds because $\exp[-2n(\gamma x)^{2\beta+2}]|_{\zeta}^{\frac{1}{\gamma}(\frac{1}{2\sqrt{n}})^{\frac{1}{\beta+1}}}\le\exp[-2n(\gamma x)^{2\beta+2}]|_\zeta^0\le1$.

For \eqref{expTerm5}, because we have $g(x)=xe^{-2nx^2}\le\frac{1}{2\sqrt{en}}$ for all $x\ge0$, it follows that
\begin{align}
    \int_{a^*-\frac{1}{\gamma}(\frac{1}{2\sqrt{n}})^{\frac{1}{\beta+1}}}^{a^*+\frac{1}{\gamma}(\frac{1}{2\sqrt{n}})^{\frac{1}{\beta+1}}}|q-F(z)|\exp[-2n|q-F(z)|^2]dz
    &\le\int_{a^*-\frac{1}{\gamma}(\frac{1}{2\sqrt{n}})^{\frac{1}{\beta+1}}}^{a^*+\frac{1}{\gamma}(\frac{1}{2\sqrt{n}})^{\frac{1}{\beta+1}}}\frac{1}{2\sqrt{en}}dz \nonumber\\
    &=\frac{2}{\gamma\sqrt{e}}\left(\frac{1}{2\sqrt{n}}\right)^{\frac{\beta+2}{\beta+1}}. \label{expTerm5outcome}
\end{align}

Combining the results \eqref{expTerm46} and \eqref{expTerm5outcome}, we know that under the case where $\frac{1}{2\sqrt{n}}<(\gamma\zeta)^{\beta+1}$,
\begin{align*}
    \int_{a^*-\zeta}^{a^*+\zeta}|q-F(z)|\exp[-2n|q-F(z)|^2]dz
    \le\frac{2}{\gamma}\left(\frac{1}{\beta+1}+\frac{1}{\sqrt{e}}\right)\left(\frac{1}{2\sqrt{n}}\right)^{\frac{\beta+2}{\beta+1}}.
\end{align*}
Note that this result is strictly greater than the result $\frac{2}{\gamma\sqrt{e}}\left(\frac{1}{2\sqrt{n}}\right)^{\frac{\beta+2}{\beta+1}}$ derived in \eqref{expCase1} for the case where $\frac{1}{2\sqrt{n}}\ge(\gamma\zeta)^{\beta+1}$, so for any number of samples $n$, we have
\begin{align}
    \int_{a^*-\zeta}^{a^*+\zeta}|q-F(z)|\exp[-2n|q-F(z)|^2]dz
    \le\frac{2}{\gamma}\left(\frac{1}{\beta+1}+\frac{1}{\sqrt{e}}\right)\left(\frac{1}{2\sqrt{n}}\right)^{\frac{\beta+2}{\beta+1}}. \label{expTerm2outcome}
\end{align}

Finally, by combining the results \eqref{expTerm1outcome}, \eqref{expTerm3outcome}, and \eqref{expTerm2outcome}, we conclude that
\begin{align*}
    \bE[L(\ha)]-L(a^*) \le\frac{2}{\gamma}\left(\frac{1}{\beta+1}+\frac{1}{\sqrt{e}}\right)\left(\frac{1}{2\sqrt{n}}\right)^{\frac{\beta+2}{\beta+1}}+\frac{\mu(F)(q+1)}{n(\gamma\zeta)^{\beta+1}}
\end{align*}
when $\beta\in[0,\infty)$.
\end{proof}

We now upper-bound the expected multiplicative regret incurred by the SAA algorithm.  For multiplicative regret and $\beta<\infty$, we need the further assumption that $F(a^*-\zeta),F(a^*+\zeta)$ are bounded away from $0,1$ respectively, to prevent the denominator $L(a^*)$ from becoming too small.

\begin{theorem}\label{thm:expMult}
Fix $q\in(0,1)$ and $\beta\in[0,\infty], \gamma\in(0,\infty), \zeta\in(0,(\min\{q,1-q\})^{\frac{1}{\beta+1}}/\gamma), \tau\in(0,\min\{q,1-q\}-(\gamma\zeta)^{\beta+1}]$.

If $\beta<\infty$, then we have
\begin{align*}
\frac{\bE[L(\ha)]-L(a^*)}{L(a^*)}
\le\max\left\{\frac{1}{n(\gamma\zeta)^{\beta+1}\min\left\{q,1-q\right\}},\frac{2}{\gamma\zeta\tau}\left(\frac{1}{\beta+1}+\frac{1}{\sqrt{e}}\right)\left(\frac{1}{2\sqrt{n}}\right)^{\frac{\beta+2}{\beta+1}}\right\}=O\left(n^{-\frac{\beta+2}{2\beta+2}}\right)
\end{align*}
for any $(\beta,\gamma,\zeta)$-clustered distribution satisfying $F(a^*-\zeta)\ge\tau, F(a^*+\zeta)\le1-\tau$ and any number of samples $n$.

If $\beta=\infty$, then we have
\begin{align}\label{eqn:expMultGen}
\sup_{F:\mu(F)<\infty}\frac{\bE[L(\ha)]-L(a^*)}{L(a^*)}
&=\max\left\{\sup_{F\in(0,q)}\frac{q-F}{(1-q)F}\Pr\left[\frac1n\Bin(n,F)\ge q\right],\sup_{F\in[q,1)}\frac{F-q}{q(1-F)}\Pr\left[\frac1n\Bin(n,F) < q\right]\right\}
\end{align}
for any number of samples $n$, where $\Bin(n,F)$ is a binomial random variable with parameters $n$ and $F$.
\end{theorem}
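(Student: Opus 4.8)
The plan is to handle the regimes $\beta\in[0,\infty)$ and $\beta=\infty$ separately.

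\textbf{Case $\beta\in[0,\infty)$.} I would reuse the three-region split of $\bE[L(\ha)]-L(a^*)$ from the proof of \Cref{thm:expAdd} (the regions $[0,a^*-\zeta]$, $[a^*-\zeta,a^*+\zeta]$, $[a^*+\zeta,\infty)$ of \eqref{expTerm1}--\eqref{expTerm3}), but now charge each region against the matching piece of $L(a^*)$ carved out of \eqref{eqn:Loss}. On $[0,a^*-\zeta]$, write the integrand $(q-F(z))\Pr[\hF(z)\ge q]$ as $(1-q)F(z)$ times the ratio $\tfrac{(q-F(z))\Pr[\hF(z)\ge q]}{(1-q)F(z)}$, bound $\Pr[\hF(z)\ge q]$ by Chebyshev exactly as in \eqref{Chebyshev_F}, and use $q-F(z)\ge q-F(a^*-\zeta)\ge(\gamma\zeta)^{\beta+1}$ (from the clustered inequality) to conclude this ratio is at most $\tfrac{1}{(1-q)n(\gamma\zeta)^{\beta+1}}$ (with $F(z)=0$ trivial, as then $\Pr[\hF(z)\ge q]=0$). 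Symmetrically, on $[a^*+\zeta,\infty)$ the integrand is $q(1-F(z))$ times a ratio at most $\tfrac{1}{qn(\gamma\zeta)^{\beta+1}}$ via \eqref{Chebyshev_1-F}. Pulling out the larger constant, these two outer regions contribute at most $c_1:=\tfrac{1}{n(\gamma\zeta)^{\beta+1}\min\{q,1-q\}}$ times $P_{\mathrm{out}}:=\int_0^{a^*-\zeta}(1-q)F(z)\,dz+\int_{a^*+\zeta}^\infty q(1-F(z))\,dz$. For the middle region, \eqref{expTerm2outcome} (which holds for all $n$) bounds its contribution by $\tfrac2\gamma\bigl(\tfrac1{\beta+1}+\tfrac1{\sqrt e}\bigr)\bigl(\tfrac1{2\sqrt n}\bigr)^{\frac{\beta+2}{\beta+1}}$, while the derivation of \eqref{LBL(a*)} gives $P_{\mathrm{in}}:=\int_{a^*-\zeta}^{a^*}(1-q)F(z)\,dz+\int_{a^*}^{a^*+\zeta}q(1-F(z))\,dz\ge\zeta\tau$, so that contribution is at most $c_2:=\tfrac{2}{\gamma\zeta\tau}\bigl(\tfrac1{\beta+1}+\tfrac1{\sqrt e}\bigr)\bigl(\tfrac1{2\sqrt n}\bigr)^{\frac{\beta+2}{\beta+1}}$ times $P_{\mathrm{in}}$. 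Since $P_{\mathrm{out}}+P_{\mathrm{in}}=L(a^*)$ by \eqref{eqn:Loss}, I conclude $\bE[L(\ha)]-L(a^*)\le c_1P_{\mathrm{out}}+c_2P_{\mathrm{in}}\le\max\{c_1,c_2\}L(a^*)$, which is the claimed bound. The one genuinely new idea here is this bookkeeping: letting the fast term pay with the $\zeta$-neighborhood mass of $L(a^*)$ and the $1/n$ tail terms pay with the rest is precisely what turns a sum of bounds into a maximum.

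\textbf{Case $\beta=\infty$.} I would prove both directions of the equality. For "$\le$", use that $n\hF(z)\sim\Bin(n,F(z))$ in \eqref{eqn:expDiff}: on $[0,a^*)$, where $F(z)<q$, the integrand $(q-F(z))\Pr[\hF(z)\ge q]$ equals $(1-q)F(z)$ times a ratio that is at most $A':=\sup_{F\in(0,q)}\tfrac{q-F}{(1-q)F}\Pr[\tfrac1n\Bin(n,F)\ge q]$ (with $F(z)=0$ giving $0\le0$), and symmetrically on $[a^*,\infty)$ the integrand is at most $q(1-F(z))$ times $B':=\sup_{F\in[q,1)}\tfrac{F-q}{q(1-F)}\Pr[\tfrac1n\Bin(n,F)<q]$; integrating and invoking \eqref{eqn:Loss} yields $\bE[L(\ha)]-L(a^*)\le\max\{A',B'\}L(a^*)$. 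For "$\ge$", fix $F^*\in(0,q)$ and take the two-point distribution with mass $F^*$ at some $s>0$ and mass $1-F^*$ at some $t>s$; then $a^*=t$, the second integral of \eqref{eqn:expDiff} vanishes (as $F\equiv1$ on $[t,\infty)$ and $q<1$), and since $\hF(z)=0$ for $z<s$ while $\hF(z)\sim\tfrac1n\Bin(n,F^*)$ for $z\in[s,t)$ one computes $\bE[L(\ha)]-L(a^*)=(t-s)(q-F^*)\Pr[\tfrac1n\Bin(n,F^*)\ge q]$ and $L(a^*)=(t-s)(1-q)F^*$, so the factor $t-s$ cancels and the multiplicative regret equals exactly $\tfrac{q-F^*}{(1-q)F^*}\Pr[\tfrac1n\Bin(n,F^*)\ge q]$; supremizing over $F^*$ gives $\sup_F(\cdot)\ge A'$. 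The mirror construction (mass $F^*\in[q,1)$ at $s>0$ and $1-F^*$ at $t>s$, so $a^*=s$, the first integral vanishes, and the ratio equals $\tfrac{F^*-q}{q(1-F^*)}\Pr[\tfrac1n\Bin(n,F^*)<q]$) gives $\sup_F(\cdot)\ge B'$. The two directions combine to the stated equality.

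I expect the only delicate point to be verifying the two-point constructions in the $\beta=\infty$ lower bound: one must confirm that exactly one of the two integrals in \eqref{eqn:expDiff} vanishes and that the surviving integral and $L(a^*)$ share the factor $t-s$, so the extremal per-unit-mass ratio is realized on the nose. Everything else is a re-accounting of estimates already established for \Cref{thm:expAdd} (Hoeffding/Chebyshev tail bounds, the clustered inequality, and the finiteness of $L(a^*)$) together with the denominator bound $P_{\mathrm{in}}\ge\zeta\tau$ coming from \eqref{LBL(a*)}.
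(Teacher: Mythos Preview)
Your proposal is correct and follows essentially the same approach as the paper. For $\beta<\infty$, the paper likewise splits both numerator and denominator into the three regions and applies the mediant-type inequality $\tfrac{a_1+a_2+a_3}{b_1+b_2+b_3}\le\max_i a_i/b_i$, bounding the outer ratios via Chebyshev and the middle via \eqref{expTerm2outcome} and \eqref{LBL(a*)}; for $\beta=\infty$, the paper's tightness construction is the special case $s=0,\,t=1$ of your two-point family (and it handles only one side, invoking symmetry for the other).
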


The $\beta=\infty$ case was studied in \citet[Thm.~2]{besbes2023big}, who characterized the exact value of $\sup_{F:\mu(F)<\infty}\frac{\bE[L(\ha)]-L(a^*)}{L(a^*)}$ (instead of merely providing an upper bound), showing it to equal the expression in~\eqref{eqn:expMultGen}.
This expression is then shown to be $O(n^{-\frac12})$.
We derive the same expression using a shorter proof that bypasses their machinery, although their machinery has other benefits such as deriving the minimax-optimal policy (which is not SAA).
We note that an exact analysis of the worst-case expected additive regret $\sup_{F:\mu(F)<\infty}(\bE[L(\ha)]-L(a^*))$ is also possible, even in a contextual setting \citep{besbes2023contextual}, but our simplification does not appear to work there.

\begin{proof}[Proof of \Cref{thm:expMult}]
For $\beta\in[0,\infty)$, we begin by using the same decomposition of $\bE[L(\ha)]-L(a^*)$ as in the proof of \Cref{thm:expAdd}. By \eqref{eqn:expDiff},
\begin{align*}
    &\quad\bE[L(\ha)]-L(a^*)\\
    &\le\int_0^{a^*-\zeta}(q-F(z))\Pr[\hF(z)\ge q]dz 
    +\int_{a^*-\zeta}^{a^*+\zeta}|q-F(z)|\exp[-2n|q-F(z)|^2]dz
    +\int_{a^*+\zeta}^\infty(F(z)-q)\Pr[\hF(z)<q]dz\\
    &\le\int_0^{a^*-\zeta}(q-F(z))\Pr[\hF(z)\ge q]dz+\frac{2}{\gamma}\left(\frac{1}{\beta+1}+\frac{1}{\sqrt{e}}\right)\left(\frac{1}{2\sqrt{n}}\right)^{\frac{\beta+2}{\beta+1}}+\int_{a^*+\zeta}^\infty(F(z)-q)\Pr[\hF(z)<q]dz,
\end{align*}
where the last inequality follows from \eqref{expTerm2outcome}.

We similarly decompose $L(a^*)$ into three terms as follows. By \eqref{eqn:Loss},
\begin{align*}
L(a^*) 
&=\int_0^{a^*} (1-q)F(z) dz+\int_{a^*}^\infty q(1-F(z))dz\\
&=\int_0^{a^*-\zeta} (1-q)F(z) dz+\left(\int_{a^*-\zeta}^{a^*} (1-q)F(z) dz+\int_{a^*}^{a^*+\zeta} q(1-F(z))dz\right)+\int_{a^*+\zeta}^\infty q(1-F(z))dz\\
&\ge\int_0^{a^*-\zeta} (1-q)F(z) dz+\tau\zeta+\int_{a^*+\zeta}^\infty q(1-F(z))dz,
\end{align*}
where the last inequality applies \eqref{LBL(a*)} given the assumption that $F(a^*-\zeta)\ge\tau$, $F(a^*+\zeta)\le1-\tau$.

Therefore, we have
\begin{align}
    &\quad\frac{\bE[L(\ha)]-L(a^*)}{L(a^*)} \nonumber\\
    &\le\frac{\int_0^{a^*-\zeta}(q-F(z))\Pr[\hF(z)\ge q]dz+\frac{2}{\gamma}\left(\frac{1}{\beta+1}+\frac{1}{\sqrt{e}}\right)\left(\frac{1}{2\sqrt{n}}\right)^{\frac{\beta+2}{\beta+1}}+\int_{a^*+\zeta}^\infty(F(z)-q)\Pr[\hF(z)<q]dz}{\int_0^{a^*-\zeta} (1-q)F(z) dz+\tau\zeta+\int_{a^*+\zeta}^\infty q(1-F(z))dz} \nonumber\\
    &\le\max\left\{\frac{\int_0^{a^*-\zeta}(q-F(z))\Pr[\hF(z)\ge q]dz}{\int_0^{a^*-\zeta} (1-q)F(z) dz},\frac{2}{\gamma\zeta\tau}\left(\frac{1}{\beta+1}+\frac{1}{\sqrt{e}}\right)\left(\frac{1}{2\sqrt{n}}\right)^{\frac{\beta+2}{\beta+1}},\frac{\int_{a^*+\zeta}^\infty(F(z)-q)\Pr[\hF(z)<q]dz}{\int_{a^*+\zeta}^\infty q(1-F(z))dz}\right\} \nonumber\\
    &\le\max\bigg\{
    \sup_{F\in(0,q-(\gamma\zeta)^{\beta+1}]}\frac{(q-F)\Pr\left[\frac1n\Bin(n,F)\ge q\right]}{(1-q)F}, \nonumber\\
    &\quad\quad\quad\quad~\frac{2}{\gamma\zeta\tau}\left(\frac{1}{\beta+1}+\frac{1}{\sqrt{e}}\right)\left(\frac{1}{2\sqrt{n}}\right)^{\frac{\beta+2}{\beta+1}}, \nonumber\\
    &\quad\quad\quad\quad~\sup_{F\in[q+(\gamma\zeta)^{\beta+1},1)}\frac{(F-q)\Pr\left[\frac1n\Bin(n,F) < q\right]}{q(1-F)}
    \bigg\}, \label{expMultDecompose}
\end{align}
where the last inequality uses $F(a^*-\zeta)\le q-(\gamma\zeta)^{\beta+1}$ and $F(a^*+\zeta)\ge q+(\gamma\zeta)^{\beta+1}$ from the definition of clustered distributions.

Next we analyze the maximum of the first and third terms in \eqref{expMultDecompose}. We derive
\begin{align*}
    &\quad\max\bigg\{
    \sup_{F\in(0,q-(\gamma\zeta)^{\beta+1}]}\frac{(q-F)\Pr\left[\frac1n\Bin(n,F)\ge q\right]}{(1-q)F},
    \sup_{F\in[q+(\gamma\zeta)^{\beta+1},1)}\frac{(F-q)\Pr\left[\frac1n\Bin(n,F) < q\right]}{q(1-F)}
    \bigg\}\\
    &\le\max\left\{\sup_{F\in(0,q-(\gamma\zeta)^{\beta+1}]}\frac{1-F}{n(1-q)(q-F)},\sup_{F\in[q+(\gamma\zeta)^{\beta+1},1)}\frac{F}{nq(F-q)}\right\}\\
    &\le\max\left\{\frac{1}{n(1-q)(\gamma\zeta)^{\beta+1}},\frac{1}{nq(\gamma\zeta)^{\beta+1}}\right\}\\
    &=\frac{1}{n(\gamma\zeta)^{\beta+1}\min\left\{q,1-q\right\}},
\end{align*}
where the first inequality follows from two applications of Chebyshev’s inequality: the first one applies to the first term, as in \eqref{Chebyshev_F}, and the second one applies to the second term after a transformation $\Pr\left[\frac1n\Bin(n,F) < q\right]=\Pr[\frac1n\Bin(n,1-F)\ge 1-q]$, as in \eqref{Chebyshev_1-F}.

Substituting this into \eqref{expMultDecompose}, we have
\begin{align*}
    \frac{\bE[L(\ha)]-L(a^*)}{L(a^*)}
    \le\max\left\{\frac{1}{n(\gamma\zeta)^{\beta+1}\min\left\{q,1-q\right\}},\frac{2}{\gamma\zeta\tau}\left(\frac{1}{\beta+1}+\frac{1}{\sqrt{e}}\right)\left(\frac{1}{2\sqrt{n}}\right)^{\frac{\beta+2}{\beta+1}}\right\}.
\end{align*}

The proof for $\beta=\infty$ is deferred to \Cref{sec:betaInftyExp}, because it is simplifying an existing result from \citet{besbes2023big}.
\end{proof}

\section{Additive Lower Bound} \label{sec:lowerBds}

We now lower-bound the additive regret of any (possibly randomized) data-driven algorithm for Newsvendor, showing it to be $\Omega(n^{-\frac{\beta+2}{2\beta+2}})$ with probability at least 1/3.  This implies that the expected additive regret is also $\Omega(n^{-\frac{\beta+2}{2\beta+2}})$.  The lower bound for multiplicative regret is similar, with the main challenge being to modify the distributions to satisfy $F(a^*-\zeta)\ge\tau,F(a^*+\zeta)\le 1-\tau$, so we defer it to \Cref{sec:additionalLB}.


\begin{theorem} \label{thm:lowAdd}
Fix $q\in(0,1)$ and $\beta\in[0,\infty],\gamma\in(0,\infty),\zeta\in(0,(\min\{q,1-q\})^{\frac{1}{\beta+1}}/\gamma]$.
Any learning algorithm based on $n$ samples makes a decision with additive regret at least
\begin{align*}
\frac{1}{8\max\{\gamma,1\}}\left(\frac{q(1-q)}{3\sqrt{n}}\right)^{\frac{\beta+2}{\beta+1}}=\Omega\left(n^{-\frac{\beta+2}{2\beta+2}}\right)
\end{align*}
with probability at least 1/3 on some $(\beta,\gamma,\zeta)$-clustered distribution that takes values in [0,1].
Therefore, the expected additive regret is at least
\begin{align*}
    \frac{1}{24\max\{\gamma,1\}}\left(\frac{q(1-q)}{3\sqrt{n}}\right)^{\frac{\beta+2}{\beta+1}}=\Omega\left(n^{-\frac{\beta+2}{2\beta+2}}\right).
\end{align*}
\end{theorem}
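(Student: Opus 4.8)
The plan is to use a two-point (Le Cam–style) argument based on constructing two $(\beta,\gamma,\zeta)$-clustered distributions $F_0,F_1$ on $[0,1]$ whose optimal actions $a_0^*,a_1^*$ are separated by a gap $\Delta$, yet whose product measures $F_0^{\otimes n},F_1^{\otimes n}$ are statistically close, so that no algorithm can reliably tell them apart from $n$ samples. Concretely, I would take both CDFs to agree everywhere except on a small interval of width $\Theta(\Delta)$ around the common region near the critical quantile, where one is shifted above $q$ and the other below. To make both distributions $(\beta,\gamma,\zeta)$-clustered while keeping the samples indistinguishable, the natural choice is to have $F(a)-q$ behave like $\pm(\gamma|a-a^*|)^{\beta+1}$ locally (saturating the clustered inequality), which forces the slope/density to be small — of order $\Delta^\beta$ — near $a^*$; this small density is exactly what keeps the Hellinger distance between $F_0$ and $F_1$ down to $O(\sqrt{n}\cdot \Delta^{\beta+1}\cdot\text{something})$ per sample. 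Choosing $\Delta \asymp (q(1-q)/\sqrt n)^{1/(\beta+1)}$ makes the total Hellinger distance (or KL divergence) between the $n$-fold products a constant bounded away from its maximum, so by the standard two-point lower bound any algorithm errs — i.e.\ picks an action on the ``wrong side'' of the separating interval — with probability at least $1/3$ under one of the two distributions.

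The key steps, in order: (1) write down $F_0$ and $F_1$ explicitly as piecewise functions on $[0,1]$, agreeing outside a window $[a^*-c\Delta, a^*+c\Delta]$ and inside the window shaped so that each saturates \eqref{eqn:clustered}; verify both are valid CDFs, take values in $[0,1]$, and are $(\beta,\gamma,\zeta)$-clustered with the given parameters (this needs $\Delta\le\zeta$ and $\Delta\le(\min\{q,1-q\})^{1/(\beta+1)}/\gamma$, which holds for $n$ large, and requires adding a routine caveat for small $n$ where the claimed bound is below a constant and trivially true, or absorbing it into the $\Omega(\cdot)$). (2) Compute, via \eqref{eqn:whpDiff}, a lower bound on $L_0(a)-L_0(a_0^*)$ (resp.\ for $F_1$) valid for every action $a$ on the wrong side: the regret is $\int_{a}^{a_0^*}(q-F_0(z))\,dz$, and since $|q-F_0(z)| \gtrsim (\gamma\,\text{dist})^{\beta+1}$ grows as one moves away from $a_0^*$, integrating over a sub-window of width $\Theta(\Delta)$ gives $\gtrsim \gamma^{\beta+1}\Delta^{\beta+2} \asymp \frac{1}{\max\{\gamma,1\}}(q(1-q)/(3\sqrt n))^{(\beta+2)/(\beta+1)}$, matching the target constant $1/(8\max\{\gamma,1\})$. (3) Bound the Hellinger distance: $H^2(F_0,F_1) = O(\int (\sqrt{f_0}-\sqrt{f_1})^2) = O(\Delta\cdot(\text{density scale}))$; plug in density $\asymp \gamma^{\beta+1}\Delta^\beta$ and $\Delta\asymp(q(1-q)/\sqrt n)^{1/(\beta+1)}$ to get $n\,H^2 = O(1)$ with the constant calibrated (the $3\sqrt n$ and the probability $1/3$ come from choosing constants here). (4) Invoke the two-point inequality: since the two actions' ``good regions'' are disjoint, $\inf_{\text{alg}}\max_{i\in\{0,1\}}\Pr_{F_i}[\text{regret}\ge \text{bound}] \ge \frac12(1 - \text{TV}) \ge \frac12(1 - \sqrt{1 - (1-H^2/2)^{2n}}) \ge 1/3$ for the calibrated constants. (5) Convert the ``with probability $\ge 1/3$'' statement into the expectation bound by Markov: if $X\ge 0$ and $\Pr[X\ge t]\ge 1/3$ then $\bE[X]\ge t/3$, giving the claimed $\frac{1}{24\max\{\gamma,1\}}(q(1-q)/(3\sqrt n))^{(\beta+2)/(\beta+1)}$.

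The main obstacle I anticipate is step (1)–(2) done \emph{simultaneously}: the construction must saturate the clustered inequality tightly enough that the induced density near $a^*$ is genuinely of order $\Delta^\beta$ (not larger) — otherwise the Hellinger bound in step (3) blows up and the rates won't match — while \emph{also} keeping $F_i(z)-q$ large enough away from $a_i^*$ that the integrated regret in step (2) is $\Omega(\Delta^{\beta+2})$ rather than something smaller. These two requirements pull in opposite directions on the shape of $F$ inside the window, and getting a single piecewise profile (likely $F(a)-q = \mathrm{sgn}(a-a^*)\,(\gamma|a-a^*|)^{\beta+1}$ on part of the window, clipped to a constant offset beyond it) that threads both needles with the precise constants $1/8$, $1/3$, $3\sqrt n$ is the delicate part. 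A secondary nuisance is the $\beta=\infty$ endpoint, where the clustered condition degenerates to no restriction and the construction reduces to a classical $1/\sqrt n$ two-point argument with a jump discontinuity at $a^*$; I would handle it either as a limiting case or as a separate one-line construction, and likewise note that $\beta=0$ recovers the $\Omega(1/n)$ bound discussed in the introduction.
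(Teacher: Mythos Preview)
Your overall framework (two-point Le Cam argument, Hellinger-to-TV conversion, coupling to get the $1/3$ probability, and the Markov step for expectation) is exactly the paper's approach. The gap is in step~(3), and it is not a matter of constants: with your own numbers the Hellinger bound does not close.

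You propose $H^2(F_0,F_1)=O(\Delta\cdot\text{density scale})$ with density scale $\asymp\gamma^{\beta+1}\Delta^\beta$ and $\Delta\asymp(q(1-q)/\sqrt n)^{1/(\beta+1)}$. Plugging in gives $H^2=O(\gamma^{\beta+1}\Delta^{\beta+1})=O(1/\sqrt n)$, so $nH^2=O(\sqrt n)$, not $O(1)$. This is not a loose estimate: if each $F_i$ saturates the clustered inequality around its own $a_i^*$, the densities $f_i(a)\propto|a-a_i^*|^\beta$ genuinely differ by their full magnitude throughout the window (near $a_0^*$ you have $f_0\approx0$ but $f_1\asymp\Delta^\beta$, and vice versa), and a direct computation gives $\int(\sqrt{f_0}-\sqrt{f_1})^2\asymp\Delta^{\beta+1}$. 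If you instead shrink $\Delta$ to force $nH^2=O(1)$, the regret lower bound drops to $n^{-(\beta+2)/(\beta+1)}$, strictly weaker than the target $n^{-(\beta+2)/(2\beta+2)}$. So the ``saturated'' profile is exactly the wrong shape for the Hellinger step, and the tension you flag in your obstacle paragraph is real and fatal for that construction.

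The paper resolves this by \emph{not} saturating. It takes both CDFs to be \emph{linear} on the window $[0,H]$ with the \emph{same} slope $C/(H\sqrt n)$, so the two densities are identical there; the entire difference between $F_P$ and $F_Q$ is a constant offset $C/\sqrt n$, which shows up only as point masses at $0$ and $H$ of size $q,\,1-q$ (up to $O(1/\sqrt n)$) that differ by $C/\sqrt n$. Hellinger for such point masses is $(\sqrt q-\sqrt{q-C/\sqrt n})^2\asymp C^2/(nq)$, giving $H^2=O(1/n)$ and $nH^2=O(1)$ as needed. The linear CDF still satisfies the $(\beta,\gamma,\zeta)$-clustered condition because the slope is tuned so that $|F(z)-q|=\text{slope}\cdot|z-a^*|\ge(\gamma|z-a^*|)^{\beta+1}$ on the window, with equality only at the far edge $|z-a^*|=H$. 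This simultaneously keeps the regret integral in step~(2) at order $\Delta^{\beta+2}$ (the linear integrand gives $CH/(8\sqrt n)$) and makes the Hellinger computation a two-term sum over atoms rather than an integral over densities. Your steps (2), (4), (5) then go through essentially as you wrote them, and no small-$n$ caveat is needed.
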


\begin{proof}[Proof of \Cref{thm:lowAdd}]
Let $C=\frac{q(1-q)}{3}$, $H=\frac{1}{\max\{\gamma,1\}}\left(\frac{C}{\sqrt{n}}\right)^{\frac{1}{\beta+1}}$. Consider two distributions $P$ and $Q$, whose respective CDF functions $F_P$ and $F_Q$ are:
\begin{align*}
    F_P(z)&=
    \begin{cases}
        0, &z\in(-\infty, 0)\\
        q+z\frac{C}{H\sqrt{n}}, &z\in[0,H)\\
        1,&z\in[H,\infty);
    \end{cases}\\
    F_Q(z)&=
    \begin{cases}
         0, &z\in(-\infty, 0)\\
        q+z\frac{C}{H\sqrt{n}}-\frac{C}{\sqrt{n}}, &z\in[0,H)\\
        1,&z\in[H,\infty).
    \end{cases}
\end{align*}
We let $L_P(a)$ and $L_Q(a)$ denote the respective expected loss functions under true distributions $P$ and $Q$, and from the CDF functions, it can be observed that the respective optimal decisions are $a_P^*=0$ and $a_Q^*=H$. We now show that any learning algorithm based on $n$ samples will incur an additive regret at least $\frac{1}{8\max\{\gamma,1\}}\left(\frac{q(1-q)}{3\sqrt{n}}\right)^{\frac{\beta+2}{\beta+1}}$ with probability at least 1/3, on distribution $P$ or $Q$.

\paragraph{Establishing validity of distributions.}
First we show that both $P$ and $Q$ are $(\beta,\gamma,\zeta)$-clustered distributions. 
Note that the constraint $\zeta\in(0,(\min\{q,1-q\})^{\frac{1}{\beta+1}}/\gamma]$ ensures that any $z\in[a^*-\zeta,a^*+\zeta]$ with $F(z)=0$ or $F(z)=1$ would satisfy \eqref{eqn:clustered}; therefore it suffices to verify \eqref{eqn:clustered} on $z\in[0,H)$ for both $P$ and $Q$. For distribution $P$, which has $a^*=0$, we have
\begin{align*}
    |F_P(z)-q|
    =z\frac{C}{H\sqrt{n}}
    =\frac{z}{H}(\max\{\gamma,1\} H)^{\beta+1}
    =z\max\{\gamma,1\}^{\beta+1}H^{\beta}
    >(\gamma z)^{\beta+1}
    =(\gamma|z-0|)^{\beta+1}
\end{align*}
for all $z\in[0,H)$, where the second equality follows from $\frac{C}{\sqrt{n}}=(\max\{\gamma,1\} H)^{\beta+1}$ and the inequality applies $H>z$. Therefore $P$ is a $(\beta,\gamma,\zeta)$-clustered distribution. 
It can be verified by symmetry that $Q$ is also a $(\beta,\gamma,\zeta)$-clustered distribution.

In addition, it can be elementarily observed that
\begin{align*}
\lim_{z\to H^-} F_P(z)=q+\frac{q(1-q)}{3\sqrt{n}} &\le q+\frac{1-q}3=1-\frac23 (1-q) < 1
\\ F_Q(0)=q-\frac{q(1-q)}{3\sqrt{n}} &\ge q-\frac q3=\frac23 q > 0
\end{align*}
which ensures the monotonicity of the CDF's for $P$ and $Q$.

Finally, it is easy to see that $H\le1$, and hence both distributions $P$ and $Q$ take values in $[0,1]$.

\paragraph{Upper-bounding the probabilistic distance between $P$ and $Q$.}
We analyze the squared Hellinger distance between distributions $P$ and $Q$, denoted as $\mathrm{H}^2(P,Q)$.  Because $P$ and $Q$ only differ in terms of their point masses on $0$ and $H$, standard formulas
for Hellinger distance yield
\begin{align}
\mathrm{H}^2(P,Q)
&=\frac12\left(\left(\sqrt{q}-\sqrt{q-\frac{C}{\sqrt{n}}}\right)^2+\left(\sqrt{1-q-\frac{C}{\sqrt{n}}}-\sqrt{1-q}\right)^2\right) \label{eqn:hellDef}\\
&=\frac12\left(q+q-\frac{C}{\sqrt{n}}-2\sqrt{q\left(q-\frac{C}{\sqrt{n}}\right)}+1-q+1-q-\frac{C}{\sqrt{n}}-2\sqrt{(1-q)\left(1-q-\frac{C}{\sqrt{n}}\right)}\right) \label{additiveHellinger}\\
&=\frac12\left(-\frac{2C}{\sqrt{n}}+2q-2q\sqrt{1-\frac{C}{q\sqrt{n}}}+2(1-q)-2(1-q)\sqrt{1-\frac{C}{(1-q)\sqrt{n}}}\right) \nonumber\\
&\le\frac12\left(-\frac{2C}{\sqrt{n}}+2q\left(\frac{C}{2q\sqrt{n}}+\frac{C^2}{2q^2n}\right)+2(1-q)\left(\frac{C}{2(1-q)\sqrt{n}}+\frac{C^2}{2(1-q)^2n}\right)\right) \nonumber\\
&=\frac12\left(\frac{C^2}{qn}+\frac{C^2}{(1-q)n}\right) \nonumber\\
&=\frac{C^2}{2nq(1-q)}, \label{hellinger}
\end{align}
where the inequality follows from applying $1-\sqrt{1-x}\le\frac x2+\frac{x^2}{2}$, $\forall x\in[0,1]$.
We note that we are substituting in $x=\frac{C}{q\sqrt{n}}$ and $x=\frac{C}{(1-q)\sqrt{n}}$, which are at most $1$ because $C=q(1-q)/3$.

Let $P^n$ denote the distribution for the $n$ samples observed by the algorithm under distribution $P$, and let $Q^n$ denote the corresponding distribution under $Q$. Let $\mathrm{TV}(P^n,Q^n)$ denote the total variation distance between $P^n$ and $Q^n$. By a relationship\footnote{Some sources such as \citet[Thm~2.2]{tsybakov2009introduction} use a tighter upper bound of $\sqrt{2\mathrm{H}^2(P^n,Q^n)(1-\mathrm{H}^2(P^n,Q^n)/2)}$ on $\mathrm{TV}(P^n,Q^n)$ (noting that their definition of $\mathrm{H}^2(P,Q)$ also differs, by not having the coefficient "1/2" in~\eqref{eqn:hellDef}).  The weaker upper bound of $\sqrt{2\mathrm{H}^2(P^n,Q^n)}$ as used in \citet{guo2021generalizing} will suffice for our purposes.} between the total variation distance and Hellinger distance,
we have
    $\mathrm{TV}(P^n,Q^n)\le\sqrt{2\mathrm{H}^2(P^n,Q^n)}$,
which is at most $\sqrt{2n\mathrm{H}^2(P,Q)}$ according to the additivity of the Hellinger distance.
By applying \eqref{hellinger}, we obtain
\begin{align*}
    \mathrm{TV}(P^n,Q^n)
    \le\frac{C}{\sqrt{q(1-q)}}
    =\frac{\sqrt{q(1-q)}}{3}
    &\le\frac13.
\end{align*}

\paragraph{Lower-bounding the expected regret of any algorithm.}
Fix any (randomized) algorithm for data-driven Newsvendor, and consider the sample paths of its execution on the distributions $P$ and $Q$ side-by-side.  The sample paths can be coupled so that the algorithm makes the same decision for $P$ and $Q$ on an event $E$ of measure $1-\mathrm{TV}(P^n,Q^n)\ge 2/3$, by definition of total variation distance.
Letting $A_P,A_Q$ be the random variables for the decisions of the algorithm on distributions $P,Q$ respectively,
we have that $A_P$ and $A_Q$ are identically distributed conditional on $E$.
Therefore, either $\Pr[A_P\ge\frac H2|E]=\Pr[A_Q\ge\frac H2|E]\ge1/2$ or $\Pr[A_P\le\frac H2|E]=\Pr[A_Q\le\frac H2|E]\ge 1/2$.

First consider the case where $\Pr[A_P\ge\frac H2|E]=\Pr[A_Q\ge\frac H2|E]\ge1/2$.
Note that if $A_P\ge\frac H2$, then we can derive from~\eqref{eqn:whpDiff} that under the true distribution $P$,
\begin{align*}
L_P(A_P) - L_P(a^*_P)
&=\int_0^{A_P}(F_P(z)-q)dz\\
&\ge\int_0^{\frac H2}(F_P(z)-q)dz\\
&=\int_0^{\frac H2} z\frac{C}{H\sqrt{n}} dz\\
&=\frac{CH}{8\sqrt{n}}\\
&=\frac{1}{8\max\{\gamma,1\}}\left(\frac{q(1-q)}{3\sqrt{n}}\right)^{\frac{\beta+2}{\beta+1}}.
\end{align*}
Therefore, we would have
\begin{align*}
\Pr\left[L_P(A_P)-L_P(a^*_P)\ge\frac{1}{8\max\{\gamma,1\}}\left(\frac{q(1-q)}{3\sqrt{n}}\right)^{\frac{\beta+2}{\beta+1}}\right]
&\ge\Pr\left[A_P\ge\frac H2\right]\\
&\ge\Pr\left[A_P\ge\frac H2\bigg|E\right]\Pr[E]\\
&\ge\frac12 \cdot\frac 23=\frac13.
\end{align*}

Now consider the other case where $\Pr[A_P\le\frac H2|E]=\Pr[A_Q\le\frac H2|E]\ge 1/2$.
If $A_Q\le\frac H2$, then we can similarly derive from~\eqref{eqn:whpDiff} that under the true distribution $Q$,
\begin{align*}
L_Q(A_Q) - L_Q(a^*_Q)
&=\int_{A_Q}^H(q-F_Q(z))dz\\
&\ge\int_{\frac H2}^H(q-F_Q(z))dz\\
&=\int_{\frac H2}^H\left(\frac{C}{\sqrt{n}}- z\frac{C}{H\sqrt{n}}\right) dz\\
&=\frac{CH}{8\sqrt{n}}\\
&=\frac{1}{8\max\{\gamma,1\}}\left(\frac{q(1-q)}{3\sqrt{n}}\right)^{\frac{\beta+2}{\beta+1}}.
\end{align*}
The proof then finishes analogous to the first case.
\end{proof}

\section{Simulations}
\label{sec:simulation}

In this section, we conduct simulations using several commonly-used demand distributions to illustrate how our theory characterizes the regrets of SAA decisions in data-driven Newsvendor. These simulations serve not only to validate our theory, but also to demonstrate how our framework can predict (based on the empirical distribution) which distributions are likely to incur the most regret.

We numerically compute the expected additive regrets for several distributions under different values of $q$ and number of samples $n$. Additionally, we provide the 95th percentile of additive regret in \Cref{apx:simulation_supp}, which represents the high-probability additive regret with $\delta=0.05$.
By comparing these simulation results with the relative order of $\beta$ across distributions, we show that our theory largely captures the comparison of regrets across distributions as the number of samples grows. In particular, it helps explain the \enquote{crossover points} in the regret curves, a phenomenon that previous theories relying on lower-bounding the PDF \citep{besbes2013implications,lin2022data} fail to capture.

\paragraph{Setup.} The distributions we use are taken from \citet[Table 4]{besbes2023big}. We provide the details of these distributions and plot their CDFs in \Cref{fig:cdf}, where the dashed lines represent $q=0.4$ and $q=0.9$ as considered in the simulations.
The number of samples, $n$, ranges from 1 to 200 for both $q=0.4$ and $q=0.9$, with results computed every 5 points.
To approximate the expected additive regret under a fixed $n$, we randomly generate $n$ samples from the underlying distribution to compute $\ha$, calculate the exact value of $L(\ha)-L(a^*)$ by \eqref{eqn:whpDiff}, and then average over $10,000$ repetitions. 

\begin{figure}[]
    \centering
    \includegraphics[width=0.5\linewidth]{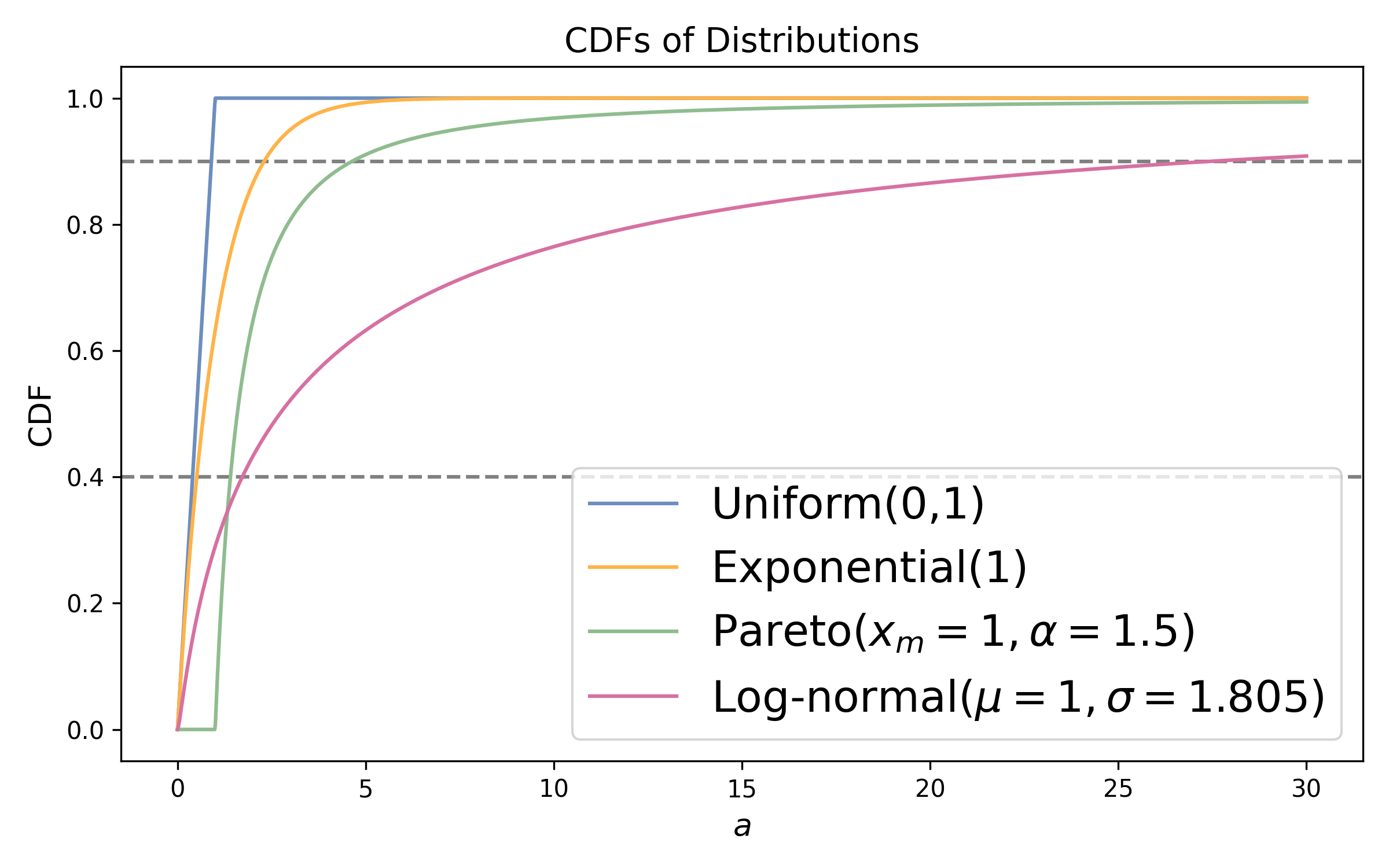}
    \caption{The CDFs of the distributions with $q=0.4$ and $q=0.9$. }
    \label{fig:cdf}
\end{figure}

Here we do not specify values of $\beta$ for each distribution, as it depends on the values of $\gamma$ and $\zeta$. Instead, we use the following as a proxy for $\beta$:
\begin{align} \label{eqn:DeltaEps}
\Delta(\eps):=\max\{F^{-1}(\min\{q+\eps,1\})-a^*,a^*-F^{-1}(\max\{q-\eps,0\})\}.
\end{align}
To explain why, let $\eps=|F(\ha)-q|$.  This means that $F(\ha)$ equals either $q-\eps$ or $q+\eps$, which implies that $\Delta(\eps)$ is an upper bound on $|\ha-a^*|$, noting that $\ha=F^{-1}(F(\ha))$ for the continuous distributions being considered.  Our definition of clustered distributions~\eqref{eqn:clustered} is satisfied if $\Delta(\eps)\le \frac1\gamma \eps^{\frac{1}{\beta+1}}$, which requires a larger $\beta$ for larger values of $\Delta(\eps)$ (assuming a fixed value of $\gamma$).  This explains why here we use $\Delta(\eps)$ as a proxy for $\beta$.
In the latter part of \Cref{apx:full_spectrum_egs}, we do provide formulas for $\beta$ in terms of  $\gamma$ and $\zeta$ and use them to show some concrete values for $\beta$.

Of course, comparing $\Delta(\eps)$ across distributions requires setting a value of $\eps$, just like comparing $\beta$ requires setting a value of $\zeta$.  Both parameters $\eps,\zeta$ can be interpreted as the "reasonable range of error" under a fixed value of $n$, in the quantile and decision spaces respectively.
Instead of trying to define an exact conversion from $n$ to $\eps$, we simply note that $\eps$ would shrink\footnote{The error in the quantile space, $\eps$, would shrink roughly at the rate of $1/\sqrt{n}$ (by the DKW inequality; see the beginning of our proof of \Cref{thm:hpAdd}).
} as $n$ grows, and look at how the curves $\Delta(\eps)$ of different distributions intersect as $\eps$ shrinks.

\begin{figure}[t!]
    \centering
    \begin{subfigure}{0.48\textwidth}
        \centering
        \includegraphics[width=\linewidth]{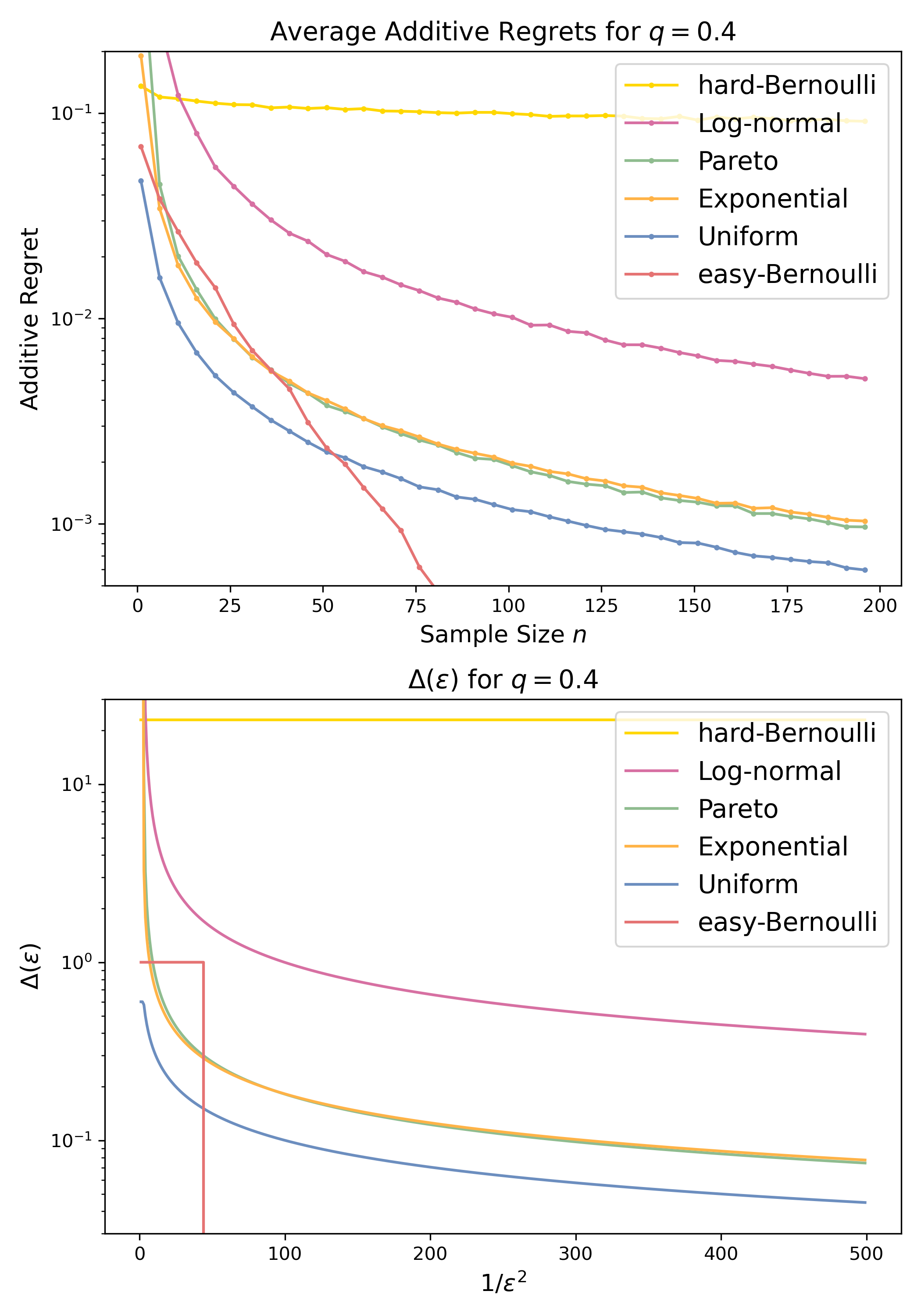} 
        \caption{$q=0.4$}
        \label{fig:addMeanA}
    \end{subfigure}
    \hfill
    \begin{subfigure}{0.48\textwidth}
        \centering
        \includegraphics[width=\linewidth]{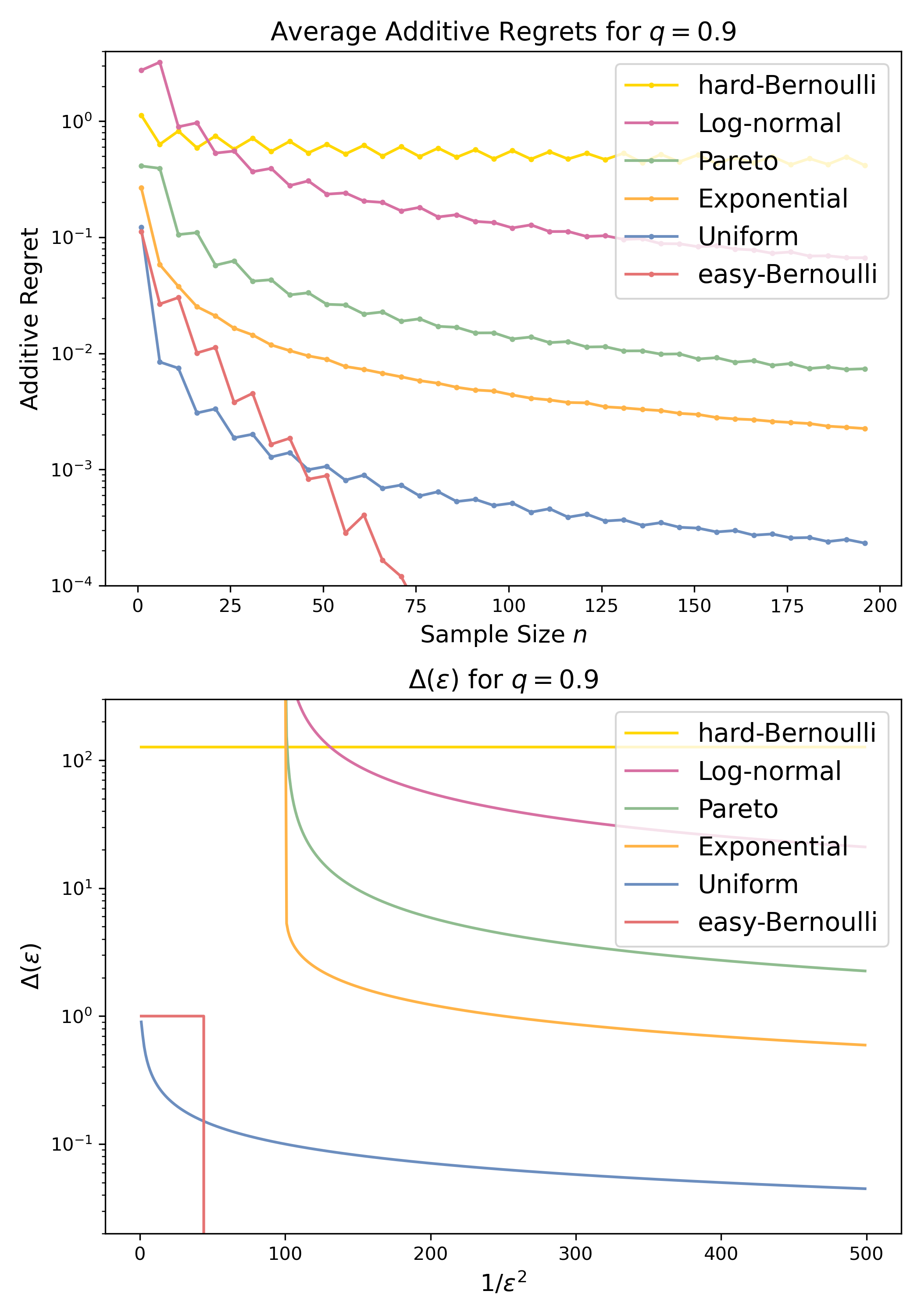}
        \caption{$q=0.9$}
        \label{fig:addMeanB}
    \end{subfigure}
\caption{Average additive regrets (top) and the values of $\Delta(\eps)$ (bottom) for the distributions under $q=0.4$ and $q=0.9$. 
Note that all vertical axes are plotted on a logarithmic scale. Also note that the horizontal axes for the $\Delta(\eps)$ plots are $1/\eps^2$, with $\eps$ decreasing as one moves to the right, reflecting the scaling that $\eps$ is roughly $1/\sqrt{n}$.}
\label{fig:addMean}
\end{figure}

\paragraph{Results.}
Importantly, the intersections in the $\Delta(\eps)$ curves \textit{are consistent with} the intersections in the numerical regret curves of the different distributions as $n$ grows, as shown in \Cref{fig:addMean}.
To elaborate, $\Delta(\eps)$ reflects the learning difficulty for a distribution (under a quantile $q$) when the error in the quantile space is $\eps$.  \Cref{fig:addMeanB} ($q=0.9$) shows that if the $\Delta(\eps)$ curves follow a consistent order at every value of $\eps$, then the corresponding numerical regret curves follow the same order at every value of $n$.  That is, the order of regret from lowest to highest is Uniform $<$ Exponential $<$ Pareto $<$ Log-normal at every value of $n$, which is explained by the $\Delta(\eps)$ curves being ordered Uniform $<$ Exponential $<$ Pareto $<$ Log-normal at every value of $\eps$.  Meanwhile, \Cref{fig:addMeanA} ($q=0.4$) has the $\Delta(\eps)$ curves for the Exponential, Pareto distributions cross at some value of $n$.  Consistently, the corresponding regret curves also cross at some value of $n$, with Exponential being easier to learn under large $\eps$ (small $n$) but harder to learn under small $\eps$ (large $n$).

\Cref{fig:addMean} also includes two Bernoulli distributions, to demonstrate more nuanced crossing points.
For both $q=0.4$ and $q=0.9$, we include an "easy-to-learn" Bernoulli distribution whose probability is far away from $1-q$, and a "hard-to-learn" Bernoulli distribution whose probability is close to $1-q$
(see \Cref{apx:simulation_supp} for details).
We again observe consistent
crossing points in \Cref{fig:addMean}, such as between the easy-Bernoulli and Uniform distributions when $q=0.4$, and between the Log-normal and hard-Bernoulli distributions for both $q=0.4$ and $q=0.9$.
We note that it is possible for $\Delta(\eps)$ curves to cross multiple times, such as between the easy-Bernoulli and Exponential distributions when $q=0.4$, explaining why easy-Bernoulli has lower average numerical regret than Exponential when $n$ is small or big but higher numerical regret for an intermediate range of $n$.
We note that similar patterns are observed for the 95th percentile regrets, as shown in \Cref{apx:simulation_supp}.

\paragraph{Discussion and limitations.}
Our theory does not explain all crossing points observed in the numerical regret curves: when $q=0.9$, the regret curves of easy-Bernoulli and Uniform cross twice while their $\Delta(\eps)$ curves only cross once; on the other hand, the regret curves of Exponential and hard-Bernoulli do not cross even though their $\Delta(\eps)$ curves do.
Indeed, the actual numerical regret depends on the entire distribution and how this affects the understocking/overstocking costs, not just the inverse CDF values at $q\pm\eps$ that form the basis of our $\Delta(\eps)$ curves and our notion of $(\beta,\gamma,\zeta)$-clustered distributions.
That being said, our theory better captures actual numerical regrets than previous notions based on lower-bounding the PDF \citep{besbes2013implications,lin2022data}.  To elaborate, instead of having our $\Delta(\eps)$ plots based on the CDF, they could draw a similar plot of the minimum PDF value over a range $[a^*-\zeta,a^*+\zeta]$ for progressively shrinking $\zeta$.
However, if the minimum PDF value occurs at $a^*$, then their plot would be constant in $\zeta$, and hence be unable to explain crossover points in the regrets (see \Cref{apx:simulation_supp} for a full example).

Another key implication of our simulation results is that the property of being $(\beta,\gamma,\zeta)$-clustered is not only intrinsic to the underlying distribution, but also related to the number of samples $n$ we consider. Theoretically, any continuous distribution with a density at least $\gamma$ on a small neighborhood around $a^*$ is $(0,\gamma,\zeta)$-clustered for some sufficiently small $\zeta$. However, the simulations show that the value of $\zeta$ of interest is a variable that is decreasing in $n$, making it unreasonable to take arbitrarily small $\zeta$ for obtaining a smaller $\beta$. This point highlights the necessity of considering the entire spectrum of $\beta$ between 0 and $\infty$, even if could technically be $(\beta,\gamma,\zeta)$-clustered with $\beta=0$.

Finally, we acknowledge that our notion of clustered distributions cannot fully explain numerical multiplicative regrets. This is because the ordering of multiplicative regrets is strongly influenced by the ordering of $L(a^*)$, which depends on the entire CDF and is inherently arbitrary. Our theory emphasizes that the accumulation of the CDF in the small interval around $a^*$ plays a significant role in regret. In contrast, $L(a^*)$ is highly sensitive to the long tail of the distribution, which is largely unrelated to our theory and its focus on the local behavior near $a^*$.

\section{Conclusion}

We provide a survey of results and (simplified) proofs for data-driven Newsvendor, including both upper and lower bound analyses, varying along several dimensions: additive vs.\ multiplicative regret, high-probability vs.\ expectation bounds, and different distribution classes.
We introduce a notion of \textit{clustered distributions} based on the CDF, which shows the entire spectrum of convergence rates between $1/\sqrt{n}$ and $1/n$ is possible, and is also a useful predictor of empirical regret in simulations.
We hope this can be a useful reference for future scholars of data-driven Newsvendor, which can be broadly viewed as a foundation for data-driven decision making in Operations Research.

\paragraph{Acknowledgments.} The authors thank Meichun Lin for identifying an earlier error in a comparison to \citet{lin2022data}, Omar Besbes and Omar Mouchtaki for useful discussions related to the paper, and the anonymous referees and associate editor of \textit{Operations Research} as well as area editor Dennis Zhang for their comprehensive comments that improved the paper.



\bibliographystyle{abbrvnat}
\bibliography{bibliography}

\clearpage
\appendix
\crefalias{section}{appendix} 

\section{Examples of Clustered Distributions}
\label{apx:full_spectrum_egs}
\paragraph{Full spectrum of minimum possible $\beta$.}
We provide a theoretical construction of a distribution whose minimum possible $\beta$ can take any value in $[0,\infty]$. For a given $q$ and $\beta$, the simplest construction is to enforce equality in \eqref{eqn:clustered}, where $a^*,\gamma$, and $\zeta$ can be set to any desired values, as long as they satisfy $\zeta\le\frac1\gamma(\min\{q,1-q\})^{\frac{1}{\beta+1}}$. 
We prove by contradiction that it is impossible to achieve a smaller $\beta$ by adjusting $\gamma$ and $\zeta$ for this type of distribution. For a $(\beta, \gamma, \zeta)$-clustered distribution that satisfies equality in \eqref{eqn:clustered}, we have
\begin{align}
|a-a^*|&=\frac1\gamma|F(a)-q|^{\frac{1}{\beta+1}} &\forall a\in[a^*-\zeta,a^*+\zeta].
\label{eqn:smallestBeta}
\end{align}
Now, suppose this distribution is also a $(\beta',\gamma',\zeta')$-clustered distribution for some $\beta'<\beta$. Substituting \eqref{eqn:smallestBeta} into \eqref{eqn:clustered}, we obtain
\begin{align*}
\frac{\gamma'^{\beta'+1}}{\gamma^{\beta+1}}&\le|a-a^*|^{\beta-\beta'} &\forall a\in[a^*-\min\{\zeta,\zeta'\},a^*)\cup(a^*,a^*+\min\{\zeta,\zeta'\}].
\end{align*}
When $a\to a^*$, since $\beta-\beta'>0$, the right-hand side tends to 0, while the left-hand side remains a positive constant. This leads to a contradiction, and thus no smaller $\beta$ exists.

\begin{figure}[t!]
    \centering
    \includegraphics[width=0.4\linewidth]{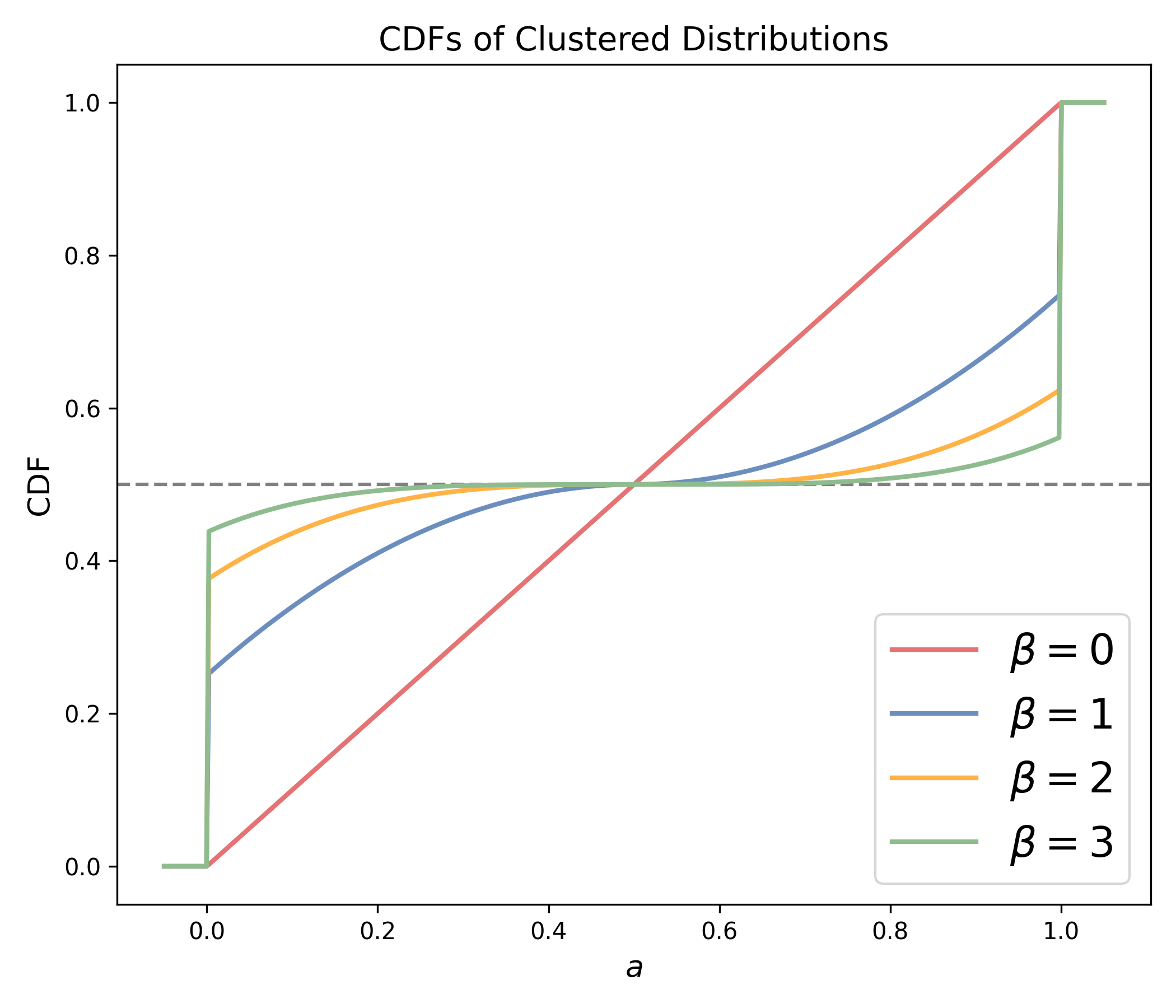}
    \caption{CDFs of the constructed distribution for four specific values of the minimum possible $\beta$ (0, 1, 2, and 3), where the parameters are set as $q = 0.5$, $a^* = 0.5$, $\gamma = 1$, and $\zeta = 0.5$.}
    \label{fig:clusteredCDF}
\end{figure}

We provide an example of this construction in \Cref{fig:clusteredCDF}, where $q=0.5,a^*=0.5, \gamma=1,$ and $\zeta=0.5$. With these parameters, the minimum possible value of $\beta$ can take any value within the range $[0, \infty]$. In \Cref{fig:clusteredCDF}, we show the CDFs of the constructed distribution for four specific values of the minimum possible $\beta$: 0, 1, 2, and 3.

\paragraph{Computing $\beta,\gamma,\zeta$ for specific distributions.}

In this part, we show how to determine the exact values of $\beta,\gamma,\zeta$
for specified distributions and given values of $q$ and $n$, using the distributions in \Cref{sec:simulation} as examples. We note that for most distributions, the minimum possible value of $\beta$ can be uniquely determined only after $\gamma$ and $\zeta$ are fixed.
Therefore, in the results below, we fix $\gamma$ and choose $\zeta$ based on the number of samples $n$ first.

To determine an appropriate value of $\zeta$, the key observation is that the term $|F(a)-q|$ in the definition of clustered distributions \eqref{eqn:clustered} converges at the rate $O(1/\sqrt{n})$ by the DKW inequality. Hence, for a given $n$, a reasonable approximation is to take the smallest $\zeta$ such that either
$F(a^*+\zeta)-q\ge\min\{\frac{1}{\sqrt{n}},1-q\}$ 
or $q-F(a^*-\zeta)\ge\min\{\frac{1}{\sqrt{n}},q\}$.
This leads to 
\begin{align*}
\zeta=\max\left\{
F^{-1}\left(\min\left\{q+\frac{1}{\sqrt{n}},1\right\}\right)-a^*,
a^*-F^{-1}\left(\max\left\{q-\frac{1}{\sqrt{n}},0\right\}\right)
\right\}.
\end{align*}
We note that this expression can be obtained directly by substituting $\eps=1/\sqrt{n}$ into \eqref{eqn:DeltaEps}.

With fixed $\gamma$ and $\zeta$, we can determine the minimum possible value of $\beta$ that satisfies the definition of clustered distributions \eqref{eqn:clustered}. 
In \Cref{tab:formula}, we present the resulting analytical expressions for $\zeta$ and $\beta$ for several distribution families, written explicitly in terms of the corresponding distribution parameters. These results cover all distribution families considered in \Cref{sec:simulation}, except for the Log-normal distribution. Log-normal CDFs do not admit closed-form elementary expressions, which makes it difficult to derive general analytical expressions for the minimum possible $\beta$. Nevertheless, for fixed values of $q$, $n$, $\gamma$, and the parameters of a Log-normal distribution, we compute the corresponding values of $\zeta$ and $\beta$ numerically, and report them in \Cref{tab:value} (see the next paragraph).

\begin{table}[t!]
    \centering
    \begin{tabular}{|c|c|c|}
    \hline
Distribution & $\zeta$ & $\beta$ ($\gamma\zeta\le1$)
\tablefootnote{The definition of clustered distributions indicates $\zeta\le\frac1\gamma(\min\{q,1-q\})^{\frac{1}{\beta+1}}$, which means that there is no solution for $\beta$ when $\gamma\zeta>1$. Therefore in this table we assume that the fixed $\gamma$ satisfies $\gamma\zeta\le1$.
} \\ \hline
\makecell{Bernoulli $c\cdot\mathrm{Ber}(p)$\\  $F(a)=\begin{cases}
    0,&a<0\\1-p,&a\in[0,c)\\1,&a\ge c
\end{cases}$}
& \makecell[l]{
$
\begin{cases}
0, &\text{if }\frac{1}{\sqrt{n}}<|1-p-q|\\
&\text{or } \frac{1}{\sqrt{n}}=1-p-q\\
c, &\text{otherwise}
\end{cases}
$}
& \makecell[l]{\hspace*{0.0em}
$\begin{cases}
\infty, &\text{if }p+q=1\\
&\text{or }\gamma\zeta=1\\
\frac{\ln|1-p-q|}{\ln\gamma+\ln c}-1, &\text{if }p+q\ne1\\
&\text{and }\gamma\zeta\in(\frac{1}{\sqrt{n}},1)\\
0, &\text{otherwise}
\end{cases}
$} 
\\ \hline
\makecell{Uniform$(b_1,b_2)$ \\ $F(a)=\frac{a-b_1}{b_2-b_1}$\\$\forall a\in[b_1,b_2]$} 
& 
\makecell[l]{
$
\begin{cases}
\frac{b_2-b_1}{\sqrt{n}}, & \text{if } \frac{1}{\sqrt{n}}\le \max\{q,1-q\} \\ 
\makebox[0pt][l]{$(b_2-b_1)\max\{q,1-q\}$,} \\
& \text{otherwise}
\end{cases}
$
}
& \multirow[l]{2}{*}{
\makecell[l]{\\ \\
$\begin{cases}
\infty, &\text{if }\gamma\zeta=1\\
-\frac{\ln n}{2(\ln\gamma+\ln\zeta)}-1,&\text{if }\gamma\zeta\in(\frac{1}{\sqrt{n}},1)\\
0, &\text{otherwise}
\end{cases}$
}}
\\ \cline{1-2}
\makecell{Exponential$(\lambda)$ \\ $F(a)=1-e^{-\lambda a}$ \\ $\forall a>0$} 
& \makecell[l]{
$\begin{cases}
\frac1\lambda\ln\frac{1-q}{1-q-\frac{1}{\sqrt{n}}},
&\text{if }\frac{1}{\sqrt{n}}<1-q\\
\infty, &\text{otherwise}
\end{cases}$}
& \\ \cline{1-2}
\makecell{Pareto$(x_m,\alpha)$\\$F(a)=1-\left(\frac{x_m}{a}\right)^\alpha$\\$\forall a\ge x_m$} 
& \makecell[l]{
$\begin{cases}
\makebox[0pt][l]{$\frac{x_m}{(1-q-\frac{1}{\sqrt{n}})^{\frac1\alpha}}-\frac{x_m}{(1-q)^{\frac{1}{\alpha }}}$,}\\
&\text{if }\frac{1}{\sqrt{n}}<1-q\\
\infty, &\text{otherwise}
\end{cases}$
} 
& \\ \hline
\end{tabular}
\caption{
Expressions for $\zeta$ and $\beta$ for selected distribution families considered in \Cref{sec:simulation}, under fixed values of $q$, $n$, and $\gamma$. The expressions are written in terms of the parameters of each distribution family. 
}
\label{tab:formula}
\end{table}

Based on \Cref{tab:formula}, we compute the exact values of $\zeta$ and $\beta$ for all distributions considered in \Cref{sec:simulation}. The results are reported in \Cref{tab:value}. 
Consistent with the setup in \Cref{sec:simulation}, we report values for $q=0.4$ and $q=0.9$.
For each value of $q$, we present results for both a small number of samples ($n=11$) and a large number of samples ($n=196$). The values of $n$ are chosen to remain within the range 1 to 200 used in \Cref{sec:simulation} and correspond to points where we computed regrets (every 5 integers, starting with 1).
The fixed value of $\gamma$ is smaller when $q=0.9$, which reflects the smaller slope at $a^*$ for the Exponential, Pareto, and Log-normal distributions.

\begin{table}[t!]
\centering
\begin{tabular}{|c|*{8}{>{\centering\arraybackslash}p{0.8cm}|}}
\hline
\multirow{3}{*}{Distribution} & \multicolumn{4}{c|}{$q=0.4~(\gamma=1)$} & \multicolumn{4}{c|}{$q=0.9~ (\gamma=0.1)$} \\
\cline{2-9}
& \multicolumn{2}{c|}{$n=11$} & \multicolumn{2}{c|}{$n=196$} & \multicolumn{2}{c|}{$n=11$} & \multicolumn{2}{c|}{$n=196$} \\
\cline{2-9}
& $\zeta$ & $\beta$ & $\zeta$ & $\beta$ & $\zeta$ & $\beta$ & $\zeta$ & $\beta$ \\
\hline
Easy-Bernoulli & 1 & $\infty$ & 0 & 0 & 1 & 0 & 0 & 0 \\
\hline
Uniform$(0,1)$ & 0.30 & 0 & 0.07 & 0 & 0.30 & 0 & 0.07 & 0 \\
\hline
Exponential$(1)$ & 0.70 & 2.34 & 0.13 & 0.28 & $\infty$ & N/A & 1.25 & 0.27 \\
\hline
Pareto$(1,1.5)$ & 0.83 & 5.57 & 0.12 & 0.26 & $\infty$ & N/A & 6.06 & 4.27 \\
\hline
Log-normal$(1,1.805^2)$ & 5.34 & N/A & 0.67 & 5.53 & $\infty$ & N/A & 56.75 & N/A \\
\hline
Hard-Bernoulli & 23 & N/A & 23 & N/A & 127 & N/A & 127 & N/A \\
\hline
\end{tabular}
\caption{
Values of $\zeta$ and $\beta$ for the distributions used in \Cref{sec:simulation}, under fixed values of $q$, $n$, and $\gamma$. \enquote{N/A} indicates that no feasible $\beta$ exists because $\gamma\zeta > 1$. Log-normal$(1,1.805^2)$ denotes the Log-normal distribution with $\mu=1$ and $\sigma=1.805$, consistent with \Cref{fig:cdf}.  The parameters for easy-Bernoulli and hard-Bernoulli differ across values of $q$, and these choices are detailed in \Cref{apx:simulation_supp}.
}
\label{tab:value}
\end{table}

Comparing with \Cref{fig:addMean}, we observe that the overall ordering of $\beta$ in \Cref{tab:value} is consistent with the empirical regret orderings under different combinations of $q$ and $n$ (recall that smaller values of $\beta$ correspond to smaller regret). For instance, by comparing the values of $\beta$ for the Exponential and Pareto distributions when $q=0.4$, we can anticipate a crossover point in the regret curves as $n$ increases.
In particular, for $n=11$, Exponential has a smaller value of $\beta$ than Pareto in \Cref{tab:value}, and indeed has a smaller empirical regret in \Cref{fig:addMeanA}; meanwhile, for $n=196$, Exponential has a larger $\beta$ than Pareto in \Cref{tab:value}, consistent with the larger empirical regret in  \Cref{fig:addMeanA}.

In certain cases, there is no solution for $\beta$; an example is the hard-Bernoulli distribution under all combinations of $q$ and $n$ listed in \Cref{tab:value}. This occurs because we use the same value of $\gamma$ for all distributions, while the corresponding values of $\zeta$ vary significantly across distributions. Since the condition $\gamma\zeta<1$ is required by the definition of clustered distributions, it may be impossible to choose a single value of $\gamma$ that ensures that all distributions in \Cref{tab:value} admit comparable values of $\beta$. For example, when $q=0.4$ and $n=11$, choosing $\gamma<1/23$ can ensure that the hard-Bernoulli distribution admits a valid $\beta$, but will force the values of $\beta$ for the easy-Bernoulli, Uniform, Exponential, and Pareto distributions to be $0$. This is one reason why \Cref{fig:addMean} plots a proxy for $\beta$ rather than $\beta$ itself.

When $\beta$ is not available for some distributions, the values of $\zeta$ can be used instead as an indicator of regret orderings. For example, when $q=0.9$ and $n=196$, the ordering of $\zeta$ aligns with the regret ordering shown in \Cref{fig:addMean}. However, when $n$ is very small, $\zeta$ can go to infinity for distributions such as the Exponential, Pareto, and Log-normal distributions, whose CDFs never attain $1$. This behavior is also reflected in the $\Delta(\eps)$ plot for $q=0.9$ in \Cref{fig:addMean}, where $\Delta(\eps)$ becomes finite only after $\eps$ exceeds a certain threshold.

Finally, we note that our choice of $\zeta$ based on $1/\sqrt{n}$ is asymptotic. 
The purpose of \Cref{tab:value} is to provide a rough indication of how $\zeta$ and $\beta$ change as $n$ increases, rather than to give a precise characterization or to identify the exact value of $n$ at which regret curves cross.

\section{Continuous Lower Bound for $\beta=0$} \label{apx:continuous}

We provide a self-contained lower bound of $\Omega(1/n)$ using two continuous distributions, which contrasts other lower bounds (see \citet{lyu2024closing}) that use a continuum of continuous distributions.
Our two distributions are obtained by modifying those from \Cref{thm:lowAdd} (which had point masses on $0$ and $H$) to be continuous in the case where $\beta=0$.

\begin{theorem} \label{thm:lowAdd2}
Fix $q\in(0,1)$ and $\gamma\in(0,\infty)$.
Any learning algorithm based on $n$ samples makes a decision with additive regret at least
\begin{align*}
\frac{q^2(1-q)^2}{72\max\{\gamma,1\}n}=\Omega\left(\frac1n\right)
\end{align*}
with probability at least 1/3 on some continuous distribution with density at least $\gamma$ over an interval in [0,1]. 
Therefore, the expected additive regret is at least
\begin{align*}
\frac{q^2(1-q)^2}{216\max\{\gamma,1\}n}=\Omega\left(\frac1n\right).
\end{align*}
\end{theorem}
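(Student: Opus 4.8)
The plan is to re-run the proof of \Cref{thm:lowAdd} specialized to $\beta=0$, but with the two point masses in each of $P,Q$ replaced by sub-intervals carrying constant density, so that both distributions become continuous with a density of at least $\gamma$ on a genuine interval. Concretely, set $C=\frac{q(1-q)}{3}$, write $M=\max\{\gamma,1\}$, and let $H=\frac{C}{M\sqrt n}$ (these are exactly the $C,H$ of \Cref{thm:lowAdd} with $\beta=0$). Define $P$ to be the continuous distribution on $[0,1]$ whose density is constant on each of $[0,\tfrac12)$, $[\tfrac12,\tfrac12+H)$, and $[\tfrac12+H,1)$, chosen so that $F_P$ increases linearly from $0$ to $q$ on the first piece, from $q$ to $q+\frac{C}{\sqrt n}$ on the second, and from $q+\frac{C}{\sqrt n}$ to $1$ on the third; in particular the density on the middle piece is exactly $\frac{C/\sqrt n}{H}=M\ge\gamma$. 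Define $Q$ the same way but with $F_Q$ increasing from $0$ to $q-\frac{C}{\sqrt n}$, then (still with density $M$) from $q-\frac{C}{\sqrt n}$ to $q$, then from $q$ to $1$. I would first record the routine sanity checks: $H\le C\le\tfrac1{12}<\tfrac12$ so the three pieces fit inside $[0,1]$, all three density levels are positive, the CDFs are continuous and monotone, the middle interval witnesses density at least $\gamma$, and $a_P^*=\tfrac12$, $a_Q^*=\tfrac12+H$, exactly as $0$ and $H$ were the optimal decisions in \Cref{thm:lowAdd}.

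The next step is to bound the squared Hellinger distance $\mathrm{H}^2(P,Q)=\tfrac12\int_0^1(\sqrt{f_P(z)}-\sqrt{f_Q(z)})^2\,dz$, where $f_P,f_Q$ are the densities. The middle interval contributes nothing, since $f_P=f_Q=M$ there; and on an interval of length $\ell$ on which $f_P\equiv c_P$ and $f_Q\equiv c_Q$ are constant, $\int(\sqrt{f_P}-\sqrt{f_Q})^2=\ell(\sqrt{c_P}-\sqrt{c_Q})^2=(\sqrt{\ell c_P}-\sqrt{\ell c_Q})^2$, which is precisely the contribution one would get if the masses $\ell c_P,\ell c_Q$ had been placed as point masses. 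Since these masses are exactly $q$ versus $q-\frac{C}{\sqrt n}$ on $[0,\tfrac12)$, and $1-q-\frac{C}{\sqrt n}$ versus $1-q$ on $[\tfrac12+H,1)$, the computation collapses to exactly \eqref{eqn:hellDef}, so $\mathrm{H}^2(P,Q)\le\frac{C^2}{2nq(1-q)}$ by the argument following \eqref{additiveHellinger}, and hence $\mathrm{TV}(P^n,Q^n)\le\sqrt{2n\,\mathrm{H}^2(P,Q)}\le\frac{\sqrt{q(1-q)}}{3}\le\tfrac13$, verbatim from \Cref{thm:lowAdd}.

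The lower-bound argument is then identical. Couple the two executions on an event $E$ of probability $1-\mathrm{TV}(P^n,Q^n)\ge2/3$ on which the algorithm outputs the same decision, so that conditional on $E$ either $\Pr[A_P\ge\tfrac12+\tfrac H2\mid E]\ge\tfrac12$ or $\Pr[A_P\le\tfrac12+\tfrac H2\mid E]\ge\tfrac12$. In the first case, whenever $A_P\ge\tfrac12+\tfrac H2$, \eqref{eqn:whpDiff} gives $L_P(A_P)-L_P(a_P^*)=\int_{1/2}^{A_P}(F_P(z)-q)\,dz\ge\int_{1/2}^{1/2+H/2}\frac{C(z-1/2)}{H\sqrt n}\,dz=\frac{CH}{8\sqrt n}=\frac{C^2}{8Mn}=\frac{q^2(1-q)^2}{72Mn}$; in the second case the symmetric computation on $Q$ (integrating $q-F_Q$ over $[\tfrac12+\tfrac H2,\tfrac12+H]$, using $a_Q^*=\tfrac12+H$) yields the same bound. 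Combining with $\Pr[E]\ge2/3$ gives additive regret at least $\frac{q^2(1-q)^2}{72Mn}$ with probability at least $1/3$, and since the regret is nonnegative its expectation is at least $\frac13\cdot\frac{q^2(1-q)^2}{72Mn}=\frac{q^2(1-q)^2}{216Mn}$.

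The only genuinely new point, and the thing I would be careful to get right, is the observation in the second paragraph that spreading a point mass uniformly over an interval, while the competing distribution spreads a slightly different mass over that \emph{same} interval, leaves the Hellinger contribution unchanged, so that making the construction continuous is ``free'' for the probabilistic-distance bound. One must also place the bottom ramps of $P$ and $Q$ over the same sub-interval of $[0,1]$ (and likewise the top ramps), so that the middle interval carries literally identical densities and contributes zero to $\mathrm{H}^2$; and note that for $P$ the dense middle interval lies just to the right of $a_P^*$ whereas for $Q$ it lies just to the left of $a_Q^*$ — each is still ``an interval in $[0,1]$'' as the statement requires, and each is exactly the side exploited by the corresponding case of the regret argument. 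Everything else (the $1-\sqrt{1-x}\le\frac x2+\frac{x^2}{2}$ inequality, additivity of $\mathrm{H}^2$, the coupling, the final arithmetic) is copied from \Cref{thm:lowAdd}.
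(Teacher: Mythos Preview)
Your approach is the paper's approach: replace the two point masses from the $\beta=0$ case of \Cref{thm:lowAdd} by linear ramps so the distributions become continuous, observe that a ramp of length $\ell$ carrying constant densities $c_P,c_Q$ contributes $\ell(\sqrt{c_P}-\sqrt{c_Q})^2=(\sqrt{\ell c_P}-\sqrt{\ell c_Q})^2$ to $2\mathrm{H}^2(P,Q)$---exactly what the original point masses $\ell c_P,\ell c_Q$ contributed---and then recycle the Hellinger bound, the TV coupling, and the regret computation verbatim.

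The one substantive difference is ramp width, and it is worth flagging. The paper places the outer ramps on intervals of length $\eta$ with $\eta$ chosen small enough (roughly $\eta\le(\min\{q,1-q\}-C/\sqrt n)/\gamma$) that every ramp slope---$q/\eta$, $(q-C/\sqrt n)/\eta$, $(1-q)/\eta$, $(1-q-C/\sqrt n)/\eta$---is at least $\gamma$; hence the density is at least $\gamma$ on the \emph{entire} support $[0,H+2\eta]$. You take ramps of length $\tfrac12$ and $\tfrac12-H$, so your outer densities are $2q$, $2(q-C/\sqrt n)$, etc., and you only claim density $\ge\gamma$ on the middle piece $[\tfrac12,\tfrac12+H]$, with $a^*$ sitting at an endpoint. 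Under the paper's reading of ``density at least $\gamma$ over an interval in $[0,1]$'' (the support is an interval contained in $[0,1]$ and the density is everywhere at least $\gamma$ on it---which is what makes the distribution $(0,\gamma,\zeta)$-clustered and hence inside the class for which the $O(1/n)$ upper bounds hold), your $P$ fails whenever $\gamma>2q$: to the left of $a_P^*=\tfrac12$ the density is only $2q$. The fix is a one-line change---shrink the outer ramps to width $\eta$ as above---after which everything you wrote goes through unchanged.
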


\begin{proof}[Proof of \Cref{thm:lowAdd2}]
Let $C=\frac{q(1-q)}{3}$, $H=\frac{C}{\max\{\gamma,1\}\sqrt{n}}$. 
Fix $\eta\in\left(0,\min\left\{\frac{\min\{q,1-q\}-\frac{C}{\sqrt{n}}}{\gamma},\frac13q\right\}\right]$.
Consider two distributions $P$ and $Q$, whose respective CDF functions $F_P$ and $F_Q$ are:
\begin{align*}
    F_P(z)&=
    \begin{cases}
        0, &z\in(-\infty, 0)\\
        \frac{q}{\eta}z, &z\in[0,\eta)\\
        q+\frac{C}{H\sqrt{n}}(z-\eta), &z\in[\eta,H+\eta)\\
        q+\frac{C}{\sqrt{n}}+\frac{1-q-\frac{C}{\sqrt{n}}}{\eta}(z-H-\eta), &z\in[H+\eta,H+2\eta)\\
        1,&z\in[H+2\eta,\infty);
    \end{cases}\\
    F_Q(z)&=
    \begin{cases}
        0, &z\in(-\infty, 0)\\
        \frac{q-\frac{C}{\sqrt{n}}}{\eta}z, &z\in[0,\eta)\\
        q-\frac{C}{\sqrt{n}}+\frac{C}{H\sqrt{n}}(z-\eta), &z\in[\eta,H+\eta)\\
        q+\frac{1-q}{\eta}(z-H-\eta), &z\in[H+\eta,H+2\eta)\\
        1,&z\in[H+2\eta,\infty).
    \end{cases}
\end{align*}
From the CDF functions, it can be observed that $F_P(z)$, $F_Q(z)$ are continuous, and that the respective optimal decisions are $a_P^*=\eta$ and $a_Q^*=H+\eta$. We now show that any learning algorithm based on $n$ samples will incur an additive regret at least $\frac{q^2(1-q)^2}{72\max\{\gamma,1\}n}$ with probability at least 1/3, on distribution $P$ or $Q$.

It is easy to see that $P$ and $Q$ are continuous distributions with a positive density over interval $[0,H+2\eta]$, where $H+2\eta\le C+ \frac23q \le 1.$  To see that this density is at least $\gamma$, we need to check that all of the slopes
$$
\frac q\eta, \frac{C}{H\sqrt{n}}, \frac{1-q-\frac{C}{\sqrt{n}}}{\eta}, \frac{q-\frac{C}{\sqrt{n}}}{\eta}, \frac{1-q}{\eta}
$$
are at least $\gamma$.  We first derive that $\frac{C}{H\sqrt{n}}=\max\{\gamma,1\}\ge\gamma$.  We next derive that
$$
\frac q\eta\ge\frac{q-\frac{C}{\sqrt{n}}}{\eta}\ge\frac{q-\frac{C}{\sqrt{n}}}{\left(\frac{\min\{q,1-q\}-\frac{C}{\sqrt{n}}}{\gamma}\right)}\ge\gamma.
$$
Similarly, we derive that
$$
\frac{1-q}{\eta} \ge \frac{1-q-\frac{C}{\sqrt{n}}}{\eta} \ge \frac{1-q-\frac{C}{\sqrt{n}}}{\left(\frac{\min\{q,1-q\}-\frac{C}{\sqrt{n}}}{\gamma}\right)}\ge\gamma
$$
which completes the verification that $P$ and $Q$ have density at least $\gamma$ over an interval in [0,1].

We next analyze the squared Hellinger distance between $P$ and $Q$. Because the PDF's of $P$ and $Q$ only differ on $[0,\eta)$ and $[H+\eta,H+2\eta)$, standard formulas for Hellinger distance yield
\begin{align*}
\mathrm{H}^2(P,Q)
&=\frac12\int_0^\eta\left(\sqrt{\frac{q}{\eta}}-\sqrt{\frac{q-\frac{C}{\sqrt{n}}}{\eta}}\right)^2dz+\frac12\int_{H+\eta}^{H+2\eta}\left(\sqrt{\frac{1-q}{\eta}}-\sqrt{\frac{1-q-\frac{C}{\sqrt{n}}}{\eta}}\right)^2dz \\
&=\frac12\left(q+q-\frac{C}{\sqrt{n}}-2\sqrt{q\left(q-\frac{C}{\sqrt{n}}\right)}+1-q+1-q-\frac{C}{\sqrt{n}}-2\sqrt{(1-q)\left(1-q-\frac{C}{\sqrt{n}}\right)}\right). 
\end{align*}
Note that this is the same as \eqref{additiveHellinger}. Therefore, following the analysis in the proof of \Cref{thm:lowAdd}, we conclude that $\mathrm{TV}(P^n,Q^n)\le\frac13$.

Fix any (randomized) algorithm for data-driven Newsvendor, and consider the sample paths of its execution on the distributions $P$ and $Q$ side-by-side.  The sample paths can be coupled so that the algorithm makes the same decision for $P$ and $Q$ on an event $E$ of measure $1-\mathrm{TV}(P^n,Q^n)\ge 2/3$, by definition of total variation distance.
Letting $A_P,A_Q$ be the random variables for the decisions of the algorithm on distributions $P,Q$ respectively,
we have that $A_P$ and $A_Q$ are identically distributed conditional on $E$.
Therefore, either $\Pr[A_P\ge\frac H2+\eta|E]=\Pr[A_Q\ge\frac H2+\eta|E]\ge1/2$ or $\Pr[A_P\le\frac H2+\eta|E]=\Pr[A_Q\le\frac H2+\eta|E]\ge 1/2$.

First consider the case where $\Pr[A_P\ge\frac H2+\eta|E]=\Pr[A_Q\ge\frac H2+\eta|E]\ge1/2$.
Note that if $A_P\ge\frac H2+\eta$, then we can derive from~\eqref{eqn:whpDiff} that under the true distribution $P$,
\begin{align*}
L_P(A_P) - L_P(a^*_P)
&=\int_\eta^{A_P} \frac{C}{H\sqrt{n}}(z-\eta) dz\\
&\ge\int_\eta^{\frac H2+\eta} \frac{C}{H\sqrt{n}}(z-\eta) dz\\
&=\frac{C^2}{8\max\{\gamma,1\}n}\\
&=\frac{q^2(1-q)^2}{72\max\{\gamma,1\}n}.
\end{align*}
 \ \newline
Therefore, we would have
\begin{align*}
\Pr\left[L_P(A_P)-L_P(a^*_P)\ge\frac{q^2(1-q)^2}{72\max\{\gamma,1\}n}\right]
&\ge\Pr\left[A_P\ge\frac H2+\eta\right]\\
&\ge\Pr\left[A_P\ge\frac H2+\eta\bigg|E\right]\Pr[E]\\
&\ge\frac12 \cdot\frac 23=\frac13.
\end{align*}

Now consider the other case where $\Pr[A_P\le\frac H2+\eta|E]=\Pr[A_Q\le\frac H2+\eta|E]\ge 1/2$.
If $A_Q\le\frac H2+\eta$, then we can similarly derive from~\eqref{eqn:whpDiff} that under the true distribution $Q$,
\begin{align*}
L_Q(A_Q) - L_Q(a^*_Q)
&=\int_{A_Q}^{H+\eta}(q-F_Q(z))dz\\
&\ge\int_{\frac H2+\eta}^{H+\eta}\left(\frac{C}{\sqrt{n}}- \frac{C}{H\sqrt{n}}(z-\eta)\right) dz\\
&=\frac{C^2}{8\max\{\gamma,1\}n}\\
&=\frac{q^2(1-q)^2}{72\max\{\gamma,1\}n}.
\end{align*}
The proof then finishes analogous to the first case.
\end{proof}

\section{Relationship Between the IFR Property and Clustered Distributions} \label{apx:IFR_proof}
We show in this section that any continuous distribution with the IFR property is a $(0,\gamma,\zeta)$-clustered distributions for some $\gamma$ and $\zeta$. (We note that the assumption of $F$ being a continuous distribution is important, because discrete IFR distributions would still require $\beta=\infty$ in our condition.)

To see this, we first show that any CDF $F$ with the IFR property always has left and right derivatives at every point $a\in\{a:0<F(a)<1\}$, which is the interval where $a^*$ may lie. We let $G(a)=\ln(1-F(a))$, and note that $G(a)$ is concave when $F$ is IFR. 
Since $F$ is continuous, the set $\{a:0<F(a)<1\}$ is an open interval, which forms the domain of $G$. It follows that $G$ has left and right derivatives at every point within this interval, as concave functions are differentiable from both sides at all interior points of their domain. Indeed, the right derivative of $F$ at $a^*$
\begin{align}
F'_+(a^*)&=\lim_{a\to a^{*+}}\frac{F(a)-F(a^*)}{a-a^*}\nonumber \\
&=\lim_{a\to a^{*+}}\frac{(1-e^{G(a)})-(1-e^{G(a^*)})}{a-a^*}\nonumber\\
&=\lim_{a\to a^{*+}}\frac{(1-e^{G(a)})-(1-e^{G(a^*)})}{G(a)-G(a^*)}\cdot\lim_{a\to a^{*+}}\frac{G(a)-G(a^*)}{a-a^*} \label{derivative}
\end{align}
exists because the first term in \eqref{derivative} is $-e^{G(a^*)}$ by continuity of $G$ and the second term in \eqref{derivative} is the right derivative of $G$ at $a^*$. Similarly, the left derivative $F'_-(a^*)$ also exists.

We next show that if both $F'_+(a^*)$ and $F'_-(a^*)$ are positive, then $F$ is $(0,\gamma,\zeta)$-clustered for sufficiently small $\gamma$ and $\zeta$.
Considering the right side, if $F'_+(a^*)=\lim_{a\to a^{*+}}\frac{F(a)-F(a^*)}{a-a^*}=C>0$, where $C$ is a positive constant, then by definition of limit, for any $\epsilon>0$, there exists $\zeta>0$ such that $\left|\frac{F(a)-F(a^*)}{a-a^*}-C\right|<\epsilon$ holds for all $a\in(a^*,a^*+\zeta)$. Setting $\epsilon=C/2$, we obtain $F(a)-F(a^*)>\frac{C}{2}(a-a^*)$. Therefore, we can choose $\gamma\le\frac C2$ and conclude that $F$ is a $(0,\gamma,\zeta)$-clustered distribution on the right side of $a^*$. The proof for the left side follows similarly. 

Finally, we show that any CDF $F$ that is IFR must have positive $F'_+(a^*)$ and $F'_-(a^*)$. If any of $F_+'(a^*),F_-'(a^*)$ is 0, 
then $0\in\partial (-G)(a^*)$, where $\partial (-G)(a^*)$ is the subdifferential of the convex function $-G$ at $a^*$. This implies $a^*$ is a minimizer of $-G$, and hence a maximizer of $G$.


Since $\ln x$ is monotonic, it follows that $a^*$ is a minimizer of $F$, which is impossible unless $a^*=\min\{a:F(a)>0\}$ and $ F(a^*)\ge q$. However, this counterexample is excluded in \cite{zhang_sampling-based_2025} because they assume that $F$ is continuous. This completes the proof that any distribution with the IFR property must be $(0,\gamma,\zeta)$-clustered for some $\gamma$ and $\zeta$.

\citet[Remark 1]{zhang_sampling-based_2025} also compare their result with \citet[Thm.~4]{levi2015data}, who establish the sample complexity for a biased SAA algorithm under the assumption that the PDF of the demand distribution is log-concave (along with some additional assumptions). These assumptions imply that $F'(a^*)>0$, and therefore, the regret of SAA must be $O(1/n)$. 

\section{Projected SAA} \label{sec:projSAA}

If we did know $\mu(F)$ or more generally an upper bound $\mu$ on the mean of the distribution, then one could use a \textit{projected SAA} algorithm instead, where the SAA decision $\ha$ is projected to lie in $[0,\frac{\mu}{1-q}]$.  This interval is guaranteed to contain the optimal decision, because
\begin{align*}
\mu \ge \int_0^\infty (1-F(z))dz
\ge \int_0^{a^*} (1-F(z))dz
\ge \lim_{a\to a^{*-}}a(1-F(a))
\ge a^*(1-q).
\end{align*}
This would simplify the analyses of high-probability and expected additive regrets for $\beta=\infty$.

\paragraph{High-probability additive regret for $\beta=\infty$.} The proof can be simplified because we now have $|\ha-a^*|\le\frac{\mu}{1-q}$. Consequently, the assumption $n\ge\frac{2\log(2/\delta)}{(1-q)^2}$ in \Cref{thm:hpAdd} is no longer needed.

When $\ha\le a^*$, we derive from \eqref{eqn:whpDiff} that
\begin{align*}
    L(\ha)-L(a^*)
    &=\int_{\ha}^{a^*}(q-F(z))dz\\
    &\le(a^*-\ha)(q-F(\ha))\\
    &\le\frac{\mu}{1-q}\sqrt{\frac{\log(2/\delta)}{2n}},
\end{align*}
where the second inequality applies \eqref{whpDiffBound1} and \eqref{eqn:DKWoutcome}.
On the other hand, when $\ha>a^*$, we similarly derive
\begin{align*}
    L(\ha)-L(a^*)
    &=\int_{\ha}^{a^*}(q-F(z))dz\\
    &\le(\ha-a^*)\lim_{a\to\ha^-}(F(a)-q)\\
    &\le\frac{\mu}{1-q}\sqrt{\frac{\log(2/\delta)}{2n}},
\end{align*}
where the first inequality follows from the properties of the Riemann integral, and the second inequality uses \eqref{whpDiffBound2} and \eqref{eqn:DKWoutcome}.

Therefore, we conclude that
$L(\ha)-L(a^*)\le\frac{\mu}{1-q}\sqrt{\frac{\log(2/\delta)}{2n}}$
holds for $\beta=\infty$ under the projected SAA for any $n$. We note that the proof for $\beta\in[0,\infty)$ cannot be simplified, because a lower bound on $n$ is still required to guarantee $\ha\in[a^*-\zeta,a^*+\zeta]$.

\paragraph{Expected additive regret for $\beta=\infty$.} Since for projected SAA, the upper bound of the high-probability additive regret for $\beta=\infty$ holds for any $n$, we are able to apply the method in \citet[Lem.~E-5]{besbes2023big} to convert the high-probability bound into an expected bound.

Specifically, from the bound derived above, $L(\ha)-L(a^*)\le\frac{\mu}{1-q}\sqrt{\frac{\log(2/\delta)}{2n}}$, we can equivalently express it as
\begin{align*}
\Pr\left[L(\ha)-L(a^*)>\eps
\right]\le2\exp\left(-2n\frac{(1-q)^2}{\mu^2}\eps^2\right).
\end{align*}
Therefore, we have
\begin{align*}
\bE[L(\ha)-L(a^*)]
&=\int_0^\infty \Pr\left[L(\ha)-L(a^*)>\eps
\right]d\eps\\
&\le \frac{1}{\sqrt{n}}+2\int_{\frac{1}{\sqrt{n}}}^\infty\exp\left(-2n\frac{(1-q)^2}{\mu^2}\eps^2\right)d\eps\\
&\le \frac{1}{\sqrt{n}}+2\int_{\frac{1}{\sqrt{n}}}^\infty\exp\left(-2\sqrt{n}\frac{(1-q)^2}{\mu^2}\eps\right)d\eps\\
&= \frac{1}{\sqrt{n}}+\frac{\mu^2}{\sqrt{n}(1-q)^2}\left.\exp\left(-2\sqrt{n}\frac{(1-q)^2}{\mu^2}\eps\right)\right|_{\infty}^{\eps=\frac{1}{\sqrt{n}}}\\
&=\frac{1}{\sqrt{n}}\left(1+\frac{\mu^2}{(1-q)^2}\exp\left(-2\frac{(1-q)^2}{\mu^2}\right)\right),
\end{align*}
where the first inequality follows from the fact that $\Pr\left[L(\ha)-L(a^*)>\eps
\right]\le1$ on the interval $[0,\frac{1}{\sqrt{n}}]$, and the second inequality holds because $\eps\ge\frac{1}{\sqrt{n}}$ on the interval $[\frac{1}{\sqrt{n}},\infty)$.
We note that it is sufficient to integrate up to $\frac{\mu}{1-q}$,
because $L(\ha)-L(a^*)\le |\ha-a^*|\max_{a}|F(a)-q|\le\frac{\mu}{1-q}$. Here we integrate up to $\infty$ to demonstrate the generality of this method and to avoid addressing whether $\frac{1}{\sqrt{n}}\le\frac{\mu}{1-q}$.

In addition, the original proof of the expected additive regret for $\beta=\infty$ in \Cref{thm:expAdd} can also be simplified.
We derive from \eqref{eqn:expDiff} that
\begin{align*}
&\quad\bE[L(\ha)]-L(a^*) \nonumber\\
&=\int_0^{a^*}(q-F(z))\Pr[\hF(z)\ge q]dz+\int_{a^*}^{\frac{\mu}{1-q}}(F(z)-q)\Pr[\hF(z)<q]dz\\
&\le\int_0^{\frac{\mu}{1-q}}|q-F(z)|\exp\left(-2n|q-F(z)|^2\right)dz\\
&\le\int_0^{\frac{\mu}{1-q}}\frac{1}{2\sqrt{en}}dz\\
&=\frac{\mu}{2(1-q)\sqrt{en}},
\end{align*}
where the first inequality applies Hoeffding's inequality, and the second inequality follows from the fact that the function $g(x)=xe^{-2nx^2}$ is at most $\frac{1}{2\sqrt{en}}$ for all $x\ge0$. This proof is simpler because we no longer need to deal with the tail term $\int_{\frac{\mu}{1-q}}^{\infty}(F(z)-q)\Pr[\hF(z)<q]dz$.

\section{Supplement to Simulations}
\label{apx:simulation_supp}

\paragraph{Details about distributions.}
The details of the Bernoulli distributions in \Cref{fig:addMean} are as follows.
We use $\Ber(0.45)$ and $\Ber(0.25)$ as the easy-Bernoulli distributions for $q = 0.4$ and $q = 0.9$, respectively, and $23 \cdot \Ber(0.59)$ and $127 \cdot \Ber(0.11)$ as the corresponding hard-Bernoulli distributions.
Here, $c\cdot\Ber(p)$ denotes a scaled Bernoulli distribution taking values in ${0, c}$ instead of ${0,1}$, with $c$ chosen to keep the mean below 14. This ensures the Bernoulli distributions have similar means as the other distributions considered in the simulations.

\paragraph{95th percentile regrets.}
Meanwhile, in this \namecref{apx:simulation_supp} we also present the 95th percentile of additive regrets across the distributions in \Cref{fig:addPercentile} to validate our high-probability bounds. By comparing these results with the plots of $\Delta(\eps)$ at the bottom of \Cref{fig:addMean}, it can be observed that the order of high-probability additive regrets closely follows the order of $\beta$, with our theory generally capturing the crossover points.

\begin{figure}[]
    \centering
    \begin{subfigure}{0.48\textwidth}
        \centering
        \includegraphics[width=\linewidth]{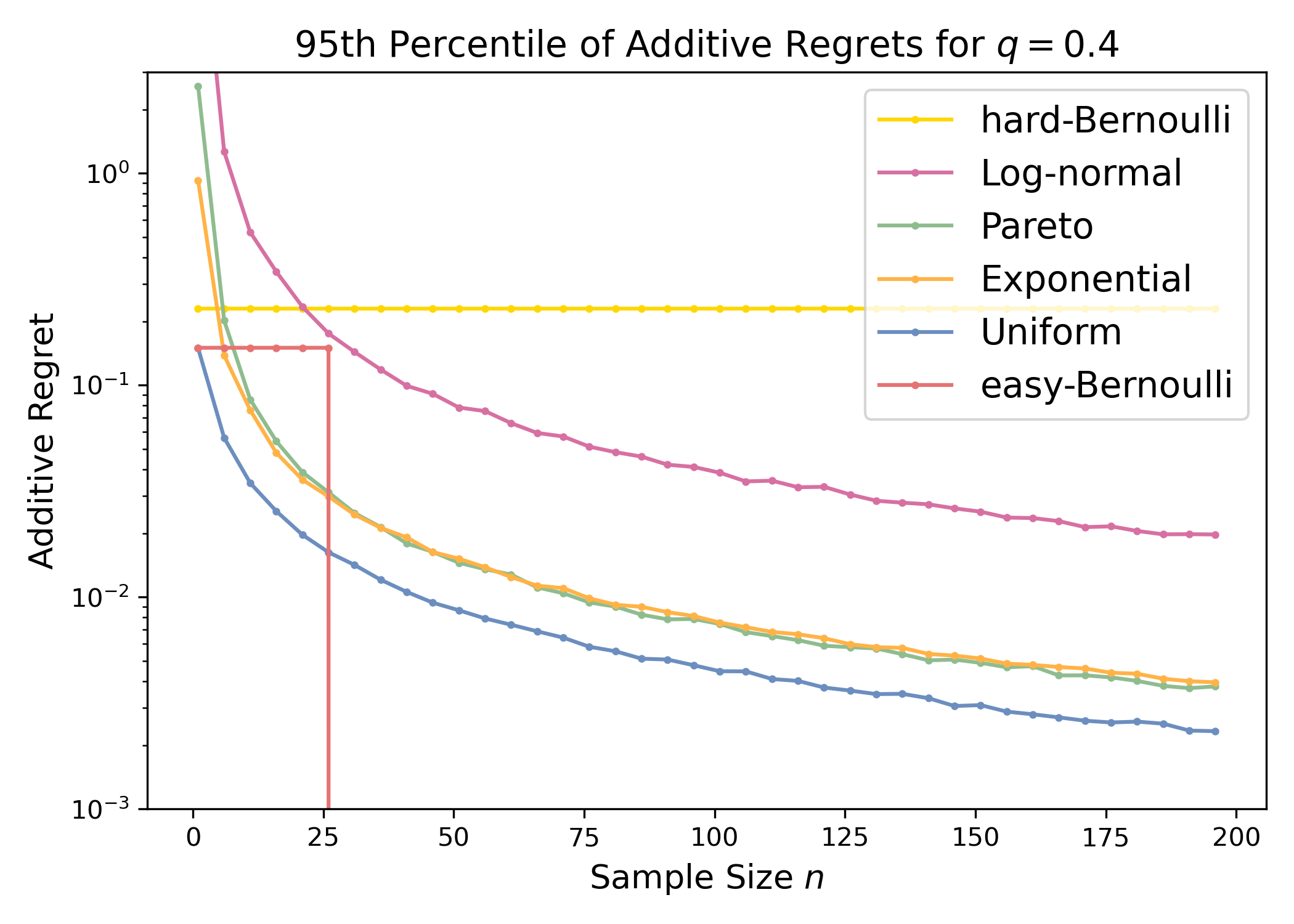} 
        \caption{$q=0.4$}
    \end{subfigure}
    \hfill
    \begin{subfigure}{0.48\textwidth}
        \centering
        \includegraphics[width=\linewidth]{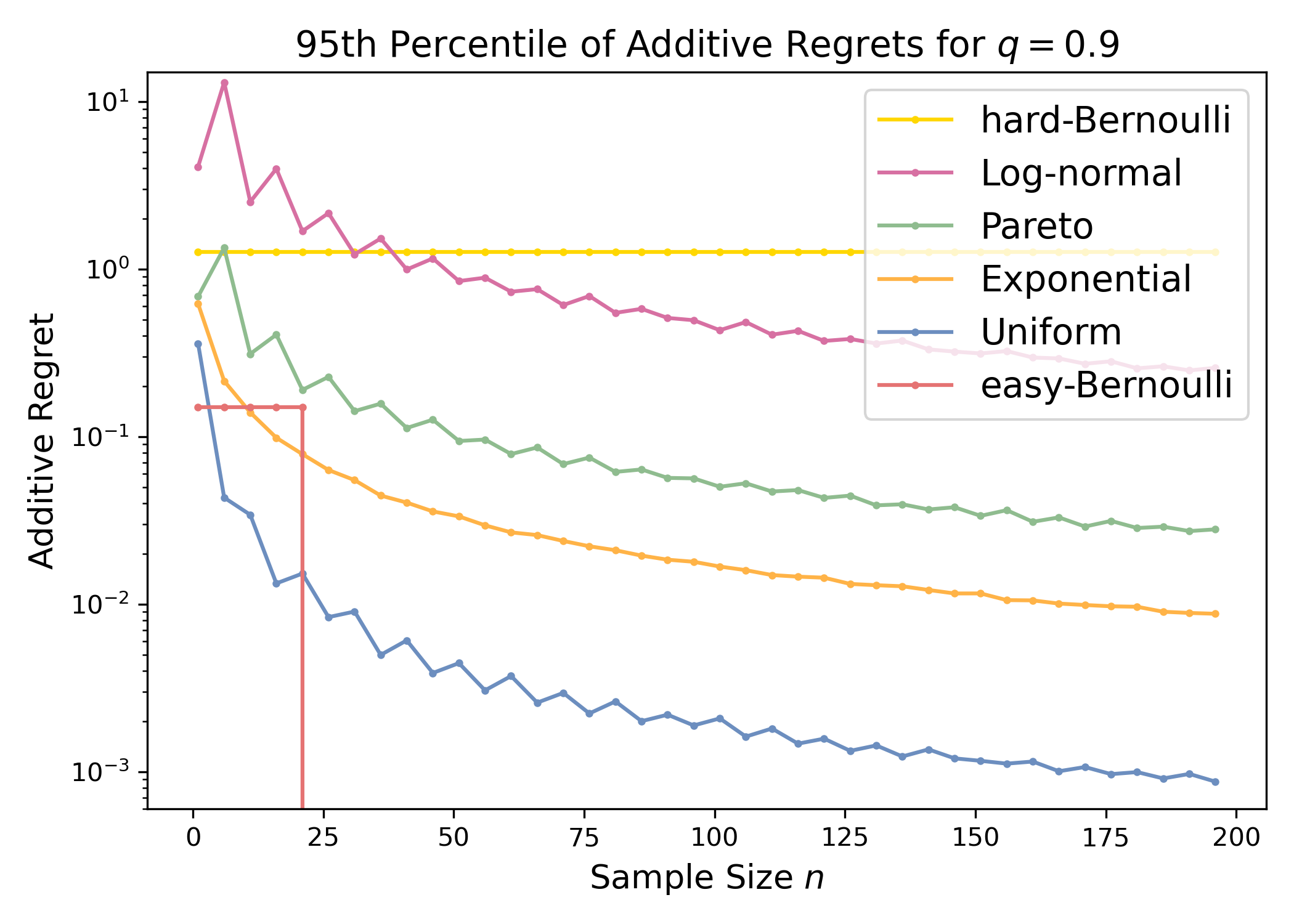}
        \caption{$q=0.9$}
    \end{subfigure}
\caption{95th percentile of additive regrets for the distributions under $q=0.4$ and $q=0.9$.}
\label{fig:addPercentile}
\end{figure}

We note some differences compared to the previous plots of average additive regrets (\Cref{fig:addMean}).
Firstly, for both $q=0.4$ and $q=0.9$, the 95th percentile of the easy-Bernoulli's additive regret exhibits a sharp decline, intersecting with the regret curves of other distributions at the same value of $n$. This behavior contrasts with \Cref{fig:addMean}, where the crossing points between the easy-Bernoulli and other distributions do not occur simultaneously. The difference arises because, for the 95th percentile, the regret can only take values in either $L(0)-L(a^*)$ or $L(1)-L(a^*)$, where $a^*=0$ for $q=0.4$ and $a^*=1$ for $q=0.9$, while in the previous case we plotted the average of additive regrets. Consequently, the sharp decline in the 95th percentile indicates that the probability of SAA making a mistake on the easy-Bernoulli distribution drops below $0.05$.
Similar sharp declines can also be observed for the hard-Bernoulli distributions when the number of samples $n$ is sufficiently large.
Secondly, there are some additional crossing points that are not captured by the $\Delta(\eps)$ plots. For example, when $q=0.9$, the easy-Bernoulli and Exponential distributions have two crossing points, while neither their additive regret curves nor their $\Delta(\eps)$ curves cross in \Cref{fig:addMean}. However, the majority of the behavior aligns well with the $\Delta(\eps)$ plots.

\paragraph{Minimum PDF fails.}
\begin{figure}[t!]
    \centering
    \begin{subfigure}{0.48\textwidth}
        \centering
        \includegraphics[height=5.64cm]{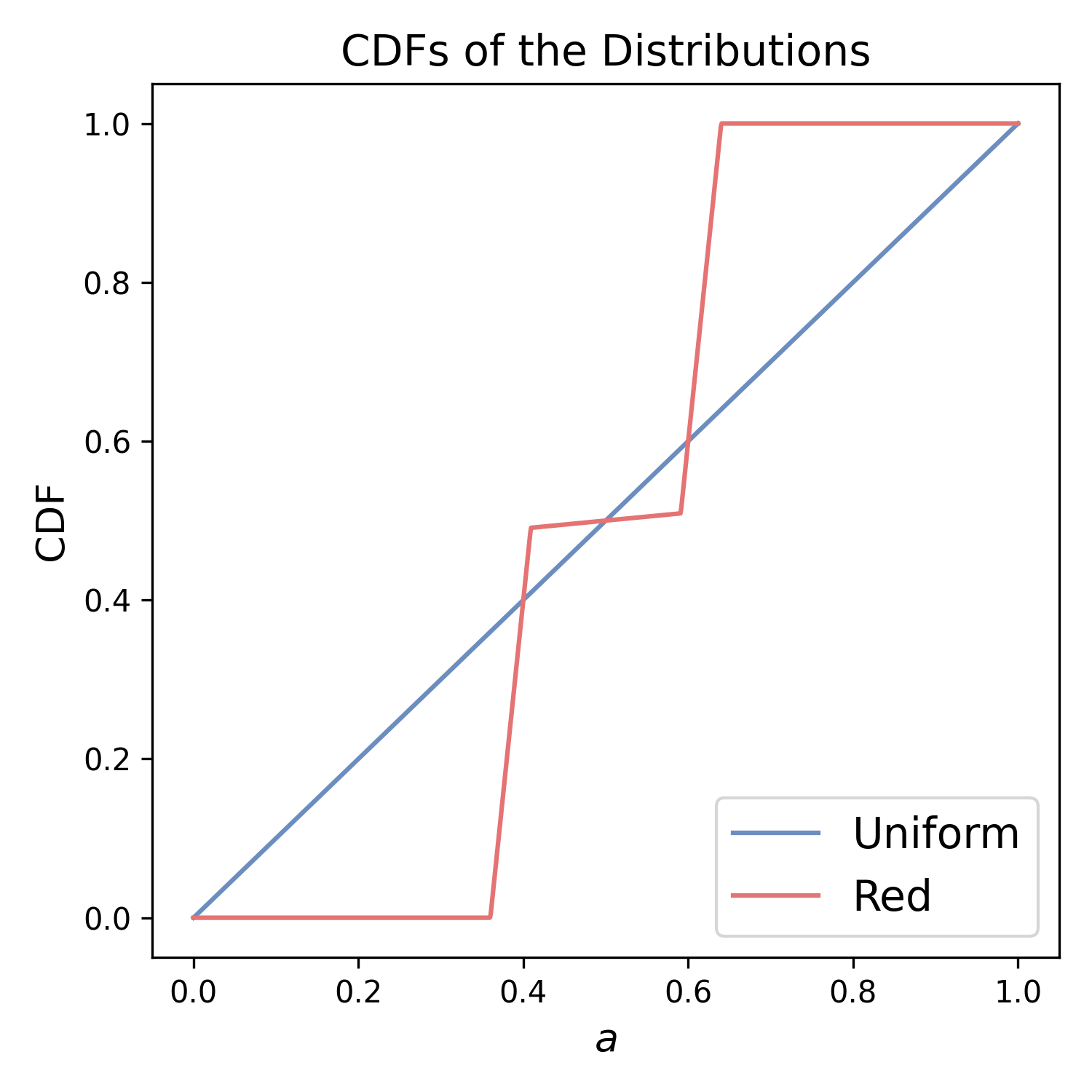} 
        \caption{CDFs of the two distributions}
        \label{fig:min_pdf_cdf}
    \end{subfigure}
    \hfill
    \begin{subfigure}{0.48\textwidth}
        \centering
        \includegraphics[width=\linewidth]{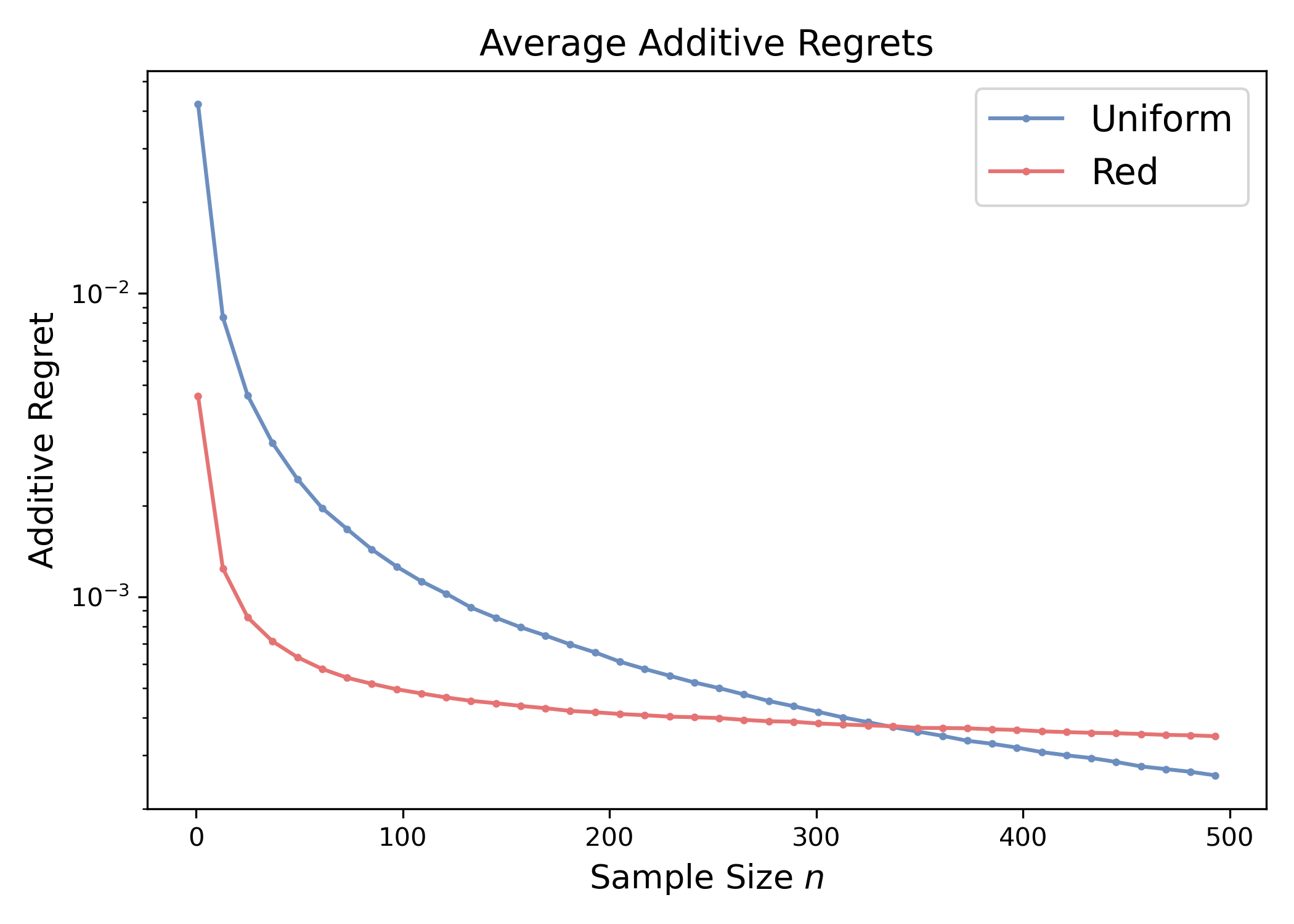}
        \caption{Average additive regrets for the two distributions}
        \label{fig:min_pdf_regret}
    \end{subfigure}
    \hfill
    \begin{subfigure}{0.48\textwidth}
        \centering
        \includegraphics[width=\linewidth]{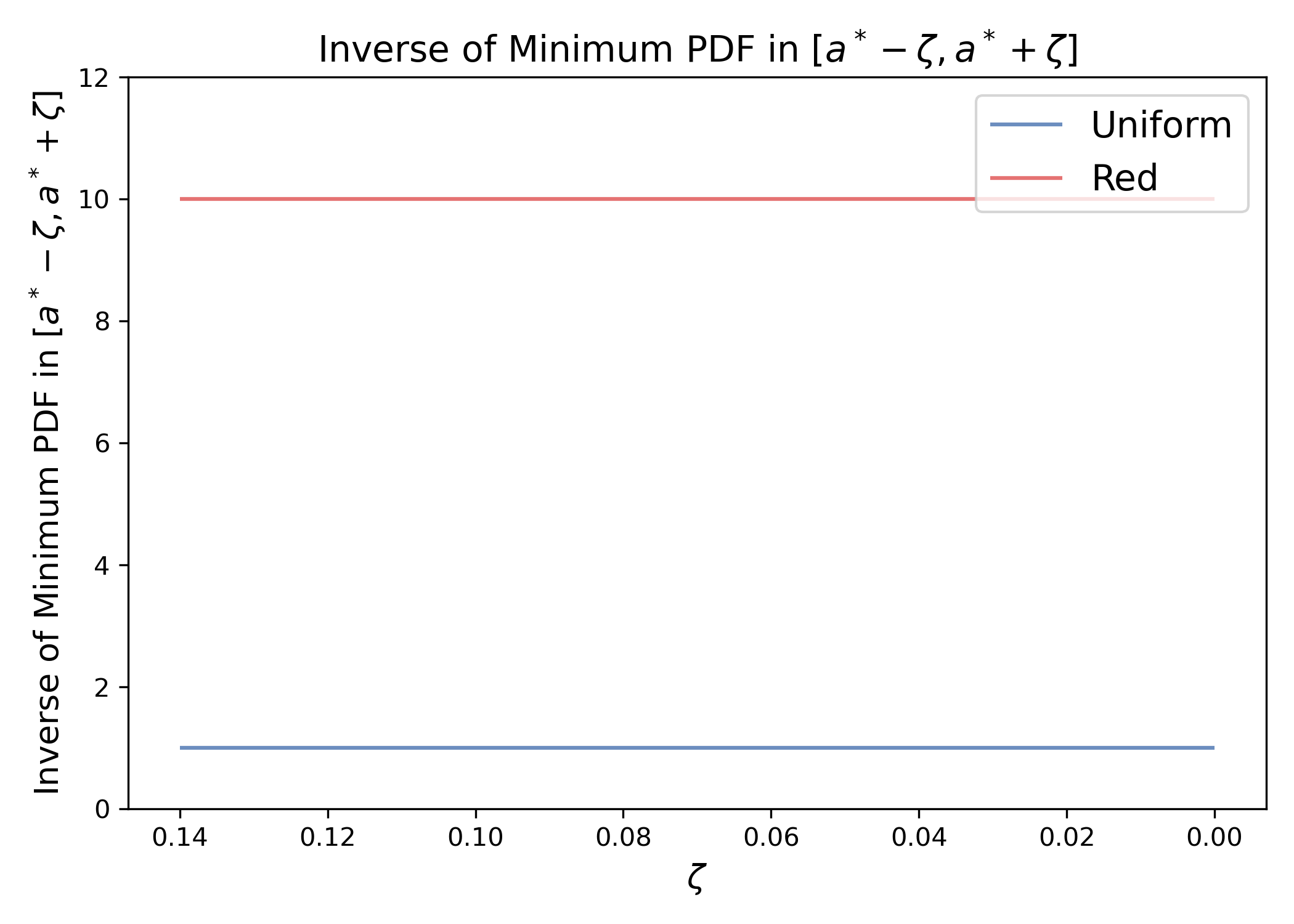}
        \caption{Inverse of minimum PDF in $[a^*-\zeta,a^*+\zeta]$}
        \label{fig:min_pdf}
    \end{subfigure}
    \hfill
    \begin{subfigure}{0.48\textwidth}
        \centering
        \includegraphics[width=\linewidth]{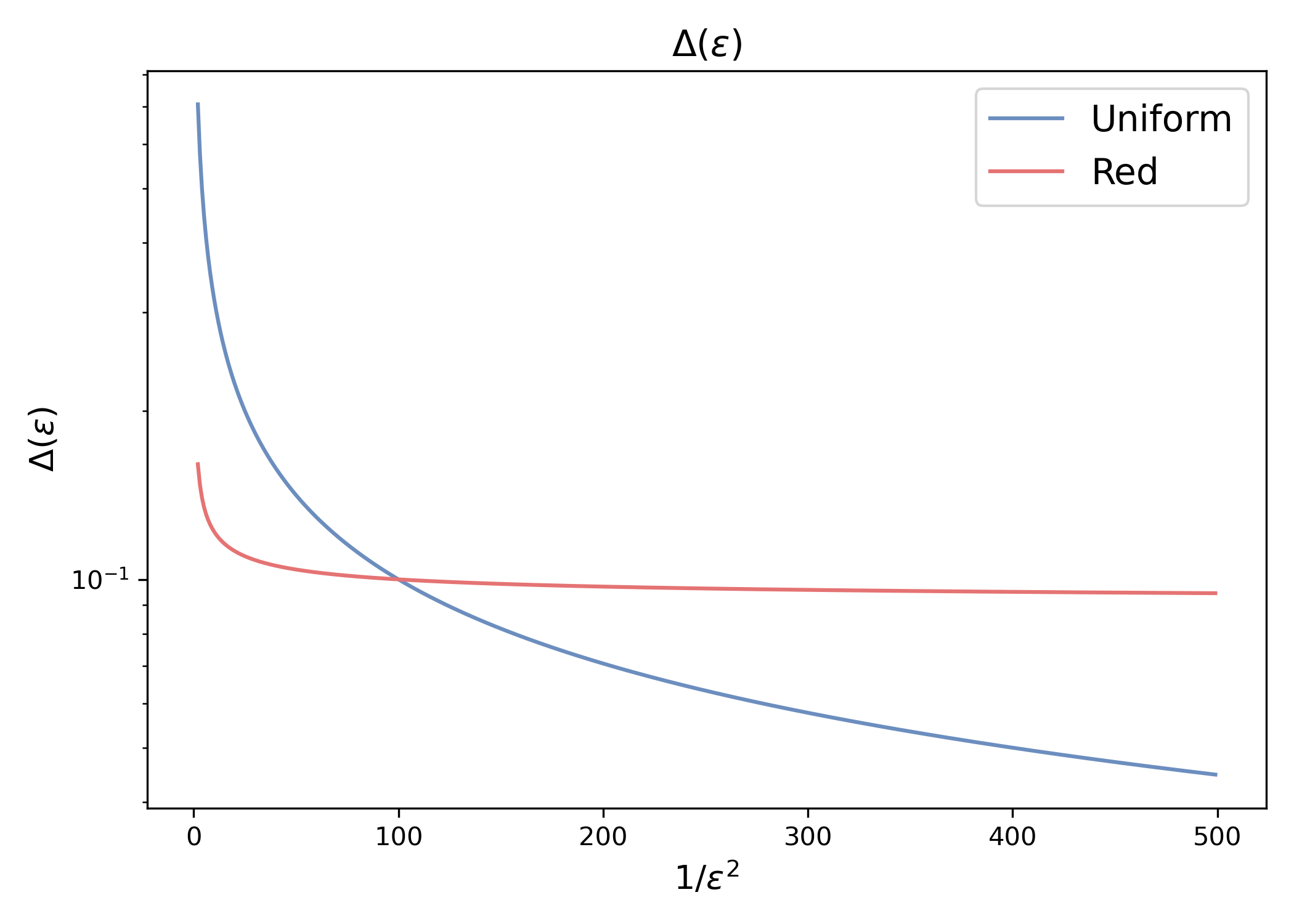}
        \caption{$\Delta(\eps)$ for the two distributions}
        \label{fig:min_pdf_delta}
    \end{subfigure}  
\caption{An example where the theory based on minimum PDF around $a^*$ fails to explain the crossing points in regret curves.
}
\label{fig:minimumPDF}
\end{figure}

We provide an example where the minimum PDF fails to capture the crossover points in the regret curves, in \Cref{fig:minimumPDF}. Specifically, we construct a "Red" distribution whose minimum PDF is attained at $a^*=0.5$, and compare it with the Uniform$(0,1)$ distribution under $q=0.5$. A theory based on minimum PDF would suggest that a lower PDF around $a^*$ should imply higher regret. Therefore, as shown in \Cref{fig:min_pdf}, where we plot the inverse of the minimum PDF around $a^*$ for both distributions with vanishing $\zeta$, the Red distribution is expected to consistently incur higher regret than the Uniform distribution (because the minimum PDF value never changes with $\zeta$).

However, simulation results in \Cref{fig:min_pdf_regret} show that the Red distribution actually incurs lower regret than the Uniform distribution when $n$ is small. This is because when $F(\ha)$ is far away from $q$, the SAA action $\ha$ is actually closer to $a^*$ under the Red CDF than the Uniform CDF.
Only when $n$ becomes sufficiently large does the regret ordering align with the prediction from the minimum PDF.
Meanwhile, our notion of clustered distributions accurately captures this crossover behavior, in \Cref{fig:min_pdf_delta}.

\paragraph{Distribution and variance of regrets with different $n$.}
We plot the empirical distribution of additive regrets and report their mean, 95th percentile, and variance in \Cref{fig:regDistribution}. To elaborate, we focus on the continuous demand distributions used in \Cref{fig:addMean}, under the same values $q=0.4$ and $q=0.9$. For each demand distribution and each value of $q$, we generate regret distributions for $n=11$ and $n=196$, representing small and large numbers of samples respectively and matching the values of $n$ considered in \Cref{tab:value}. Each regret distribution is produced using 10,000 repetitions.

\begin{figure}[t!]
    \centering
    \begin{subfigure}{\textwidth}
        \centering
        \includegraphics[width=\linewidth]{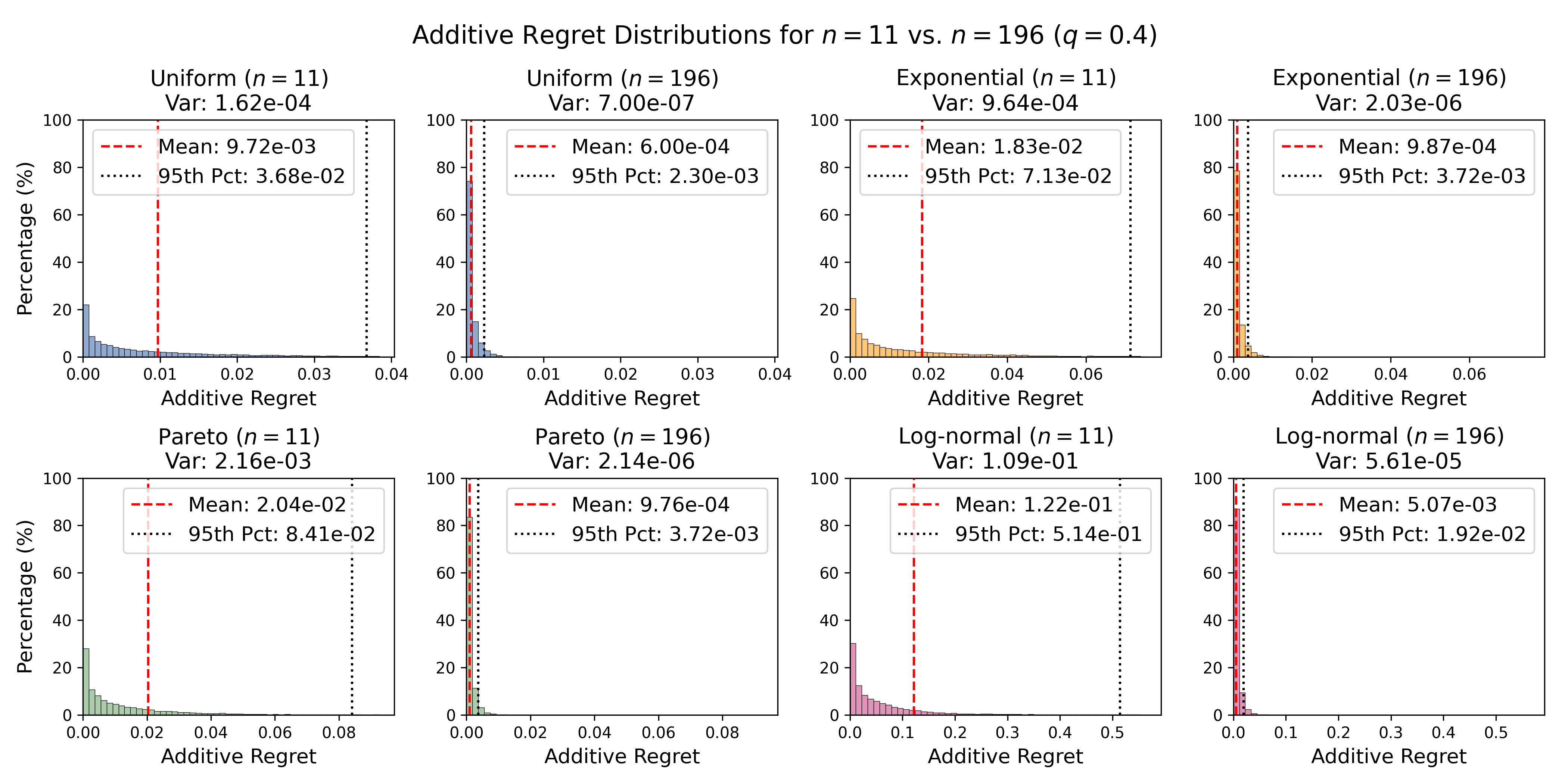} 
        \caption{$q=0.4$}
        \label{fig:regDistributionA}
    \end{subfigure}
    \vspace{0.4cm}
    \begin{subfigure}{\textwidth}
        \centering
        \includegraphics[width=\linewidth]{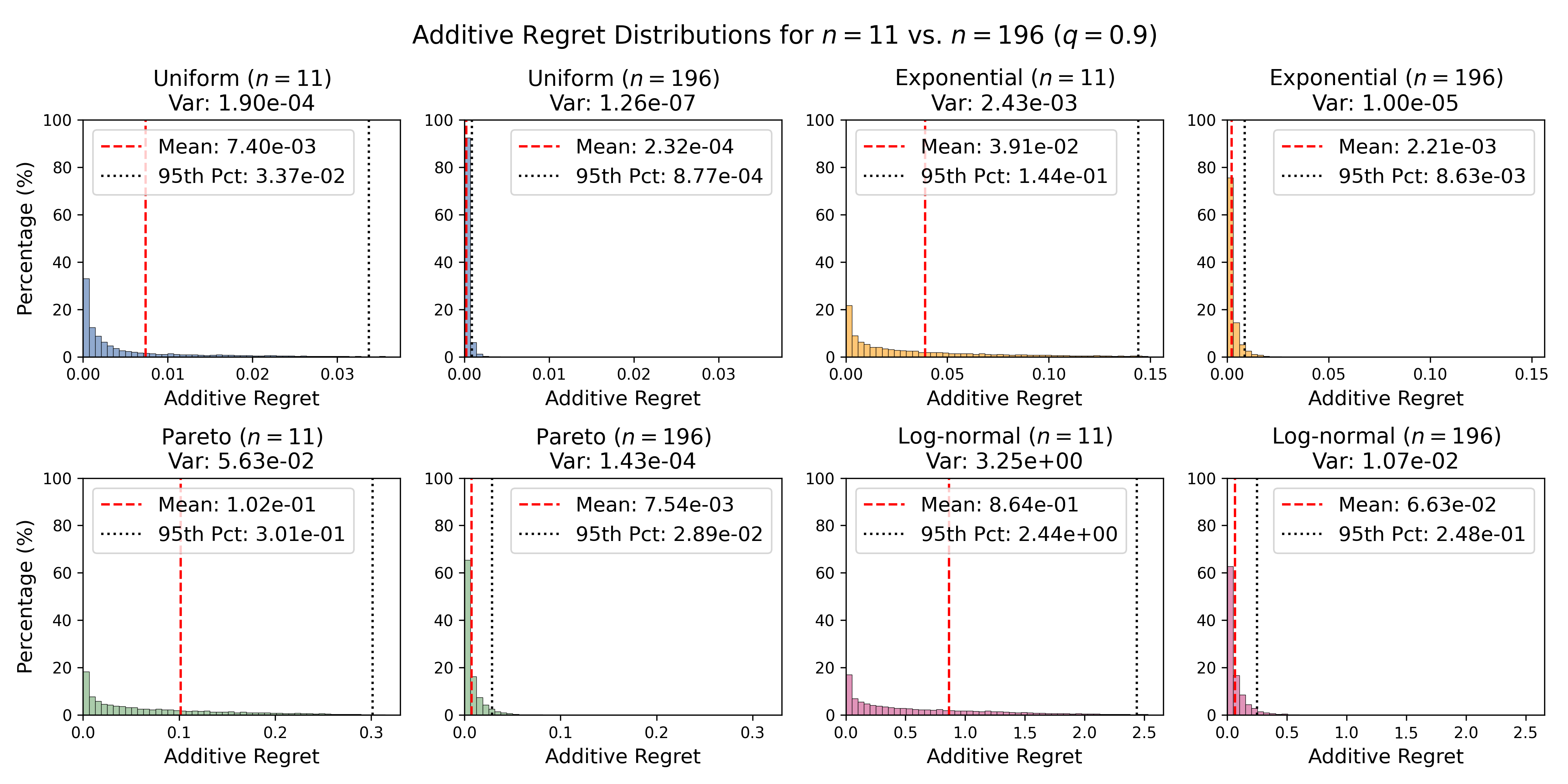}
        \caption{$q=0.9$}
        \label{fig:regDistributionB}
    \end{subfigure}
\caption{Empirical distributions, means, 95th percentiles, and variances of additive regrets under different values of $q$, $n$, and demand distributions.
}
\label{fig:regDistribution}
\end{figure}

For every regret distribution in \Cref{fig:regDistribution}, we compute the mean, the 95th percentile (which represents the high-probability regret for $\delta=0.05$), and the variance. Within each pair of subplots associated with small and large $n$, we fix the range of the x-axis to make the change in the gap between the mean and the 95th percentile visually comparable. 
Across all distributions and both $q$ values, the variance and the mean-to-95th-percentile gap shrink markedly as $n$ increases, indicating that comparable expectation and high-probability regrets (as our bounds imply) can only be hoped for when $n$ is sufficiently large. These observations provide empirical motivation for imposing a lower bound on $n$ in our high-probability bounds (\Cref{thm:hpAdd,thm:hpMult}).

\section{$\beta=\infty$ Cases of Multiplicative Regret}

\subsection{High-probability Upper Bound} \label{sec:betaInftyHP}

We prove \Cref{thm:hpMult} for $\beta=\infty$.
To do so, we first analyze the case where $\ha\le a^*$. We derive from~\eqref{eqn:Loss} that
\begin{align*}
    L(a^*)
    &\ge\int_{\ha}^{a^*} (1-q)F(z) dz\\
    &\ge(1-q)F(\ha)(a^*-\ha)\\
    &\ge(1-q)\left(q-\sup_{a\ge0}|\hF(a)-F(a)|\right)(a^*-\ha),
\end{align*}
where the last inequality follows from \eqref{whpDiffBound1}. By~\eqref{eqn:DKWoutcome} and the assumption that $n>\frac{\log(2/\delta)}{2(\min\{q,1-q\})^2}$, we have $q>\sqrt{\frac{\log(2/\delta)}{2n}}\ge\sup_{a\ge0}|\hF(a)-F(a)|$.
This enables us to derive from~\eqref{eqn:whpDiff} that
\begin{align}
    \frac{L(\ha)-L(a^*)}{L(a^*)}
    &=\frac{\int_{\ha}^{a^*} (q-F(z)) dz}{L(a^*)} \nonumber\\
    &\le\frac{(a^*-\ha)(q-F(\ha))}{(1-q)\left(q-\sup_{a\ge0}|\hF(a)-F(a)|\right)(a^*-\ha)} \nonumber\\
    &\le\frac{\sup_{a\ge0}|\hF(a)-F(a)|}{q(1-q)-(1-q)\sup_{a\ge0}|\hF(a)-F(a)|}, \label{hpMultBound1}
\end{align}
where the second inequality applies \eqref{whpDiffBound1}.

For the case where $\ha>a^*$, by \eqref{eqn:Loss} and properties of the Riemann integral, we have
\begin{align*}
    L(a^*)
    &\ge\int_{a^*}^{\ha} q(1-F(z)) dz\\
    &\ge\lim_{a\to\ha^-}q(1-F(a))(a-a^*)\\
    &\ge q\left(1-q-\sup_{a\ge0}|\hF(a)-F(a)|\right)(\ha-a^*),
\end{align*}
where the last inequality follows from \eqref{whpDiffBound2}.
By~\eqref{eqn:DKWoutcome} and the assumption that $n>\frac{\log(2/\delta)}{2(\min\{q,1-q\})^2}$, we have $1-q>\sqrt{\frac{\log(2/\delta)}{2n}}\ge\sup_{a\ge0}|\hF(a)-F(a)|$.
Therefore, we derive from~\eqref{eqn:whpDiff} that
\begin{align}
    \frac{L(\ha)-L(a^*)}{L(a^*)}
    &=\frac{\int_{a^*}^{\ha} (F(z)-q) dz}{L(a^*)} \nonumber\\
    &\le\frac{\lim_{a\to\ha^-}(a-a^*)(F(a)-q)}{q\left(1-q-\sup_{a\ge0}|\hF(a)-F(a)|\right)(\ha-a^*)}\nonumber\\
    &\le\frac{\sup_{a\ge0}|\hF(a)-F(a)|}{q(1-q)-q\sup_{a\ge0}|\hF(a)-F(a)|}, \label{hpMultBound2}
\end{align}
where the first inequality is by properties of the Riemann integral, and the last inequality uses~\eqref{whpDiffBound2}.

Combining \eqref{hpMultBound1} and \eqref{hpMultBound2}, we conclude that
\begin{align*}
    \frac{L(\ha)-L(a^*)}{L(a^*)}
    &\le\frac{\sup_{a\ge0}|\hF(a)-F(a)|}{q(1-q)-\max\{q,1-q\}\sup_{a\ge0}|\hF(a)-F(a)|}\\
    &=\frac{\sup_{a\ge0}|\hF(a)-F(a)|}{\max\{q,1-q\}\left(\min\{q,1-q\} - \sup_{a\ge0}|\hF(a)-F(a)|\right)}\\
    &\le\frac{2\sup_{a\ge0}|\hF(a)-F(a)|}{\min\{q,1-q\} - \sup_{a\ge0}|\hF(a)-F(a)|}
\end{align*}
holds for both cases 
under the assumption that $n>\frac{\log(2/\delta)}{2(\min\{q,1-q\})^2}$. Applying \eqref{eqn:DKWoutcome}, we have that with probability at least $1-\delta$,
\begin{align*}
    \frac{L(\ha)-L(a^*)}{L(a^*)}
    \le\frac{2\sqrt{\frac{\log(2/\delta)}{2n}}}{\min\{q,1-q\}-\sqrt{\frac{\log(2/\delta)}{2n}}}
    =\frac{2}{\min\{q,1-q\}\sqrt{\frac{2n}{\log(2/\delta)}}-1}.
\end{align*}

\subsection{Expectation Upper Bound} \label{sec:betaInftyExp}

We see from \eqref{eqn:expDiff} and \eqref{eqn:Loss} that
\begin{align*}
\bE[L(\ha)]-L(a^*) &=\int_0^{a^*}(q-F(z))\Pr[\hF(z)\ge q]dz+\int_{a^*}^\infty(F(z)-q)\Pr[\hF(z)<q]dz
\\ L(a^*) &=\int_0^{a^*} (1-q)F(z) dz+\int_{a^*}^\infty q(1-F(z))dz.
\end{align*}
Hence for any distribution with finite mean,
\begin{align*}
\frac{\bE[L(\ha)]-L(a^*)}{L(a^*)}
&=\frac{\int_0^{a^*}(q-F(z))\Pr[\hF(z)\ge q]dz+\int_{a^*}^\infty(F(z)-q)\Pr[\hF(z)<q]dz}{\int_0^{a^*} (1-q)F(z) dz+\int_{a^*}^\infty q(1-F(z))dz}\\
&\le\max\left\{\frac{\int_0^{a^*}(q-F(z))\Pr[\hF(z)\ge q]dz}{\int_0^{a^*} (1-q)F(z) dz},\frac{\int_{a^*}^\infty(F(z)-q)\Pr[\hF(z)<q]dz}{\int_{a^*}^\infty q(1-F(z))dz}\right\}\\
&=\max\left\{\sup_{F\in(0,q)}\frac{q-F}{(1-q)F}\Pr\left[\frac1n\Bin(n,F)\ge q\right],\sup_{F\in[q,1)}\frac{F-q}{q(1-F)}\Pr\left[\frac1n\Bin(n,F) < q\right]\right\}.
\end{align*}
This completes the upper bound on $\sup_{F:\mu(F)<\infty}\frac{\bE[L(\ha)]-L(a^*)}{L(a^*)}$.

Next we show that this bound is tight. By symmetry, we assume the maximum is achieved at some $F\in(0,q)$. Consider a Bernoulli distribution which takes the value 0 with probability $F$. Then we know that $a^*=1$, and the CDF of this distribution is
\begin{align*}
    F(z)=\begin{cases}
        0, &z<0\\
        F, &z\in[0,1)\\
        1, &z\ge1.
    \end{cases}
\end{align*}
So for this Bernoulli distribution, we derive from \eqref{eqn:expDiff} and \eqref{eqn:Loss} that
\begin{align*}
    \bE[L(\ha)]-L(a^*) 
    &=\int_0^1 (q-F)\Pr[\hF(z)\ge q]dz
    =(q-F)\Pr\left[\frac1n\Bin(n,F)\ge q\right]\\
    L(a^*) 
    &=\int_0^1(1-q)F dz
    =(1-q)F.
\end{align*}
This implies
\begin{align*}
    \frac{\bE[L(\ha)]-L(a^*)}{L(a^*)}=\frac{q-F}{(1-q)F}\Pr\left[\frac1n\Bin(n,F)\ge q\right],
\end{align*}
which shows that \eqref{eqn:expMultGen} is tight and that the supremum in $\sup_{F:\mu(F)<\infty}\frac{\bE[L(\ha)]-L(a^*)}{L(a^*)}$ can be achieved by Bernoulli distributions.

\section{Multiplicative Lower Bound} \label{sec:additionalLB}

We now lower-bound the multiplicative regret of any data-driven algorithm, showing it to be $\Omega(n^{-\frac{\beta+2}{2\beta+2}})$ with probability at least 1/3, which implies also a lower bound of $\Omega(n^{-\frac{\beta+2}{2\beta+2}})$ on the expected multiplicative regret.

\begin{theorem} \label{thm:lowMult}
Fix $q\in(0,1)$ and $\beta\in[0,\infty],\gamma\in(0,\infty),\zeta\in(0,(\min\{q,1-q\})^{\frac{1}{\beta+1}}/\gamma],\tau\in[0,\min\{q,1-q\}-(\gamma\zeta)^{\beta+1}]$.
Any learning algorithm based on $n$ samples makes a decision with multiplicative regret at least
\begin{align*}
    \frac{1}{16\gamma\zeta\tau+8q(1-q)}\left(\frac{(q-\tau)(1-q-\tau)}{3\sqrt{n}}\right)^{\frac{\beta+2}{\beta+1}}
    =\Omega\left(n^{-\frac{\beta+2}{2\beta+2}}\right)
\end{align*}
with probability at least 1/3 on some $(\beta,\gamma,\zeta)$-clustered distribution satisfying $F(a^*-\zeta)\ge\tau$ and $F(a^*+\zeta)\le1-\tau$.
Therefore, the expected multiplicative regret is at least
\begin{align*}
    \frac{1}{48\gamma\zeta\tau+24q(1-q)}\left(\frac{(q-\tau)(1-q-\tau)}{3\sqrt{n}}\right)^{\frac{\beta+2}{\beta+1}}
    =\Omega\left(n^{-\frac{\beta+2}{2\beta+2}}\right).
\end{align*}
\end{theorem}

\begin{proof}[Proof of \Cref{thm:lowMult}]
Let $C=\frac{(q-\tau)(1-q-\tau)}{3}$, $H=\frac1\gamma\left(\frac{C}{\sqrt{n}}\right)^{\frac{1}{\beta+1}}$.
Consider two distributions $P$ and $Q$, whose respective CDF functions $F_P$ and $F_Q$ are:
\begin{align*}
    F_P(z)&=
    \begin{cases}
        0, &z\in(-\infty, 0)\\
        \tau, &z\in[0,2\zeta)\\
        q+\frac{C(z-2\zeta)}{H\sqrt{n}}, &z\in[2\zeta,2\zeta+H)\\
        1-\tau, &z\in[2\zeta+H,4\zeta+H)\\
        1,&z\in[4\zeta+H,\infty);
    \end{cases}\\
    F_Q(z)&=
    \begin{cases}
        0, &z\in(-\infty, 0)\\
        \tau, &z\in[0,2\zeta)\\
        q+\frac{C(z-2\zeta-H)}{H\sqrt{n}}, &z\in[2\zeta,2\zeta+H)\\
        1-\tau, &z\in[2\zeta+H,4\zeta+H)\\
        1,&z\in[4\zeta+H,\infty).
    \end{cases}
\end{align*}
We let $L_P(a)$ and $L_Q(a)$ denote the respective expected loss functions under true distributions $P$ and $Q$, and from the CDF functions, it can be observed that the respective optimal decisions are $a_P^*=2\zeta$ and $a_Q^*=2\zeta+H$. We now show that any learning algorithm with $n$ samples will incur a multiplicative regret at least $\frac{1}{16\gamma\zeta\tau+8q(1-q)}\left(\frac{(q-\tau)(1-q-\tau)}{3\sqrt{n}}\right)^{\frac{\beta+2}{\beta+1}}$ with probability at least 1/3, on distribution $P$ or $Q$.

\paragraph{Establishing validity of distributions.}
First we show that both $P$ and $Q$ are $(\beta,\gamma,\zeta)$-clustered distributions. 
For distribution $P$, which has $a^*=2\zeta$, it suffices to verify \eqref{eqn:clustered} on $z\in[\zeta,3\zeta]$. We split the interval into segments $[\zeta,2\zeta)$ and $[2\zeta,3\zeta]$.
When $z$ is in the first segment, $F_P(z)=\tau$, so
\begin{align*}
    |F_P(z)-q|=q-\tau
    \ge(\gamma\zeta)^{\beta+1}
    \ge(\gamma |z-2\zeta|)^{\beta+1},
\end{align*}
where the first inequality follows from $\tau\in(0,\min\{q,1-q\}-(\gamma\zeta)^{\beta+1}]$, verifying \eqref{eqn:clustered}.
When $z$ is in the second segment, for the case where $\zeta<H$, it suffices to verify \eqref{eqn:clustered} on $z\in[2\zeta,2\zeta+H)$. We have
\begin{align*}
    |F_P(z)-q|
    =\frac{C(z-2\zeta)}{H\sqrt{n}}
    =\gamma^{\beta+1}H^\beta(z-2\zeta)
    >(\gamma|z-2\zeta|)^{\beta+1},
\end{align*}
where the second equality applies $\frac{C}{\sqrt{n}}=(\gamma H)^{\beta+1}$ and the inequality follows from $H>z-2\zeta$, verifying \eqref{eqn:clustered}.
On the other hand, for the case where $\zeta\ge H$, it remains to verify \eqref{eqn:clustered} on $z\in[2\zeta+H,3\zeta]$. We have $F_P(z)=1-\tau$, so
\begin{align*}
    |F_P(z)-q|=1-\tau-q
    \ge(\gamma\zeta)^{\beta+1}
    \ge(\gamma|z-2\zeta|)^{\beta+1},
\end{align*}
where the first inequality follows from $\tau\in(0,\min\{q,1-q\}-(\gamma\zeta)^{\beta+1}]$, again verifying~\eqref{eqn:clustered}. 
Therefore $P$ is a $(\beta,\gamma,\zeta)$-clustered distribution. 
It can be verified by symmetry that $Q$ is also a $(\beta,\gamma,\zeta)$-clustered distribution.

In addition, because $C=(q-\tau)(1-q-\tau)/3$, we obtain using the fact $\tau<q<1-\tau$ that
\begin{align*}
    \lim_{z\to (2\zeta+H)^-}F_P(z)&=q+\frac{C}{\sqrt{n}}=q+\frac{(q-\tau)(1-q-\tau)}{3\sqrt{n}}<q+\frac{1-q-\tau}{3}<1-\tau\le1\\
    F_Q(2\zeta)&=q-\frac{C}{\sqrt{n}}=q-\frac{(q-\tau)(1-q-\tau)}{3\sqrt{n}}>q-\frac{q-\tau}{3}>\tau\ge0
\end{align*}
which ensures the monotonicity of the CDF's for $P$ and $Q$.

Finally, we have
\begin{gather*}
    F_P(\zeta)=\tau\le F_Q(\zeta+H)\\
    F_P(3\zeta)\le1-\tau=F_Q(3\zeta+H),
\end{gather*}
which ensures  $F(a^*-\zeta)\ge\tau$ and $F(a^*+\zeta)\le1-\tau$ for both $P$ and $Q$.

\paragraph{Upper-bounding the probabilistic distance between $P$ and $Q$.} 
We analyze the squared Hellinger distance between distributions $P$ and $Q$.  Because $P$ and $Q$ only differ in terms of their point masses on $2\zeta$ and $2\zeta+H$, standard formulas for Hellinger distance yield
\begin{align*}
    &\quad \mathrm{H}^2(P,Q)\\
    &=\frac12\left(\left(\sqrt{q-\tau}-\sqrt{q-\tau-\frac{C}{\sqrt{n}}}\right)^2+\left(\sqrt{1-q-\tau-\frac{C}{\sqrt{n}}}-\sqrt{1-q-\tau}\right)^2\right)\\
    &=\frac12\left(-\frac{2C}{\sqrt{n}}+2(q-\tau)-2(q-\tau)\sqrt{1-\frac{C}{(q-\tau)\sqrt{n}}}+2(1-q-\tau)-2(1-q-\tau)\sqrt{1-\frac{C}{(1-q-\tau)\sqrt{n}}}\right)\\
    &\le\frac12\left(-\frac{2C}{\sqrt{n}}+2(q-\tau)\left(\frac{C}{2(q-\tau)\sqrt{n}}+\frac{C^2}{2(q-\tau)^2n}\right)+2(1-q-\tau)\left(\frac{C}{2(1-q-\tau)\sqrt{n}}+\frac{C^2}{2(1-q-\tau)^2n}\right)\right)\\
    &=\frac12\left(\frac{C^2}{(q-\tau)n}+\frac{C^2}{(1-q-\tau)n}\right),
\end{align*}
where the inequality follows from $1-\sqrt{1-x}\le\frac x2+\frac{x^2}{2}$, $\forall x\in[0,1]$.
We note that we are substituting in $x=\frac{C}{(q-\tau)\sqrt{n}}$ and $x=\frac{C}{(1-q-\tau)\sqrt{n}}$, which are at most $1$ because $C=(q-\tau)(1-q-\tau)/3$.

Similar with the analysis in the proof of \Cref{thm:lowAdd}, we have
\begin{align*}
    \mathrm{TV}(P^n,Q^n)
    &\le\sqrt{2\mathrm{H}^2(P^n,Q^n)}\\
    &\le\sqrt{2n\mathrm{H}^2(P,Q)}\\
    &\le C\sqrt{\frac{1}{q-\tau}+\frac{1}{1-q-\tau}}\\
    &=\frac{\sqrt{(q-\tau)(1-q-\tau)(1-2\tau)}}{3}\\
    &\le\frac13.
\end{align*}

\paragraph{Lower-bounding the expected regret of any algorithm.}
Fix any (randomized) algorithm for data-driven Newsvendor, and consider the sample paths of its execution on the distributions $P$ and $Q$ side-by-side.  The sample paths can be coupled so that the algorithm makes the same decision for $P$ and $Q$ on an event $E$ of measure $1-\mathrm{TV}(P^n,Q^n)\ge 2/3$, by definition of total variation distance.
Letting $A_P,A_Q$ be the random variables for the decisions of the algorithm on distributions $P,Q$ respectively,
we have that $A_P$ and $A_Q$ are identically distributed conditional on $E$.
Therefore, either 
$\Pr[A_P\ge2\zeta+\frac H2|E]=\Pr[A_Q\ge2\zeta+\frac H2|E]\ge1/2$
or 
$\Pr[A_P\le2\zeta+\frac H2|E]=\Pr[A_Q\le2\zeta+\frac H2|E]\ge 1/2$.

First consider the case where $\Pr[A_P\ge2\zeta+\frac H2|E]=\Pr[A_Q\ge2\zeta+\frac H2|E]\ge1/2$. By \eqref{eqn:Loss}, we have
\begin{align*}
    L_P(a_P^*)
    &=\int_0^{2\zeta} (1-q)\tau dz+\int_{2\zeta}^{2\zeta+H} q\left(1-q-\frac{C(z-2\zeta)}{H\sqrt{n}}\right)dz+\int_{2\zeta+H}^{4\zeta+H} q\tau dz\\
    &=2\zeta\tau+q(1-q)H-\frac{qCH}{2\sqrt{n}}\\
    &<2\zeta\tau+\frac{q(1-q)}{\gamma},
\end{align*}
where the inequality follows from $H=\frac1\gamma\left(\frac{C}{\sqrt{n}}\right)^{\frac{1}{\beta+1}}\le\frac1\gamma$ and $\frac{qCH}{2\sqrt{n}}>0$.
Note that if $A_P\ge2\zeta+\frac H2$, then we can derive from~\eqref{eqn:whpDiff} that under the true distribution $P$,
\begin{align*}
L_P(A_P) - L_P(a^*_P)
&=\int_{2\zeta}^{A_P}(F_P(z)-q)dz\\
&\ge\int_{2\zeta}^{2\zeta+\frac H2}(F_P(z)-q)dz\\
&=\int_{2\zeta}^{2\zeta+\frac H2} \frac{C(z-2\zeta)}{H\sqrt{n}}dz\\
&=\frac{CH}{8\sqrt{n}}\\
&=\frac{1}{8\gamma}\left(\frac{(q-\tau)(1-q-\tau)}{3\sqrt{n}}\right)^{\frac{\beta+2}{\beta+1}}.
\end{align*}
Therefore, we would have
\begin{align*}
\frac{L_P(A_P)-L_P(a^*_P)}{L_P(a^*_P)}
>\frac{1}{16\gamma\zeta\tau+8q(1-q)}\left(\frac{(q-\tau)(1-q-\tau)}{3\sqrt{n}}\right)^{\frac{\beta+2}{\beta+1}}
\end{align*}
with probability at least 
$\Pr\left[A_P\ge2\zeta+\frac H2\right]
\ge\Pr\left[A_P\ge2\zeta+\frac H2|E\right]\Pr[E]
\ge\frac12 \cdot\frac 23
=\frac13$.

Now consider the other case where $\Pr[A_P\le2\zeta+\frac H2|E]=\Pr[A_Q\le2\zeta+\frac H2|E]\ge 1/2$.
By \eqref{eqn:Loss}, we have
\begin{align*}
    L_Q(a_Q^*)
    &=\int_0^{2\zeta}(1-q)\tau dz+\int_{2\zeta}^{2\zeta+H}(1-q)\left(q+\frac{C(z-2\zeta-H)}{H\sqrt{n}}\right)dz+\int_{2\zeta+H}^{4\zeta+H} q\tau dz\\
    &=2\zeta\tau+q(1-q)H-\frac{(1-q)CH}{2\sqrt{n}}\\
    &<2\zeta\tau+\frac{q(1-q)}{\gamma}
\end{align*}
where the inequality follows from $H\le\frac1\gamma$ and $\frac{(1-q)CH}{2\sqrt{n}}>0$. 
If $A_Q\le2\zeta+\frac H2$, then we can derive from \eqref{eqn:whpDiff} that under the true distribution $Q$,
\begin{align*}
L_Q(A_Q) - L_Q(a^*_Q)
&=\int_{A_Q}^{2\zeta+H}(q-F_Q(z))dz\\
&\ge\int_{2\zeta+\frac H2}^{2\zeta+H}(q-F_Q(z))dz\\
&=\int_{2\zeta+\frac H2}^{2\zeta+H}\frac{C(2\zeta+H-z)}{H\sqrt{n}} dz\\
&=\frac{CH}{8\sqrt{n}}\\
&=\frac{1}{8\gamma}\left(\frac{(q-\tau)(1-q-\tau)}{3\sqrt{n}}\right)^{\frac{\beta+2}{\beta+1}}.
\end{align*}
The proof then finishes analogous to the first case.
\end{proof}

\end{document}